\newcommand{\cmark}{\ding{51}}
\newcommand{\xmark}{\ding{55}}
\title{Implicit Sparse Regularization: \\ The Impact of Depth and Early Stopping}
\author{%
  Jiangyuan Li\\
    \texttt{jiangyuanli@tamu.edu} \\
    Texas A\&M University\\
   \And
  Thanh V. Nguyen \\
  \texttt{thanhng.cs@gmail.com} \\
  AWS AI \\
   \AND
  Chinmay Hegde \\
  \texttt{chinmay.h@nyu.edu} \\
  New York University\\
   \And
  Raymond K. W. Wong \\
  \texttt{raywong@tamu.edu} \\
  Texas A\&M University\\
}
\newtheorem{theorem}{Theorem}
\newtheorem{definition}{Definition}
\newtheorem{proposition}{Proposition}
\newtheorem{lemma}{Lemma}
\newtheorem{corollary}{Corollary}
\newtheorem{remark}{Remark}
\newcommand{\norm}[1]{\left\lVert#1\right\rVert}
\renewcommand{\vec}[1]{\bm{\mathrm{#1}}}
\newcommand{\wstar}{\ensuremath{\vec{w}^{\star}}}
\newcommand{\id}[1]{\ensuremath{\vec{1}_{#1}}}
\renewcommand{\matrix}[1]{\ensuremath{\mathbf{#1}}}
\newcommand{\matrixid}{\ensuremath{\matrix{I}}}
\newcommand{\X}{\ensuremath{\matrix{X}}}
\newcommand{\Xt}{\ensuremath{\X^{\mathsf{T}}}}
\newcommand{\XtX}{\ensuremath{\X^{\mathsf{T}}\X}}
\newcommand{\wmax}{\ensuremath{w^{\star}_{\max}}}
\newcommand{\wmin}{\ensuremath{w^{\star}_{\min}}}
\newcommand{\R}{\mathbb{R}}
\newcommand{\xxi}{\vec{\xi}}
\newcommand{\revise}[1]{\textcolor{black}{#1}}
\begin{document}

\maketitle

\begin{abstract}
  In this paper, we study the implicit bias of gradient descent for sparse regression. We extend results on regression with quadratic parametrization, which amounts to depth-2 diagonal linear networks, to more general depth-$N$ networks, under more realistic settings of noise and correlated designs. We show that early stopping is crucial for gradient descent to converge to a sparse model, a phenomenon that we call \emph{implicit sparse regularization}. This result is in sharp contrast to known results for noiseless and uncorrelated-design cases. 
  We characterize the impact of depth and early stopping and show that for a general depth parameter $N$, gradient descent with early stopping achieves minimax optimal sparse recovery with sufficiently small initialization $w_0$ and step size $\eta$. In particular, we show that increasing depth enlarges the scale of working initialization and the early-stopping window so that this implicit sparse regularization effect is more likely to take place.
\end{abstract}

\section{Introduction}

\textbf{Motivation.} Central to recent research in learning theory is the insight that the choice of optimization algorithms plays an important role in model generalization \cite{zhang2016understanding, hardt2016train, li2020gradient}.
%requires understanding the role of optimization.
A widely adopted view is that (stochastic) gradient descent --- the most popular optimization algorithm in machine learning --- exhibits some implicit form of regularization. Indeed for example, in the classical under-determined least squares setting, gradient descent (with small step size) starting from the origin converges to the model with minimum Euclidean norm. Similar implicit biases are also observed in deep neural network training in which the networks typically have many more parameters than the sample size. There, gradient descent without explicit regularization finds solutions that not only interpolate the training data points but also generalize well on test sets \cite{hardt2016train,neyshabur2017geometry, soudry2018separable, belkin2019reconciling, muthukumar2020harmless}. 

This insight, combined with the empirical success stories of deep learning, has sparked significant interest among theoretical researchers to rigorously understand implicit regularization. The majority of theoretical results focus on well-understood problems such as regression with linear models \cite{saxe2013exact,  gunasekar2018implicitCNN, gunasekar2018biasoptgem, vaskevicius2019implicit, zhao2019implicit, gissin2019implicit} and matrix factorization \cite{gidel2019implicit, gunasekar2018implicit, li2018algorithmic, arora2019implicit}, and show that the parametrization (or architecture) of the model plays a crucial role. For the latter, Gunasekar et al.~\cite{gunasekar2018implicit} conjectured that gradient descent on factorized matrix representations converges to the solution with minimum nuclear norm. The conjecture was partially proved by Li et al.~\cite{li2018algorithmic} under the Restricted Isometry Property (RIP) and the absence of noise. %The key to finding low-rank solutions, despite the full-rank factorized parametrization, is the requirement of sufficiently small initialization and learning rate. 
Arora et al.~\cite{arora2019implicit} further show the same nuclear-norm implicit bias using depth-$N$ linear networks (i.e., the matrix variable is factorized into $N$ components).

Parallel work on nonlinear models and classification \cite{soudry2018separable, gunasekar2018implicitCNN} has shown that gradient descent biases the solution towards the max-margin/minimum $\ell_2$-norm solutions over separable data. %For linear convolutional networks, gradient descent biases to sparse solutions in the frequency domain. 
%Empirically, the solution learned by stochastic gradient descent is with increasing complexity \cite{nakkiran2019sgd}. 
The scale of initialization in gradient descent leads to two learning regimes (dubbed ``kernel'' and ``rich'') in linear networks \cite{woodworth2020kernelregimes}, shallow ReLU networks \cite{williams2019gradient} and deep linear classifiers \cite{moroshko2020implicit}.
\revise{
Li et al.~\cite{li2020towards} showed that depth-2 network requires an exponentially small initialization, whereas depth-$N$ network $(N\geq3)$ only requires a polynomial small initialization, to obtain low-rank solution in matrix factorisation. 
Woodworth et al.~\cite{woodworth2020kernelregimes} obtained a similar result for high dimensional sparse regression.
}

The trend in the large majority of the above works has been to capture implicit regularization of gradient descent using some type of norm with respect to the working parametrization \cite{woodworth2020kernelregimes, neu2018iterate, raskutti2014early, suggala2018connecting}. 
%\chin{this may not be true? the "rich" regime is not ell-2 regularization.}\jy{We want to point out previous work connecting ridge regression and GD. I added some references here.} 
{On the other hand, progress on understanding the \emph{trajectory} of gradient descent has been somewhat more modest.} \cite{vaskevicius2019implicit, zhao2019implicit} study the sparse regression problem using quadratic and Hadamard parametrization respectively and show that gradient descent with small initialization and careful \emph{early stopping} achieves minimax optimal rates for sparse recovery.
Unlike \cite{li2018algorithmic, woodworth2020kernelregimes} that study noiseless settings and require no early stopping, \cite{vaskevicius2019implicit, zhao2019implicit} mathematically characterize the role of early stopping and empirically show that it may be necessary to prevent gradient descent from over-fitting to the noise. These works suggest that the inductive bias endowed by gradient descent may be influenced not only by the choice of parametrization, \emph{but also algorithmic choices} such as initialization, learning rate, and the number of iterations. However, our understanding of such gradient dynamics is incomplete, \emph{particularly} in the context of deep architectures; see Table~\ref{tbl:comparison} for some comparisons. %\thanh{This paragraph and next are somewhat overlapped, can be shorten}

\textbf{Contributions.} Our focus in this paper is the implicit regularization of (standard) gradient descent for high dimensional sparse regression, namely \emph{implicit sparse regularization}. Let us assume a ground-truth sparse linear model and suppose we observe $n$ noisy samples $(\vec{x}_i, y_i)$, such that  $\vec{y} = \X\vec{w}^* + \vec{\xi}$; a more formal setup is given in Section \ref{sec:setup}. Using the samples, we consider gradient descent on a squared loss $\| \X \vec{w} - \vec{y}\|^2$ with no explicit sparsity regularization. Instead, we write the parameter vector $\vec{w}$ in the form 
$\vec{w} = \vec{u}^N - \vec{v}^N$ with $N \ge 2$. Now, the regression function $f(\vec{x}, \vec{u}, \vec{v}) = \langle \vec{x}, \vec{u}^N - \vec{v}^N \rangle$ can be viewed as a depth-$N$ \emph{diagonal linear network}~\cite{woodworth2020kernelregimes}. Minimizing the (now non-convex) loss over $\vec{u}$ and $\vec{v}$ with gradient descent is then analogous to training this depth-$N$ network. 

Our main contributions are the following. We characterize the impact of both the depth and early stopping for this non-convex optimization problem. Along the way, we also generalize the results of \cite{vaskevicius2019implicit} for $N > 2$. We show that under a general depth parameter $N$ and an incoherence assumption on the design matrix, gradient descent with early stopping achieves minimax optimal recovery with sufficiently small initialization $\vec{w}_0$ and step size $\eta$. The choice of step size is of order $O(1/N^2)$. Moreover, the upper bound of the initialization, as well as the early-stopping window, increase with $N$, suggesting that depth leads to \revise{a more accessible generalizable solution on gradient trajectories.}

\textbf{Techniques.} At a high level, our work continues the line of work on implicit bias initiated in \cite{vaskevicius2019implicit, zhao2019implicit, woodworth2020kernelregimes} and extends it to the deep setting. Table \ref{tbl:comparison} highlights key differences between our work and \cite{vaskevicius2019implicit, gissin2019implicit, woodworth2020kernelregimes}. Specifically, \revise{Woodworth et al. \cite{woodworth2020kernelregimes} study the \emph{interpolation} given by the gradient flow of the squared-error loss function.}
Vaskevicius et al. \cite{vaskevicius2019implicit} analyze the finite gradient descent and characterize the implicit sparse regularization \revise{on the \emph{recovery} of true parameters} with $N=2$. Lastly, Gissin et al. \cite{gissin2019implicit} discover the incremental learning dynamic of gradient flow for general $N$ but in an idealistic model setting where $\vec{u} \succeq 0, \vec{v} = 0$, $\vec{\xi} = 0$, uncorrelated design and with infinitely many samples.

\begin{table}[ht!]
\label{tbl:comparison}
\caption{Comparisons with closely related recent work. GF/GD: gradient flow/descent, respectively.}
\centering
\resizebox{\columnwidth}{!}{%
\begin{tabular}{ c|c|c|c|c|c|c } 
  & Design Matrix & In Noise & Depth & Early Stopping & GD vs. GF & \revise{Remark}\\ 
 \toprule
Vaskevicius et al. (2020) \cite{vaskevicius2019implicit} & RIP & \cmark & $N=2$ & \cmark & GD & recovery \\ 
Gissin et al. (2020) \cite{gissin2019implicit} & uncorrelated & \xmark & $N=2, N>2$ & \xmark & GF & interpolation\\ 
 Woodworth et al. (2020) \cite{woodworth2020kernelregimes} & \revise{\xmark} & \xmark & $N=2, N>2$ &\xmark & GF & interpolation\\
 This paper & $\mu$-coherence & \cmark & $N>2$ &\cmark & GD & recovery\\
 \bottomrule
\end{tabular}%
}
\end{table}
At first glance, one could attempt a straightforward extension of the proof techniques in \cite{vaskevicius2019implicit} to general settings of $N > 2$. However, this turns out to be very challenging. Consider even the simplified case where the true model $\wstar$ is non-negative, the design matrix is unitary (i.e., $n^{-1} \XtX = \matrixid$), and the noise is absent ($\vec{\xi} = 0$); this is the setting studied in~\cite{gissin2019implicit}. For each entry $w_i$ of $\vec{w}$, the $t^{\textrm{th}}$ iterate of gradient descent over the depth-$N$ reparametrized model is given by: 
\[
w_{i,t+1} = w_{i,t}
\left(1+w_{i,t}^{1-\frac2N}(w^\star_i - w_{i,t})
\right)^N,
\]
which is no longer a simple multiplicative update. 
As pointed out in \cite{gissin2019implicit} (see their Appendix C), the recurrence relation is not analytically solvable due to the presence of the (pesky) term $w_{i,t}^{1-\frac2N}$ when $N>2$.
Moreover, this extra term $w_{i,t}^{1-\frac2N}$ leads to widely divergent growth rates of weights with different magnitudes, which further complicates analytical bounds.
To resolve this and rigorously analyze the dynamics for $N>2$, we rely on a novel first order, continuous approximation to study growth rates without requiring additional assumptions on gradient flow, and carefully bound the approximation error due to finite step size; see Section~\ref{sec:proofingred}.

\section{Setup}
\label{sec:setup}

\textbf{Sparse regression/recovery.} Let $\wstar\in\R^p$ be a $p$-dimensional sparse vector  with $k$ non-zero entries. Assume that we observe $n$ data points $(\vec{x}_i, y_i)\in\R^p\times\R$ such that $y_i=\langle \vec{x}_i,\wstar\rangle + \xi_i$ for $i=1,\ldots,n$, where $\vec{\xi} = (\xi_1, \ldots, \xi_n)$ is the noise vector. We do not assume any particular scaling between the number of observations $n$ and the dimension $p$. Due to the sparsity of $\wstar$, however, we allow $n  \ll p$.

The linear model can be expressed in the matrix-vector form:
\begin{equation}
\vec{y} = \X\wstar + \vec{\xi},
\label{eqn:lm}
\end{equation}
with the $n\times p$ design matrix $\X=[\vec{x}_1^\top,\ldots,\vec{x}_n^\top]^\top$, where $\vec{x}_i$ denotes the $i^{\textrm{th}}$ \emph{row} of $\X$. 
We also denote $\X=[\X_1,\ldots,\X_p]$, where $\X_i$ denotes the $i^{\textrm{th}}$ \emph{column} of $\X$. 

The goal of sparse regression is to estimate the unknown, sparse vector $\wstar$ from the observations. Over the past two decades, this problem has been a topic of active research in statistics and signal processing \cite{tibshirani1996lasso}. A common approach to sparse regression is penalized least squares with sparsity-induced regularization such as $\ell_0$ or $\ell_1$ penalties/constraints, leading to several well-known estimators \cite{tibshirani1996lasso, chen2001bp, candes2007dantzig} and algorithms \cite{bredies2008ista, agarwal2012fast}. Multiple estimators enjoy optimal statistical and algorithmic recovery guarantees under some conditions of  the design matrix $\X$ (e.g., RIP \cite{candes2005decoding}) and the noise $\vec{\xi}$.

We deviate from the standard penalized least squares formulation and 
instead learn $\vec{w}^*$ via a polynomial parametrization:
\[
\vec{w} = \vec{u}^N-\vec{v}^N, \quad \vec{u},\vec{v}\in\R^p,
\]
where $N\ge2$ and $\vec{z}^N=[z_1^N,\dots, z_p^N]^\top$ for any $\vec{z}=[z_1,\dots,z_N]^\top\in\R^p$.
The regression function $f(\vec{x}, \vec{u}, \vec{v}) = \langle \vec{x}, \vec{u}^N - \vec{v}^N \rangle$ induced by such a parametrization is equivalent to a $N$-layer diagonal linear network \cite{woodworth2020kernelregimes} with $2p$ hidden neurons and the diagonal weight matrix shared across all layers.

 Given the data $\{\X, \vec{y}\}$ observed  in \eqref{eqn:lm}, we analyze gradient descent with respect to the new parameters $\vec{u}$ and  $\vec{v}$ over the mean squared error loss without explicit regularization:
\[
\mathcal{L}(\vec{u},\vec{v})
=
\frac1n
\norm{\X(\vec{u}^N - \vec{v}^N)-\vec{y}}_2^2, \quad \vec{u},\vec{v}\in\R^p.
\]

Even though the loss function yields the same value for the two parametrizations, $\mathcal{L}(\vec{u},\vec{v})$ is non-convex in $\vec{u}$ and $\vec{v}$. Unlike several recent studies in implicit regularization for matrix factorization and regression \cite{li2018algorithmic, woodworth2020kernelregimes, gissin2019implicit}, we consider the noisy setting, which is more realistic and leads to more insights into the bias induced during the optimization.
Because of noise, the loss evaluated at the ground truth (i.e., any $\vec{u}, \vec{v}$ such that $\wstar=\vec{u}^N-\vec{v}^N$) is not necessarily zero or even minimal.

\textbf{Gradient descent.} The standard gradient descent update over $\mathcal{L}(\vec{u},\vec{v})$ reads as:
\begin{align}
    &\vec{u}_0=\vec{v}_0=\alpha\id{}, \nonumber \\ 
    &(\vec{u}_{t+1},\vec{v}_{t+1})
    = 
    (\vec{u}_t, \vec{v}_t)
    - 
    \eta \frac{\partial \mathcal{L}(\vec{u}_t,\vec{v}_t)}{\partial(\vec{u}_t, \vec{v}_t)}, \quad t = 0, 1, \ldots.
    \label{eq:gd}
\end{align}
Here, $\eta>0$ is the step size and $\alpha>0$ is the initialization of $\vec{u}, \vec{v}$. 
In general, we analyze the algorithm presented in \eqref{eq:gd}, and at each step $t$, we can estimate the signal of interest by simply calculating  $\vec{w}_t = \vec{u}_t^N-\vec{v}_t^N$.
\revise{
We consider constant initialization for simplicity sake. Our results apply for random initialization concentrating on a small positive region with a probabilistic statement. 
}

Vaskevicius et al. \cite{vaskevicius2019implicit} establish the implicit sparse regularization of gradient descent for $N=2$ and show minimax optimal recovery, provided sufficiently small $\alpha$ and early stopping. Our work aims to generalize that result to $N > 2$ and characterize the role of $N$ in convergence. % To that end, we set up some notations and preliminaries below:

\textbf{Notation.} We define $S=\{i\in\{1,\ldots,p\}:w^\star_i\neq0\}$ and $S^c =\{1,\ldots,p\}\backslash S$. The largest and smallest absolute value on the support is denoted as $\wmax = \max_{i\in S} |w^\star_i|$ and $\wmin = \min_{i\in S} |w^\star_i|$. We use $\id{}$ to denote the vector of all ones and $\id{S}$ denotes the vector whose elements on $S$ are all one and 0 otherwise.
Also, $\odot$ denotes coordinate-wise multiplication.
We denote $\vec{s}_t =\id{S}\odot \vec{w}_t$ and $\vec{e}_t=\id{S^c} \odot \vec{w}_t$ meaning the signal part and error part at each time step $t$. 
We use $\wedge$ and $\vee$ to denote the pointwise maximum and minimum.
The coordinate-wise inequalities are denoted as $\succcurlyeq$.
We denote inequalities up to multiplicative absolute constants by $\lesssim$, which means that they do not depend on any parameters of the problem.

\begin{definition}
Let $\X\in\mathbb{R}^{n\times p}$ be a matrix with $\ell_2$-normalized columns $\X_1,\ldots,\X_p$, i.e., $\norm{\X_i}_2=1$ for all $i$. The coherence $\mu=\mu(\X)$ of the matrix $\X$ is defined as 
\[
\mu \coloneqq \max_{1\leq i \neq j \leq p}|\langle \X_i,\X_j \rangle|.
\]
The matrix $\X$ is said to be satisfying $\mu$-incoherence.
\end{definition}

The coherence is a measure for the suitability of the measurement matrix in compressive sensing \cite{foucart2013invitation}. In general, the smaller the coherence, the better the recovery algorithms perform. 
\revise{There are multiple ways to construct a sensing matrix with low-incoherence. One of them is based on the fact that sub-Gaussian matrices satisfy low-incoherence property with high probability~\cite{carin2011coherence, donoho2005stable}.}
In contrast to the coherence, the Restricted Isometry Property (RIP) is a powerful performance measure for guaranteeing sparse recovery and has been widely used in many contexts. However, verifying the RIP for deterministically constructed design matrices is NP-hard. On the other hand, coherence is a computationally tractable measure and its use in sparse regression is by now classical \cite{donoho2005stable, candes2011compressed}. Therefore, in contrast with previous results~\cite{vaskevicius2019implicit} (which assumes RIP), the assumptions made in our main theorems are verifiable in polynomial time.

\section{Main Results}
\label{sec:main-res}
We now introduce several quantities that are relevant for our main results. First, the condition number $r\coloneqq\wmax/\wmin$ plays an important role when we work on the incoherence property of the design matrix. Next, we require an upper bound on the initialization $\alpha$, which depends on the following terms: 
\begin{gather*}
\Phi(\wmax,\wmin,\epsilon,N) \coloneqq
\left(\frac18\right)^{2/(N-2)} 
\wedge
\left(\frac{(\wmax)^{(N-2)/N}}{\log \frac{\wmax}{\epsilon}} \right)^{2/(N-2)}
\wedge 
\left(\frac{(\wmin)^{(N-2)/N}}{ \log \frac{\wmin}{\epsilon}}\right)^{4/(N-2)},
\\
\Psi(\wmin,N) \coloneqq
(2-2^\frac{N-2}{N})^\frac{1}{N-2}(\wmin)^{\frac1N}
\wedge 
2^{\frac3N}(2^\frac{1}{N}-1)^\frac{1}{N-2} (\wmin)^{\frac1N}.
\end{gather*}
Finally, define
\[
\zeta 
\coloneqq 
\frac15 \wmin 
\vee 
\frac{200}{n}\norm{\Xt\vec{\xi}}_\infty 
\vee 
200\epsilon.
\]
We are now ready to state the main theorem:

\begin{theorem}
Suppose that $k\geq 1$ and $\X/\sqrt{n}$ satisfies $\mu$-incoherence with $\mu\lesssim 1/k r$. Take any precision $\epsilon >0$, and let the initialization be such that
\begin{equation}
0<\alpha \leq \left(\frac{\epsilon}{p+1}\right)^{4/N}
\wedge \Phi(\wmax,\wmin,\epsilon,N) 
\wedge \Psi(\wmin,N).
\label{eqn:init_upper}
\end{equation}
For any iteration $t$ that satisfies
\begin{equation}
T_l(\wstar,\alpha,N,\eta,\zeta,\epsilon)
\le t \le
T_u(\wstar,\alpha,N,\eta,\zeta,\epsilon),
\label{eqn:iteration_bound}
\end{equation}
where $T_l(\cdot)$ and $T_u(\cdot)$ are given in \eqref{eq:Tl-and-Tu} of the Appendix,
the gradient descent algorithm \eqref{eq:gd} with step size $\eta\leq \frac{\alpha^N}{8N^2(\wmax)^{(3N-2)/N}}$ yields the iterate $\vec{w}_t$ with the following property:
\begin{equation}
|w_{t,i}-w_i^\star|\lesssim 
\begin{cases}
    \norm{\frac1n \Xt\vec{\xi}}_\infty \vee \epsilon \quad &\text{if }i\in S \text{ and }\wmin\lesssim\norm{\frac1n \Xt\vec{\xi}}_\infty \vee \epsilon, \\
    \left|\frac1n (\Xt\vec{\xi})_i\right|
    \vee
    k\mu\norm{\frac1n \Xt\vec{\xi}\odot \id{S}}_\infty 
    \vee
    \epsilon
    \quad &\text{if }i\in S \text{ and }\wmin \gtrsim\norm{\frac1n \Xt\vec{\xi}}_\infty \vee \epsilon, \\
    
    \alpha^{N/4}\quad &\text{if }i\notin S.
\end{cases}
\label{eqn:main_guarantee}
\end{equation}
In the special case $\wstar= \vec{0}$, %we set $\wmin=0$, suppose that 
if $\alpha \leq \left(\frac{\epsilon}{p+1}\right)^{4/N}$, $\eta\leq \frac{1}{N(N-1)\zeta\alpha^{(N-2)/2}}$ and $t\le T_u(\wstar,\alpha,N,\eta,\zeta,\epsilon)$, then we have $|w_{t,i}-w_i^\star|\leq \alpha^{N/4},\forall i$.
\label{thm:general}
\end{theorem}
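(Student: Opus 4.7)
The plan is to track the gradient descent iterates $\vec{u}_t, \vec{v}_t$ coordinate-wise, using the decomposition $\vec{w}_t = \vec{s}_t + \vec{e}_t$ into signal ($i\in S$) and error ($i\in S^c$) parts, and to show (i) that each signal coordinate aligns with $w^\star_i$ within the window $[T_l, T_u]$ up to the stated noise floor and (ii) that every error coordinate stays bounded by $\alpha^{N/4}$ throughout the same window. A direct calculation gives the coordinate-wise update
\begin{align*}
u_{t+1,i} &= u_{t,i}\bigl(1 - 2N\eta\, u_{t,i}^{N-2}\, b_{t,i}\bigr), \\
v_{t+1,i} &= v_{t,i}\bigl(1 + 2N\eta\, v_{t,i}^{N-2}\, b_{t,i}\bigr),
\end{align*}
with $b_{t,i} = \tfrac{1}{n}[\Xt(\X\vec{w}_t - \vec{y})]_i = (w_{t,i} - w^\star_i) + \sum_{j\neq i}\tfrac{1}{n}\langle\X_i,\X_j\rangle(w_{t,j} - w^\star_j) - \tfrac{1}{n}[\Xt\vec{\xi}]_i$. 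The incoherence hypothesis $\mu \lesssim 1/(kr)$ lets me control the cross term by $k\mu\,\norm{\vec{w}_t-\wstar}_\infty$, so throughout the analysis I can replace $b_{t,i}$ by the effective driving force $(w_{t,i}-w^\star_i)$ plus a benign perturbation.

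For a signal coordinate, the proof then mirrors the three-phase template of Vaskevicius et al.~\cite{vaskevicius2019implicit} but adapted to $N>2$. In the escape phase $w_{t,i}\ll w^\star_i$, so (using $w_{t,i}\approx u_{t,i}^N$ since $v_{t,i}$ stays near $\alpha$) the update collapses to $w_{t+1,i}\approx w_{t,i}\bigl(1 + 2N\eta\,w_{t,i}^{(N-2)/N}(w^\star_i - w_{t,i})\bigr)^N$, which is no longer purely multiplicative: this is precisely the obstacle flagged in the introduction. I handle it by a first-order continuous comparison to the ODE $\dot w = N^2\eta\,w^{2-2/N}(w^\star_i - w)$, which is separable and explicitly integrable, and produces closed-form hitting times from which both $T_l$ and $T_u$ are read off. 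The step-size bound $\eta\leq \alpha^N/(8N^2 \wmax^{(3N-2)/N})$ keeps each multiplicative factor within $1+O(N^{-1})$, and a short induction bounds the discrete-to-continuous gap uniformly over the window. Once $w_{t,i}$ reaches $\Theta(w^\star_i)$, the sign of the driving force flips, and $w_{t,i}$ enters a stable band around $w^\star_i$ of width governed by $\norm{\tfrac{1}{n}\Xt\vec{\xi}}_\infty$, the incoherence residual $k\mu\norm{\tfrac{1}{n}\Xt\vec{\xi}\odot\id{S}}_\infty$, and $\epsilon$, yielding the first two cases of \eqref{eqn:main_guarantee}. The initialization condition $\alpha \leq \Phi \wedge \Psi$ is calibrated precisely so that the escape-phase approximation is accurate and the window $[T_l, T_u]$ is non-empty.

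For an error coordinate, $w^\star_i=0$ and $b_{t,i}$ is driven solely by cross terms and noise, both of size at most $\zeta$ after using incoherence and the signal bound. The key step is a coupled induction carrying the claim $\norm{\vec{e}_t}_\infty\leq\alpha^{N/4}$ alongside the signal analysis: the per-step multiplicative factor $|1 - 2N\eta\,u_{t,i}^{N-2} b_{t,i}|$ is at most $1 + O(N\eta\,\zeta\,u_{t,i}^{N-2})$, and aggregating over the at most $T_u$ iterations using $\alpha \leq (\epsilon/(p+1))^{4/N}$ shows $u_{t,i}^N$ never exceeds $\alpha^{N/4}$. The main obstacle is closing this induction simultaneously with the signal dynamics, since $b_{t,i}$ depends globally on $\vec{w}_t$; the closure rests on the tight coupling between $\mu\lesssim 1/(kr)$, the $O(1/N^2)$ step-size scaling, and the continuous-time estimates of the signal growth. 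The special case $\wstar=\vec{0}$ is then immediate: every coordinate is an error coordinate, the signal analysis is vacuous, and the looser step size $\eta\leq 1/(N(N-1)\zeta\alpha^{(N-2)/2})$ suffices to keep $|w_{t,i}|\leq\alpha^{N/4}$ for all $t\leq T_u$.
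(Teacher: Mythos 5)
Your proposal follows essentially the same route as the paper: the signal/error decomposition with perturbations $\vec{b}_t,\vec{p}_t$ controlled via incoherence, a first-order continuous (integral/ODE) comparison to extract hitting times $T_l$ and survival times $T_u$ for the otherwise unsolvable $N>2$ recurrence, a two-phase refinement of the signal error down to the noise floor, suppression of the wrong-signed component ($v_{t,i}$ near $\alpha$ when $w^\star_i>0$, which is exactly what the extra constraint $\Psi(\wmin,N)$ buys), and a coupled induction keeping $\norm{\vec{e}_t}_\infty\leq\alpha^{N/4}$. This matches the paper's proof via Theorem \ref{thm:non-negative}, Propositions \ref{prop: first stage}--\ref{prop: second stage}, and the negative-target lemmas, so no further comparison is needed.
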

Theorem \ref{thm:general} states the convergence of the gradient descent algorithm \eqref{eq:gd} in $\ell_\infty$-norm.
\revise{
The exact formula of $T_l(\cdot)$ and $T_u(\cdot)$ is omitted here due to the space limitation.
We ensure that $T_u(\cdot)>T_l(\cdot)$ so that there indeed exists some epochs to early stop at.
}
The error bound on the signal is invariant to the choice of $N\ge 2$, and the overall bound generalizes that of \cite{vaskevicius2019implicit} for $N = 2$. We also establish the convergence result in $\ell_2$-norm in the following corollary:

\begin{corollary}
Suppose the noise vector $\vec{\xi}$ has independent $\sigma^2$-sub-Gaussian entries and $\epsilon=2\sqrt{\frac{\sigma^2\log(2p)}{n}}$. Under the assumptions of Theorem \ref{thm:general}, the gradient descent algorithm \eqref{eq:gd} would produce iterate $\vec{w}_t$ satisfying $\norm{\vec{w}_t - \wstar}_2^2\lesssim (k\sigma^2 \log p)/n$ with probability at least $1-1/(8p^3)$. 
\label{cor:l2-error}
\end{corollary}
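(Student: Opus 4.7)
The plan is to apply Theorem~\ref{thm:general} on a high-probability event and then convert the coordinate-wise bounds in \eqref{eqn:main_guarantee} into an $\ell_2$ bound by summing over coordinates. The argument splits into three pieces: (i) control $\norm{\tfrac{1}{n}\Xt\vec{\xi}}_\infty$ by sub-Gaussian concentration, (ii) aggregate the per-coordinate errors on $S$, and (iii) check that the off-support contribution is negligible.

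For the concentration step, since the columns of $\X$ are $\ell_2$-normalized to have norm $\sqrt{n}$ and $\vec{\xi}$ has independent $\sigma^2$-sub-Gaussian entries, each coordinate $\tfrac{1}{n}(\Xt\vec{\xi})_i = \tfrac{1}{n}\langle \X_i,\vec{\xi}\rangle$ is $(\sigma^2/n)$-sub-Gaussian. A standard tail bound combined with a union bound over the $p$ coordinates yields, for an appropriate absolute constant,
\[
\Pr\!\left(\norm{\tfrac{1}{n}\Xt\vec{\xi}}_\infty \gtrsim \sqrt{\sigma^2 \log(2p)/n}\right) \leq \tfrac{1}{8p^3},
\]
and we can pick the constant so that the threshold matches the chosen $\epsilon = 2\sqrt{\sigma^2\log(2p)/n}$ up to absolute constants. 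On the complement of this failure event, $\norm{\tfrac{1}{n}\Xt\vec{\xi}}_\infty \vee \epsilon \lesssim \sqrt{\sigma^2 \log p /n}$.

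Next, condition on this event and invoke Theorem~\ref{thm:general}. For every $i\in S$, whichever case of \eqref{eqn:main_guarantee} applies, the per-coordinate bound is at most a constant multiple of $\norm{\tfrac{1}{n}\Xt\vec{\xi}}_\infty \vee \epsilon \vee k\mu\,\norm{\tfrac{1}{n}\Xt\vec{\xi}\odot\id{S}}_\infty$. The incoherence assumption $\mu\lesssim 1/(kr)$ gives $k\mu \lesssim 1/r \leq 1$, so the cross-term is dominated by $\norm{\tfrac{1}{n}\Xt\vec{\xi}}_\infty$. Summing the resulting squared errors over the $k$ support indices yields
\[
\sum_{i\in S}(w_{t,i}-w_i^\star)^2 \lesssim k\sigma^2 \log p / n.
\]
For $i\notin S$, Theorem~\ref{thm:general} gives $|w_{t,i}|\lesssim \alpha^{N/4}$. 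The initialization constraint $\alpha \leq (\epsilon/(p+1))^{4/N}$ from \eqref{eqn:init_upper} implies $\alpha^{N/4} \leq \epsilon/(p+1)$, so $\sum_{i\notin S} w_{t,i}^2 \leq (p-k)\epsilon^2/(p+1)^2 \leq \epsilon^2 \lesssim \sigma^2 \log p /n$, which is dominated by the on-support contribution. Adding the two pieces gives the desired $\ell_2$ rate.

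The main (and only) delicate point is matching absolute constants in the sub-Gaussian tail and union bounds so that the failure probability is at most $1/(8p^3)$ while keeping $\epsilon$ of the precise form stated in the corollary. Everything after that is essentially bookkeeping on top of the per-coordinate guarantees of Theorem~\ref{thm:general}.
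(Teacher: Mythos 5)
Your proposal is correct and follows essentially the same route as the paper's proof: bound $\norm{\tfrac1n\Xt\vec{\xi}}_\infty$ via the sub-Gaussian maximal inequality (Lemma~\ref{lemma:bounding-max-noise}) on an event of probability at least $1-1/(8p^3)$, then sum the squared per-coordinate errors from \eqref{eqn:main_guarantee}, getting $k\epsilon^2$ on the support and $(p-k)\alpha^{N/2}\le (p-k)\epsilon^2/(p+1)^2$ off the support via the initialization constraint. The only additions you make beyond the paper's (rather terse) argument are the explicit observations that $k\mu\lesssim 1/r\le 1$ absorbs the cross-term and that the off-support piece is dominated, both of which are correct bookkeeping.
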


\revise{
Note that the error bound we obtain is minimax-optimal, which is the same as \cite{vaskevicius2019implicit} in the $N=2$ case. However, with some calculation, the sample complexity we obtain here is $n\gtrsim k^2 r^2$, while the sample complexity in \cite{vaskevicius2019implicit} is $n\gtrsim k^2\log^2r \log p/k$. Although neither our work nor \cite{vaskevicius2019implicit} achieved the optimal sample complexity $k\log p/k$, the goal of this work is to understand how the depth parameter affects implicit sparse regularization.}

Let us now discuss the implications of Theorem \ref{thm:general} and the role of initialization and early stopping:

\textbf{(a) Requirement on initialization.} 
To roughly understand the role of initialization and the effect of $N$, we look at the non-negative case where $\wstar \succcurlyeq 0$ and $\vec{w}= \vec{u}^N$. This simplifies our discussion while still  capturing the essential insight of the general setting. At each step, the ``update'' on $\vec{w}$ can be translated from the corresponding gradient update of $\vec{u} = \vec{w}^{1/N}$ as
\begin{equation}
\begin{aligned}
  \vec{w}_0
  &= \alpha^N \id{}, \\
 \vec{w}_{t+1} 
 &= \vec{w}_t\odot\left(\id{}-\frac{2N\eta}{n}\biggl(
 \Xt\X (\vec{w}_t - \vec{w}^*) -  \Xt\vec{\xi}
 \biggr)\odot \vec{w}_t^{(N-2)/N} \right)^N. 
\end{aligned}
\label{eq:non-negative-updates}
\end{equation}

In order to guarantee the convergence, we require the initialization $\alpha$ to be sufficiently small so the error outside the support can be controlled. On the other hand, too small initialization slows down the convergence of the signal. Interestingly, the choice of $N$ affects the allowable initialization $\alpha$ that results in guarantees on the entries inside and outside the support.

Specifically, the role of $N$ is played by the term $\vec{w}_t^{(N-2)/N}$ in \eqref{eq:non-negative-updates}, which simply disappears as $N=2$. 
Since this term only affects the update of $\vec{w}_{t+1}$ entry-wise, we only look at a particular entry $w_t$ of $\vec{w}_t$. Let $w_t$ represent an entry outside the support.
For $N>2$, the term $w_t^{(N-2)/N}$ is increasingly small as $N$ increases and ${w}_t < 1$.
Therefore, with a small initialization, it remains true that $w_t<1$ for the early iterations. 
 Intuitively, this suggests that the requirement on the upper bound of the initialization would become looser when $N$ gets larger. This indeed aligns with the behavior of the upper bound we derive in our theoretical results. 
 Since $\alpha=w_0^{1/N}$ increases naturally with $N$, we fix $w_0=\alpha^N$ instead of $\alpha$ to mimic the same initialization in terms of $w_0$, for the following comparison. 
 
 We formalize this insight in Theorem \ref{thm:non-negative} in Appendix \ref{sec:proof-for-non-negative} and show the convergence of \eqref{eq:non-negative-updates} under the special, non-negative case. Note that, in terms of initialization requirement, the only difference from Theorem \ref{thm:general} is that we no longer require the term $\Psi(\wmin,N)$ in \eqref{eqn:init_upper}.

\begin{remark}
\label{remark:init}
We investigate how the depth $N$ influences the requirement on initialization due to the change on gradient dynamics.
We rewrite $\Phi(\wmax,\wmin, \epsilon,N)$ in terms of $w_0 = \alpha^N$, and therefore the upper bound for $w_0$ under the simplified setting of non-negative signals (Theorem \ref{thm:non-negative}) is 
\[
w_0 \leq 
 \left(\frac18\right)^{2N/(N-2)} 
\wedge
\left(\frac{(\wmax)^{(N-2)/N}}{\log \frac{\wmax}{\epsilon}} \right)^{2N/(N-2)}
\wedge 
\left(\frac{(\wmin)^{(N-2)/N}}{ \log \frac{\wmin}{\epsilon}}\right)^{4N/(N-2)}.
\]
We start by analyzing each term in the upper bound. First, we notice that $\left(\frac18\right)^{2N/(N-2)}$ is increasing with respect to $N$. For the second term,
\[
\left(\frac{(\wmax)^{(N-2)/N}}{\log \frac{\wmax}{\epsilon}} \right)^{2N/(N-2)} = \frac{(\wmax)^{2}}{(\log \frac{\wmax}{\epsilon})^{2N/(N-2)}},
\]
the denominator gets smaller as $N$ increases when we pick the error tolerance parameter $\epsilon$ small. Therefore, we get that the second term is getting larger as $N$ increases. The last term $\left(\frac{(\wmin)^{(N-2)/N}}{ \log \frac{\wmin}{\epsilon}}\right)^{4N/(N-2)}$ follows a similar argument. We see that it is possible to pick a larger initialization $w_0=\alpha^N$ for larger $N$. We will demonstrate that below in our experiments. 
\end{remark}

\textbf{(b) Early stopping.} Early stopping is shown to be crucial, if not necessary, for implicit sparse regularization \cite{vaskevicius2019implicit,zhao2019implicit}. Interestingly, \cite{gissin2019implicit, woodworth2020kernelregimes} studied the similar depth-$N$ polynomial parametrization but did not realize the need of early stopping due to an oversimplification in the model. We will discuss this in details in Section \ref{sec:simplified}. We are able to explicitly characterize the window of the number of iterations that are sufficient to guarantee the optimal result. In particular, we get a lower bound of the window size for early stopping to get a sense of how it changes with different $N$.

\begin{theorem}[Informal]
Define the early stopping window size as $T_u(\wstar,\alpha,N,\eta,\zeta,\epsilon) - T_l(\wstar,\alpha,N,\eta,\zeta,\epsilon)$, the difference between the upper bound and lower bound of the number of iterations in \eqref{eqn:iteration_bound} of Theorem \ref{thm:general}.
Fixing $\alpha$ and $\eta$ for all $N$, the early stopping window size is increasing with $N$ under mild conditions.
\label{thm:informal-early-stopping}
\end{theorem}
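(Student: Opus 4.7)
The plan is to derive explicit $N$-dependent expressions for both $T_l$ and $T_u$ from the per-coordinate dynamics that underlie Theorem~\ref{thm:general}, and then show directly that their difference grows geometrically in $N$ once $\alpha$ and $\eta$ are held fixed. I will work in the simplified non-negative setting of \eqref{eq:non-negative-updates} and Theorem~\ref{thm:non-negative}, since the sign-agnostic case of Theorem~\ref{thm:general} only adds the symmetric $\vec{v}$-branch and does not change the scaling in $N$.

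For a coordinate $i\in S$ during its growing phase (while $w_{t,i}\ll w_i^\star$), the update \eqref{eq:non-negative-updates} reduces, via the first-order continuous approximation that Section~\ref{sec:proofingred} employs, to
\begin{equation*}
\dot w \;=\; c\,N^2\,\eta\, w^{(2N-2)/N},
\end{equation*}
with $c\asymp w_i^\star$; for a coordinate $i\notin S$ the same ODE holds but with $c$ replaced by a noise-driven quantity $\asymp\zeta$ coming from $\|n^{-1}\Xt\xxi\|_\infty$ and cross-correlation. Integrating for $N>2$ from $w_0=\alpha^N$ to a target $w_\star$ gives the explicit first-arrival time
\begin{equation*}
T(c,w_\star) \;=\; \frac{\alpha^{-(N-2)} - w_\star^{-(N-2)/N}}{c\,(N-2)\,N\,\eta}.
\end{equation*}

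I would then set $w_\star \asymp \wmin$ and $c\asymp\wmin$ to obtain $T_l \asymp \alpha^{-(N-2)}/[\wmin(N-2)N\eta]$, and $w_\star=\alpha^{N/4}$ with $c\asymp\zeta$ to obtain $T_u \asymp \alpha^{-(N-2)}/[\zeta(N-2)N\eta]$; the $w_\star^{-(N-2)/N}$ correction is dominated by $\alpha^{-(N-2)}$ because \eqref{eqn:init_upper} forces $\alpha\ll 1$. Subtracting,
\begin{equation*}
T_u - T_l \;\gtrsim\; \frac{\alpha^{-(N-2)}}{(N-2)\,N\,\eta}\Bigl(\tfrac{1}{\zeta}-\tfrac{C}{\wmin}\Bigr).
\end{equation*}
The ``mild condition'' of the informal statement is therefore $C\zeta<\wmin$, which is automatic in the recovery branch of \eqref{eqn:main_guarantee} where the signal dominates the noise and so $\zeta=\tfrac{1}{5}\wmin$. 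Given this, monotonicity of the window in $N$ follows from the elementary observation that \eqref{eqn:init_upper} forces $\alpha<1$, so that $\alpha^{-(N-2)}$ grows at least geometrically in $N$ while the polynomial factor $(N-2)N$ is a benign correction.

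The main obstacle will be making the first-order ODE approximation uniform across two very different regimes: on-support coordinates have $c$ of order $\wmin$ but a fixed target of the same order, whereas off-support coordinates have $c$ of order $\zeta$ and must be tracked over many more iterations until they reach $\alpha^{N/4}$. Section~\ref{sec:proofingred} already supplies such an approximation up to $T_l$; the task is to verify that extending it through $T_u$ incurs only $N$-independent multiplicative constants, so that the $\alpha^{-(N-2)}$ scaling survives. A secondary technical point is that the exact formulas in \eqref{eq:Tl-and-Tu} carry logarithmic factors such as $\log(\wmax/\wmin)$, $\log(\wmax/\epsilon)$ and $\log(\wmin/\epsilon)$; these are $N$-independent and are absorbed into the $\gtrsim$ above.
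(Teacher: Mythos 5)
Your overall route matches the paper's: both arguments extract the dominant $\alpha^{-(N-2)}/(\eta N^2\cdot(\text{scale}))$ contributions to $T_l$ and $T_u$ (coming from the escape time of on-support coordinates and the blow-up time of off-support coordinates, respectively, via the same continuous approximation) and then argue that the geometric factor $\alpha^{-(N-2)}$ dominates the polynomial prefactor. The packaging differs slightly: in the paper's formulas both leading terms carry the \emph{same} scale $\zeta$ (namely $T_1\approx\tfrac{15}{8N(N-2)\eta\zeta}\alpha^{-(N-2)}$ versus $T_2\approx\tfrac{5}{N(N-1)\eta\zeta}\alpha^{-(N-2)}$), and positivity of the window is a comparison of the absolute constants $15/8$ vs.\ $5$ together with $1/(N-2)$ vs.\ $1/(N-1)$, rather than your $1/\zeta$ vs.\ $C/\wmin$ comparison; since $\zeta\asymp\wmin$ in the relevant branch these are equivalent up to constants.

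There is, however, a genuine gap in your final monotonicity step. The claim that ``\eqref{eqn:init_upper} forces $\alpha<1$, so $\alpha^{-(N-2)}$ grows at least geometrically while the polynomial factor $(N-2)N$ is a benign correction'' is false as stated: for $g(N)=\alpha^{-(N-2)}/\bigl(N(N-2)\bigr)$ one has $g(N+1)/g(N)=\alpha^{-1}\,\tfrac{N(N-2)}{(N+1)(N-1)}$, which at $N=3$ equals $\tfrac{3}{8}\alpha^{-1}$; thus for $\alpha\in(3/8,1)$ the ``window'' actually \emph{decreases} from $N=3$ to $N=4$. The condition $\alpha<1$ is not enough — one needs $\alpha$ bounded away from $1$ by a uniform constant, and this is precisely the ``mild condition'' the paper imposes ($\alpha\le\exp(-5/6)$ for $T_2-T_1$ and $\alpha\le\exp(-5/3)$ for $T_4-T_3$, obtained by checking the sign of $f'(N)$ for $f(N)=\tfrac{1}{N(N-1)}(\alpha^{-(N-2)}-4\alpha^{-(N-2)/2})$). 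Relatedly, you misidentify the mild conditions: $C\zeta<\wmin$ concerns positivity of the window, not its monotonicity in $N$. Finally, the logarithmic terms you discard are not $N$-independent — their prefactors are $\tfrac{1}{N^2\zeta^{(2N-2)/N}}$ and $\tfrac{1}{N^2(\wmin)^{(2N-2)/N}}$ — and since they enter $T_l$ with a \emph{plus} sign one must check they do not grow with $N$; the paper handles this by additionally assuming $\zeta\ge1$ and $\wmax\ge1$ so that these terms are decreasing in $N$. Both issues are repairable, but as written the decisive step of the proof does not go through.
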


We defer the formal argument and proof of Theorem \ref{thm:informal-early-stopping} to Appendix \ref{sec:early-stopping}.
We note that the window we obtain in Theorem \ref{thm:general} is not necessarily the largest window that allows the guarantee, and hence the early stopping window size can be effectively regarded a lower bound of that derived from the largest window.
We note that a precise characterization of the largest window is difficult.
Although we only show that this lower bound increases with $N$, we see that the conclusion matches empirically with the largest window.
Fix the same initialization $\alpha=0.005$ and step size $\eta=0.01$ for $N=2,3,4$, we show the coordinate path in Figure \ref{fig:window_size_cor}. We can see that as $N$ increases, the early stopping window increases and the error bound captures the time point that needs stopping quite accurately. 
The experimental details and more experiments about early stopping is presented in Section \ref{sec:sim}.

\begin{figure}[ht!]
    \centering
    \includegraphics[width=\linewidth]{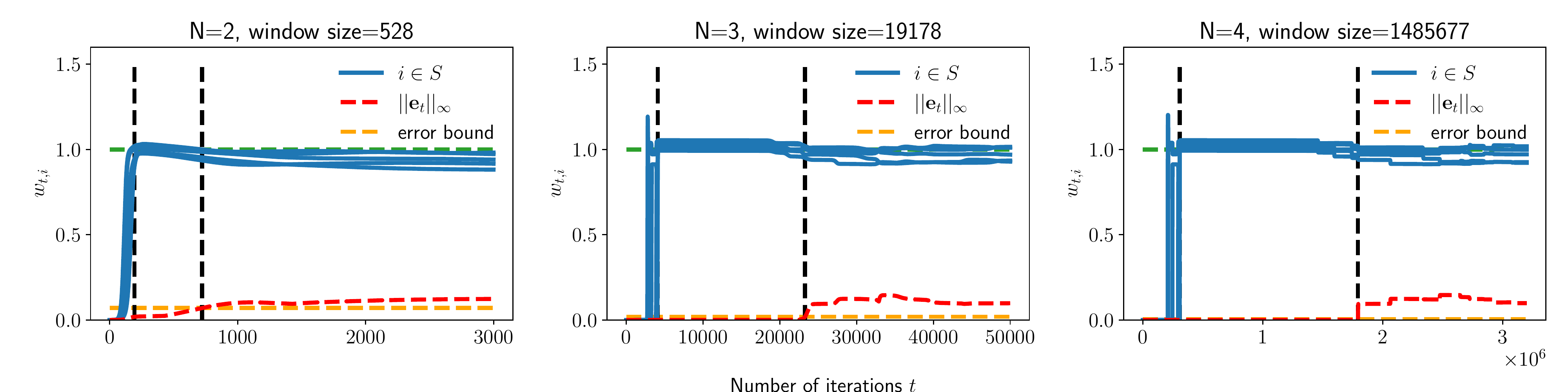}
    \caption{The black line indicates the early stopping window for different $N=2,3,4$. The blue line is the coordinate path for each entry on the support. The red line indicates the absolute value of the largest entry on the coordinate path outside the support. We use the orange line to indicate the requirement outside the support for early stopping.}
    \label{fig:window_size_cor}
\end{figure}

\revise{
\begin{remark}
\label{remark:init-early-stopping}
Similar to Theorem \ref{thm:informal-early-stopping}, we look at how initialization scale affects the early stopping window for any fixed $N>2$. With $\eta$ fixed, the early stopping window is increasing as the initialization $\alpha$ decreases. 
\end{remark}
We defer the detailed calculation to Section~\ref{sec:init-early-stopping}. This generalizes the finding that vanishing initialization increases the gap between the phase transition times in \cite{gidel2019implicit} from $N=2$ to any $N>2$.
}

\section{Proof Ingredients}
\label{sec:proofingred}
The goal of this paper is to understand how generalization and gradient dynamics change with different $N>2$. For $N=2$, gradient descent yields both statistically and computationally optimal recovery under the RIP assumption \cite{vaskevicius2019implicit}. The matrix formulation of the same type of parametrization is considered in the setting of low-rank matrix recovery, and exact recovery can be achieved in the noiseless setting \cite{gunasekar2018implicit, li2018algorithmic}. The key proof ingredient is to reduce the convergence analysis to one-dimensional iterates and differentiate the convergence on the support from the error outside the support. Before we get into that, we conduct a simplified gradient flow analysis.

\subsection{A Simplified Analysis}
\label{sec:simplified}

Consider a simplified problem where the target signal $\wstar$ is non-negative, $n^{-1} \XtX = \matrixid$ and  the noise is absent. We omit the reparametrization of $\vec{v}^N$ like before and the gradient descent updates on $\vec{u}$ will be independent for each coordinate. The gradient flow dynamics of $\vec{w} = \vec{u}^N$ is derived as 
\begin{equation}
    \frac{\partial w_i}{\partial t} 
    =
    \frac{\partial w_i}{\partial u_i} \frac{\partial u_i}{\partial t}
    =
    -\frac{\partial w_i}{\partial u_i} \frac{\partial \mathcal{L}}{\partial u_i}
    = 2N^2 (w^\star_i - w_i) w_i^{2-\frac2N}, 
    \label{eq:gf}
\end{equation}
for all $i \in \{1,2,\ldots,p\}$. Notice that $w_i$ increases monotonically and converges to $w_i^\star$ if $w_i^\star$ is positive or otherwise keeps decreasing and converges to $0$  if $w_i^\star =0$. As such, we can easily distinguish the support and non-support. In fact, gradient flow with dynamics as in \eqref{eq:gf} would exhibit a behavior of ``incremental learning'' --- the entries are learned separately, one at a time \cite{gissin2019implicit}. However, with the presence of noise and perturbation arising from correlated designs, the gradient flow may end up over-fitting the noise. Therefore, early stopping as well as the choice of step size is crucial for obtaining the desired solution \cite{vaskevicius2019implicit}. We use \eqref{eq:gf} to obtain a gradient descent update:
\begin{equation}
    w_{i,t+1} = w_{i,t}(1+2N^2\eta(w^\star_i-w_{i,t})w_{i,t}^{1-\frac2N}).
    \label{eq:ideal-gd}
\end{equation}

The gradient descent with $N=2$ is analyzed in \cite{vaskevicius2019implicit}. However, when $N>2$, the presence of $w_{i,t}^{1-\frac2N}$ imposes an asymmetrical effect on the gradient dynamics. The difficulty of analyzing such gradient descent \eqref{eq:ideal-gd} is pointed out in \cite{gissin2019implicit}.
More specifically, the recurrence relation is not solvable. However, gradient descent updates still share similar dynamics with the idealized gradient flow in \eqref{eq:gf}. Inspired by this effect, we are able to show that the entries inside the support and those outside the support are learned separately with a practical optimization algorithm shown in \eqref{eq:gd} and \eqref{eq:gd on u,v}. As a result, we are able to explore how the depth $N$ affects the choice of step size and early stopping criterion.

\subsection{Proof Sketch}

\textbf{Growth rate of gradient descent.}
We adopt the same decomposition as illustrated in \cite{vaskevicius2019implicit}, and define the following error sequences:

\begin{equation}
\vec{b}_{t} = \frac{1}{n}\XtX \vec{e}_{t} - \frac{1}{n}\Xt\vec{\xi}, 
\quad
\vec{p}_{t} = \left( \frac{1}{n}\XtX - \matrixid \right) \left(\vec{s}_{t} - \wstar \right),
\label{eq:b-p-decomposition}
\end{equation}

where $\vec{e}_t$ and $\vec{s}_t$ stand for error and signal accordingly, and the definitions can be found in \eqref{eq:error-decompositions} in Appendix.
We can then write the updates on $\vec{s}_t$ and $\vec{e}_t$ as 
\begin{equation}
\begin{aligned}
\vec{s}_{t+1} &= \vec{s}_t \odot (\id{} - 2N\eta(\vec{s}_t - \wstar + \vec{p}_t+\vec{b}_t)\odot \vec{s}_t^{(N-2)/N})^N,\\
\vec{e}_{t+1} &= \vec{e}_t \odot (\id{} - 2N\eta(\vec{p}_t+\vec{b}_t)\odot \vec{e}_t^{(N-2)/N})^N.
\end{aligned}
\end{equation}

 To illustrate the idea, we think of the one-dimensional updates $\{s_t\}_{t\geq 0}$ and $\{e_t\}_{t\geq 0}$, ignore the error perturbations $\vec{p}_t$ and $\vec{b}_t$ in the signal updates $\{s_t\}_{t\geq 0}$, and treat $\norm{\vec{p}_t+\vec{b}_t}_\infty \leq B$ in the error updates $\{e_t\}_{t\geq 0}$. 
\begin{equation}
    s_{t+1} = s_t(1-2N\eta(s_t-w^\star)s_t^{(N-2)/N})^N,
    \quad
    e_{t+1} = e_t(1-2N\eta B e_t^{(N-2)/N})^N.
    \label{eq:signal-error-updates}
\end{equation}
We use the continuous approximation to study the discrete updates. Therefore, we can borrow many insights from the analysis about gradient flow to overcome the difficulties caused by $w_{i,t}^{1-\frac2N}$ as pointed out in equation \eqref{eq:ideal-gd}.
With a proper choice of step size $\eta$, the number of iterations $T_l$ for $s_t$ converging to $w^\star$ is derived as
\begin{align*}
T_l &\leq \sum_{t=0}^{T_l-1} \frac{s_{t+1}-s_t}{2N^2\eta(w^\star-s_t) s_t^{(2N-2)/N}}
&\leq \frac{1}{N^2\eta w^\star} \int_{\alpha^N}^{w^\star} \frac{1}{s^{(2N-2)/N}} ds + \mathcal{O}\left(\frac{w^\star - \alpha^N}{\alpha^{2N-2}}\right).
\end{align*}
The number of iterations $T_u$ for $e_t$ staying below some threshold $\alpha^{N/4}$ is derived as 
\[
T_u \geq \sum_{t=0}^{T_u-1} \frac{e_{t+1}-e_t}{4N^2\eta B e_t^{(2N-2)/N}}
\geq \frac{1}{4N^2\eta B} \int_{\alpha^N}^{\alpha^{N/4}} \frac{1}{e^{(2N-2)/N}} de.
\]

With our choice of coherence $\mu$ in Theorem \ref{thm:general}, we are able to control $B$ to be small so that $T_l$ is smaller than $T_u$. This means the entries on the support converge to the true signal while the entries outside the support stay around 0, and we are able to distinguish signals and errors.

\textbf{Dealing with negative targets.}
We now illustrate the idea about how to generalize the result about non-negative signals to general signals. 
The exact gradient descent updates on $\vec{u}$ and $\vec{v}$ are given by:
\begin{equation}
    \begin{aligned}
\vec{u}_{t+1} &= \vec{u}_t \odot \left(\vec{1} - 2N\eta \left(\frac1n \Xt(\X(\vec{w}_t-\vec{w}^\star)-\vec{\xi})\odot \vec{u}_t^{N-2}\right)\right),\\
\vec{v}_{t+1} &= \vec{v}_t \odot \left(\vec{1} + 2N\eta \left(\frac1n \Xt(\X(\vec{w}_t-\vec{w}^\star)-\vec{\xi})\odot \vec{v}_t^{N-2}\right)\right).
\end{aligned}
\label{eq:gd on u,v}
\end{equation}

The basic idea is to show that when $w_i^\star$ is positive, $v_i^\star$ remains small up to the early stopping criterion, and when $w_i^\star$ is negative, $u_i^\star$ remains small up to the early stopping criterion. We turn to 
studying the gradient flow of such dynamics. Write $\vec{r}(t)=\frac1n \Xt(\X(\vec{w}(t)-\vec{w}^\star)-\vec{\xi}$. It is easy to verify that the gradient flow has a solution:
\begin{align*}
    \vec{u}(t) = \left(\alpha^{2-N}\id{} + 2N(N-2)\eta \int_0^t \vec{r}(\upsilon) 
    d\upsilon\right)^\frac{1}{2-N},\\
    \vec{v}(t) = \left(\alpha^{2-N}\id{} - 2N(N-2)\eta \int_0^t \vec{r}(\upsilon) 
    d\upsilon\right)^\frac{1}{2-N}.
\end{align*}
 We may observe some symmetry here, when $u_{i,t}$ is large, $v_{i,t}$ must be small. 
 For the case $w_i>0$, to ensure the increasing of $u_{i,t}$ and decreasing of $v_{i,t}$ as we desire, 
 the initialization needs to be smaller than $w_i$, which leads to the extra constraint on initialization $\Psi(\wmin,\epsilon)$ with order of $\mathcal{O}(\wmin)$ as defined before.
 It remains to build the connection between gradient flow and gradient descent, where again we uses the continuous approximation as before. 
 The detailed derivation is presented in Appendix \ref{sec: neg}.
 
\section{Simulation Study}
\label{sec:sim}
We conduct a series of simulation experiments\footnote{ \revise{The code is available on \url{https://github.com/jiangyuan2li/Implicit-Sparse-Regularization}.}} to further illuminate our theoretical findings.
Our simulation setup is described as follows. The entries of $\X$ are sampled as i.i.d. Rademacher random variables and the entries of the noise vector $\vec{\xi}$ are i.i.d. $N(0,\sigma^2)$ random variables. We let $\wstar = \gamma\id{S}$.  
The values for the simulation parameters are: $n=500$, $p=3000$, $k=5$, $\gamma=1$, $\sigma=0.5$ unless otherwise specified.  For $\ell_2$-plots each simulation is repeated 30 times, and the median $\ell_2$ error is depicted. The shaded area indicates the region between $25^\textrm{th}$ and $75^\textrm{th}$ percentiles pointwisely.

\textbf{Convergence results.} We start by showing that the general choice of $N$ leads to the sparse recovery, similar to $N=2$ in \cite{vaskevicius2019implicit}, as shown in our main theorem. We choose different values of $N$ to illustrate the convergence of the algorithm.
\revise{
The result on simulated data is shown in Figure~\ref{fig:convergence}, and we defer the result on MNIST to Appendix~\ref{sec:more-experiments}.}
Note that the ranges in the $x$-axes of these figures differ due to different choice of $N$ and $\eta$. We observe that as $N$ increases, the number of iterations increases significantly. This is due to the term $\vec{u}^{N-2}$ and $\vec{v}^{N-2}$ in \eqref{eq:gd on u,v}, and the step size $\eta \approx \frac{1}{N^2}$.
With a very small initialization, it takes a large number of iterations to escape from the small region (close to $0$).
\begin{figure}[!ht]
    \centering
    \includegraphics[width=\linewidth]{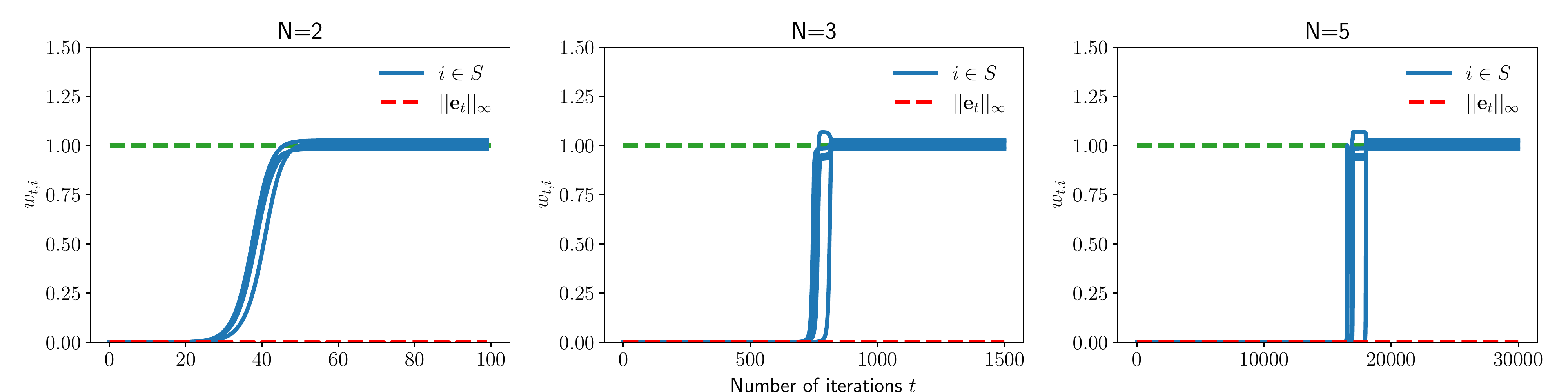}
    \caption{Coordinates paths for different choice of $N=2,3,5$ with $\alpha^N=10^{-6}$ and $\eta=1/{(5N^2)}$.}
    \label{fig:convergence}
\end{figure}

\textbf{Larger initialization.} As discussed in Remark \ref{remark:init}, the upper bound on initialization gets larger with larger $N$. We intentionally pick a relatively large $\alpha^N=2\times 10^{-3}$ 
where the algorithm fails to converge for $N=2$. With the same initialization, the recovery manifests as $N$ increases (Figure\ \ref{fig:large-init}).
\begin{figure}[ht!]
    \centering
    \includegraphics[width=\linewidth]{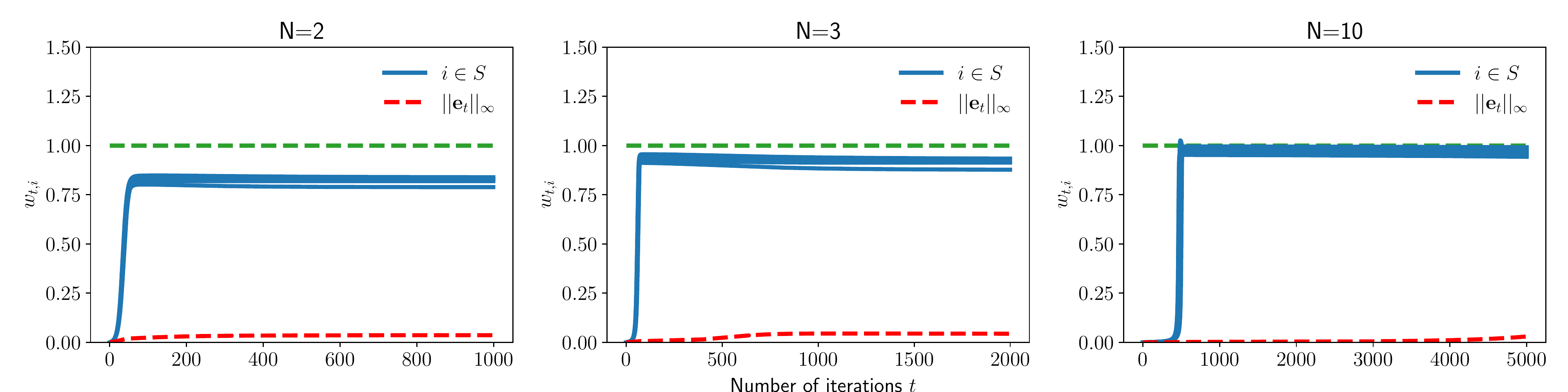}
    \caption{The effect of $N$ on the initialization $\alpha^N$ with $\eta=1/{(5N^2)}$.}
    \label{fig:large-init}
\end{figure}

\textbf{Early stopping window size.} Apart from the coordinate path shown in Figure \ref{fig:window_size_cor}, we obtain multiple runs and plot the $\log$-$\ell_2$ error (the logarithm of the $\ell_2$-error) of the recovered signals to further confirm the increase of early stopping window, as shown in Section \ref{sec:main-res}. 
Note that for both Figures \ref{fig:window_size_cor} and \ref{fig:early_stopping_error}, we set $n=100$ and $p=200$. 
Since $\alpha^N$ would decrease quickly with $N$, which would cause the algorithm takes a large number of iterations to escape from the small region. 
We fix $\alpha^N=10^{-5}$ instead of fixing $\alpha$ for Figure \ref{fig:early_stopping_error}.
\begin{figure}[ht!]
    \centering
    \includegraphics[width=\linewidth]{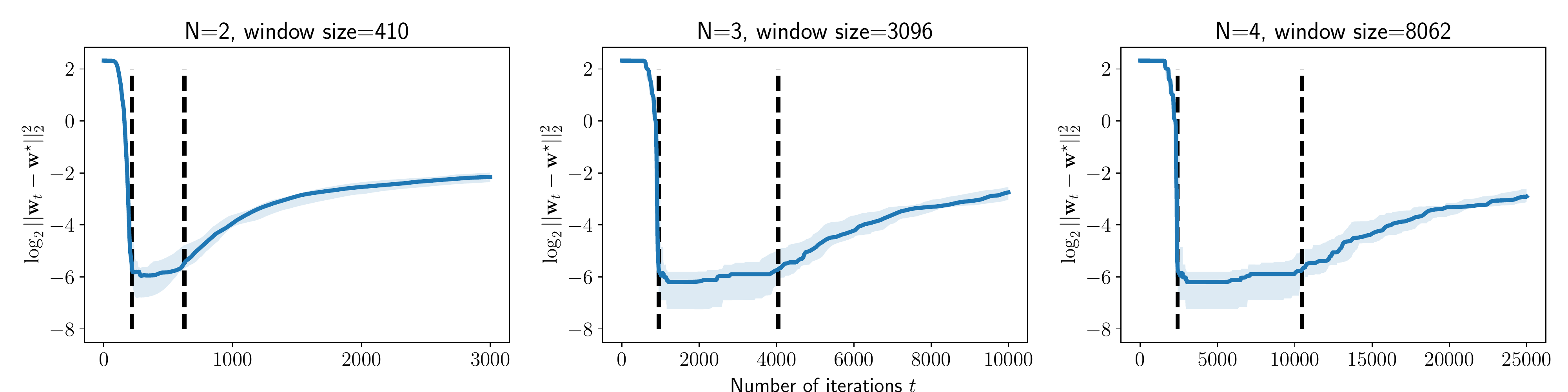}
    \caption{$\log$-$\ell_2$ error of $N=2,3,4$ with the fixed step size $\eta=0.01$.}
    \label{fig:early_stopping_error}
\end{figure}

\textbf{Incremental learning dynamics.} The dynamics of incremental learning for different $N$ is discussed in \cite{gissin2019implicit}. The distinct phases of learning are also observed in sparse recovery (Figure \ref{fig:incremental_learning}), though we do not provide a theoretical justification. Larger values of $N$ would lead to more distinct learning phases for entries with different magnitudes under the same initialization $\alpha^N$ and step size $\eta$.

\begin{figure}[ht!]
    \centering
    \includegraphics[width=\linewidth]{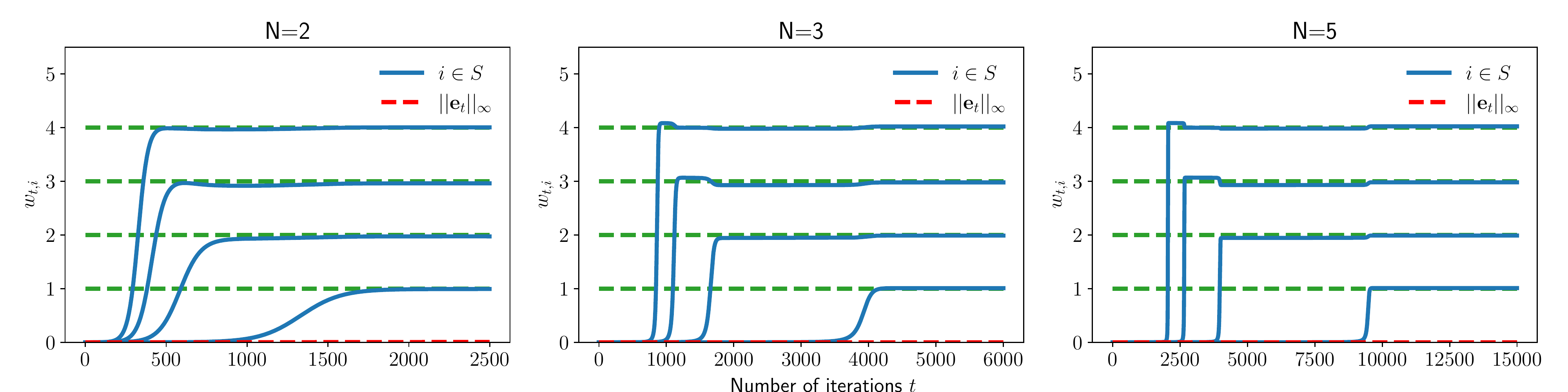}
    \caption{Coordinates paths for $N=2,3,5$. The entries of $\wstar$ on the support $S$ are now $[1,2,3,4]$. The initialization is $\alpha^N=10^{-4}$ and the step size is $\eta = 10^{-3}$ for all $N$.}
    \label{fig:incremental_learning}
\end{figure}

\textbf{Kernel regime.} 
As pointed out in \cite{woodworth2020kernelregimes}, the scale of initialization determines whether the gradient dynamics obey the ``kernel'' or ``rich'' regimes for diagonal linear networks. 
We have carefully analyzed and demonstrated the sparse recovery problem with small initialization, which corresponds to the ``rich'' regime. 
To explore the "kernel" regime in a more practical setting, we set $n=500$, $p=100$, and the entries of $\wstar$ are i.i.d. $\mathcal{N}(0,1)$ random variables. The noise level is $\sigma=25$, and the initialization and step size is set as $\alpha^N=1000$ and $\eta=10^{-7}$ for all $N$.
Note that we are not working in the case $n\ll p$ as \cite{woodworth2020kernelregimes}. We still observe that the gradient dynamics with large initialization (Figure \ref{fig:ridge_regime}) can be connected to ridge regression if early stopping is deployed.

\begin{figure}[ht!]
    \centering
    \includegraphics[width=\linewidth]{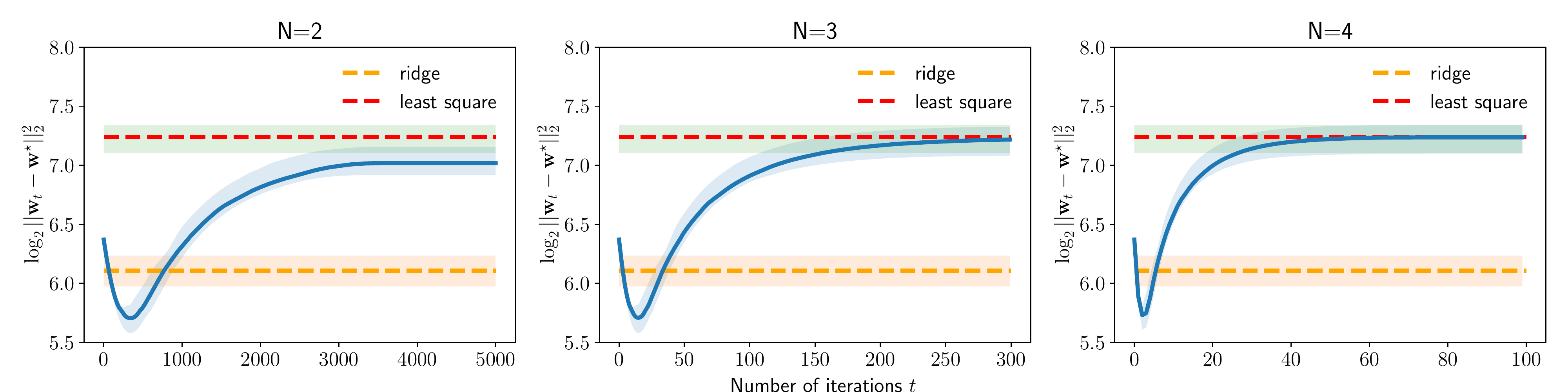}
    \caption{$\log$-$\ell_2$ error of $N=2,3,4$ for a ridge regression setting. The ridge regression solution is selected by 5-fold cross validation.}
    \label{fig:ridge_regime}
\end{figure}

\section{Conclusions and Future Work}
In this paper, we extend the implicit regularization results in \cite{vaskevicius2019implicit} from $N=2$ to general $N>2$, and further study how gradient dynamics and early stopping is affected by different choice $N$. 
We show that the error bound is invariant with different choice of $N$ and yields the minimax optimal rate. The step size is of order $\mathcal{O}(1/N^2)$. The initialization and early stopping window gets larger when increasing $N$ due to the changes on gradient dynamics.
\revise{
The incremental learning dynamics and kernel regime of such parametrizations are empirically shown, however not theoretically justified, which is left for future work.
}

The convergence result can be further improved by relaxing the requirement on the incoherence of design matrix from $\mu\lesssim\frac{1}{k\wmax/\wmin}$ to $\mu\lesssim\frac{1}{k\log(\wmax/\wmin)}$, similar to \cite{vaskevicius2019implicit}. Overall, we believe that such an analysis and associated techniques could be applied for studying other, deeper nonlinear models in more practical settings.

\revise{
\section*{Acknowledgements}
% \ray{add yours}
This work was supported in part by the National Science Foundation under grants CCF-1934904, CCF-1815101, CCF-2005804, and DMS-1711952.
}
\bibliographystyle{unsrt}

{
\small
\bibliography{arxiv}}

%%%%%%%%%%%%%%%%%%%%%%%%%%%%%%%%%%%%%%%%%%%%%%%%%%%%%%%%%%%%
% \input{checklist}

%%%%%%%%%%%%%%%%%%%%%%%%%%%%%%%%%%%%%%%%%%%%%%%%%%%%%%%%%%%%

\newpage

\appendix
\section*{Appendix}
The appendix is organized as follows.

In Appendix~\ref{sec:proof-for-non-negative}, we present a simplied theorem about non-negative signals and illustrate the idea behind the proof.

In Appendix~\ref{sec:multiplicative-updates}, we study the multiplicative updates and build connections to its continuous approximation, which will be used next.

In Appendix~\ref{sec: proof of prop}, we provide the proof of propositions and technical lemmas in Appendix~\ref{sec:proof-for-non-negative}.

In Appendix~\ref{sec:proof-of-main-res}, we prove the main results stated in the paper.

In Appendix~\ref{sec:more-experiments}, we provide the experimental results on real-world datasets to illustrate the effectiveness of the proposed algorithm.

\section{Proof for Non-negative Signals}
\label{sec:proof-for-non-negative}
We mainly follow the proof structure from \cite{vaskevicius2019implicit} to obtain the convergence of similar gradient descent algorithm for the case $N=2$, which is a limiting case of ours. We will demonstrate how gradient dynamics changes with $N>2$, which requires us to study the growth rate of error and convergence rate more carefully.

In this section, we will start with the general set up and provide a simplified version of Theorem \ref{thm:general} about non-negative signals. 

\subsection{Setup}
The gradients of $\mathcal{L}(\vec{u},\vec{v})$ with respect to $\vec{u}, \vec{v}$ read as
\[
\begin{aligned}
\nabla_{\vec{u}} \mathcal{L}(\vec{w}) &= \frac{2N}{n} \Xt(\X\vec{w}-\vec{y})\odot \vec{u}^{N-1}\\
\nabla_{\vec{v}} \mathcal{L}(\vec{w}) &= -\frac{2N}{n} \Xt(\X\vec{w}-\vec{y})\odot \vec{v}^{N-1}.
\end{aligned}
\]
With the step size $\eta$, the gradient descent updates on $\vec{u}_t$ and $\vec{v}_t$ simply are
\[
\begin{aligned}
\vec{u}_{t+1} &= \vec{u}_t \odot \left(\vec{1} - 2N\eta \left(\frac1n \Xt(\X(\vec{w}_t-\vec{w}^\star)-\xxi)\odot \vec{u}_t^{N-2}\right)\right),\\
\vec{v}_{t+1} &= \vec{v}_t \odot \left(\vec{1} + 2N\eta \left(\frac1n \Xt(\X(\vec{w}_t-\vec{w}^\star)-\xxi)\odot \vec{v}_t^{N-2}\right)\right).
\end{aligned}
\]

Let $\vec{w}_t = \vec{w}_t^+ - \vec{w}_t^-$ where $\vec{w}_t^+ \coloneqq \vec{u}_t^N$ and $\vec{w}_t^- \coloneqq \vec{v}_t^N$ with the power taken element-wisely. 
We denote $S$ as the support of $\wstar$, and let $S^{+} = \{i | w^{\star}_{i} > 0 \}$ denote the index set of coordinates with positive values, and $S^{-} = \{ i | w^{\star}_{i} < 0\}$ denote the index set of coordinates with negative values. Therefore $S=S^+\cup S^-$ and $S^+\cap S^- =\emptyset$.
Then define the following signal and noise-related quantities:
\begin{align}
\begin{split}
\label{eq:error-decompositions}
\vec{s}_{t}
&\coloneqq\id{S^{+}} \odot \vec{w}_{t}^{+} - \id{S^{-}} \odot \vec{w}_{t}^{-}, \\
\vec{e}_{t} 
&\coloneqq \id{S^{c}} \odot \vec{w}_{t} + \id{S^{-}} \odot \vec{w}_{t}^{+} - \id{S^{+}} \odot \vec{w}_{t}^{-}, \\
\vec{b}_{t}
&\coloneqq \frac{1}{n}\XtX \vec{e}_{t} - \frac{1}{n}\Xt\vec{\xxi}, \\
\vec{p}_{t}
&\coloneqq \left( \frac{1}{n}\XtX - \matrixid \right) \left(\vec{s}_{t} - \wstar \right).
\end{split}
\end{align}

Let $\alpha^{N}$ be the initial value for each entry of $\vec{w}$ and rewrite the updates on
$\vec{w}_{t}$, $\vec{w}_{t}^{+}$ and $\vec{w}_{t}^{-}$
in a more succinct way:
\begin{align}
\begin{split}
\label{eq:updates-equation-using-b-p-notation}
&\vec{w}_{0}^{+} = \vec{w}_{0}^{-} = \alpha^{N}, \\
&\vec{w}_{t} = \vec{w}_{t}^{+} - \vec{w}_{t}^{-}, \\
&\vec{w}_{t+1}^{+} = \vec{w}_{t}^{+} \odot \left( \id{} - 2N\eta 
\left( \vec{s}_{t} - \wstar + \vec{p}_{t} + \vec{b}_{t} \right)\odot(\vec{w}_t^+)^{(N-2)/N} \right)^{N}, \\
&\vec{w}_{t+1}^{-} = \vec{w}_{t}^{-} \odot 
\left( \id{} + 2N\eta \left( \vec{s}_{t} - \wstar + \vec{p}_{t} + \vec{b}_{t} \right)\odot(\vec{w}_t^-)^{(N-2)/N} \right)^{N}.
\end{split}
\end{align}

When our target $\wstar$ is with non-negative entries, the design of $\vec{v}_t$ is no longer needed and the algorithm could be simplied to the following form.
\begin{equation}
\begin{aligned}
\vec{w}^+_0 
&= \vec{u}_0^N = \alpha^N,\\
\vec{w}^+_t 
&= \vec{u}_t^N,\\
\vec{w}_{t+1}^{+}
&= \vec{w}_{t}^{+} \odot
\left(\id{} - 2N\eta \left(\vec{s}_{t} - \wstar + \vec{p}_{t} + \vec{b}_{t} \right)\odot(\vec{w}_t^+)^{(N-2)/N}\right)^{N}
\end{aligned}
\label{eq:updates-for-non-negative}
\end{equation}
The results in this section are all about updates in equation 
\eqref{eq:updates-for-non-negative}, and will be generalized to updates in equation \eqref{eq:updates-equation-using-b-p-notation} in Section \ref{sec:proof-of-main-res}.

\subsection{The Key Propositions}
\label{sec:prop}
Starting from $t=0$, we have $\norm{\vec{s}_0-\wstar}_\infty \lesssim \mathcal{O}(\wmax)$ and $\norm{\vec{e}_0}_\infty \leq \alpha^N$. The idea of proposition \ref{prop: first stage} is to show that after some certain number of iterations $t$, we obtain $\norm{\vec{s}_t-\wstar}_\infty \lesssim \mathcal{O}(\wmin)$ and $\norm{\vec{e}_t}_\infty \leq \alpha^{N/2}$. Proposition \ref{prop: second stage} further reduces the approximation error from $\mathcal{O}(\wmin)$ to $\mathcal{O}(\norm{\frac{1}{n} \Xt \vec{\xxi}}_\infty)$ if possible, while still maintaining $\norm{\vec{e}_t}_\infty \leq \alpha^{N/4}$.

\begin{proposition}
Consider the updates in equations \eqref{eq:updates-for-non-negative}. Fix any $0<\zeta\leq\wmax$ and let $\gamma=C_\gamma\frac{\wmin}{\wmax}$ where $C_\gamma$ is some small enough absolute constant. Suppose the error sequences $(\vec{b}_t)_{t\geq0}$ and $(\vec{p}_t)_{t\geq0}$ for any $t\geq0$ satisfy the following:

\begin{align*}
\norm{\vec{b}_t}_\infty &\leq C_b \zeta -\alpha^{N/4},\\
\norm{\vec{p}_t}_\infty &\leq \gamma\norm{\vec{s}_t-\wstar}_\infty,
\end{align*}

where $C_b$ is some small enough absolute constants. If the initialization satisfies 
\[
\alpha 
\leq 
\left(\frac18\right)^{2/(N-2)}
\wedge
\left( \frac{(\wmax)^{(N-2)/N}}{\log \frac\wmax\epsilon}\right)^{2/(N-2)},
\]
and the step size $\eta\leq \frac{\alpha^N}{8N^2\zeta^{(3N-2)/N}}$, then for any $T_1 \leq T\leq T_2$ where
\begin{align*}
T_1 &= \frac{75}{16\eta N^2 \zeta^{(2N-2)/N}} \log \frac{|\wmax - \alpha^N|}{\epsilon}
    +
    \frac{15}{8N(N-2)\eta \zeta \alpha^{(N-2)}},\\
T_2&=\frac{5}{N(N-1)\eta\zeta}\left(\frac{1}{\alpha^{(N-2)}}-\frac{1}{\alpha^{(N-2)/2}}\right),\\
\end{align*}

and any $0\leq t\leq T$, we have 
\begin{align*}
\norm{\vec{s}_T-\wstar}_\infty &\leq \zeta,\\
\norm{\vec{e}_t}_\infty &\leq \alpha^{N/2}.
\end{align*}
\label{prop: first stage}
\end{proposition}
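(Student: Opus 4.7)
My plan is to prove the two claims coordinate-wise by induction on $t$, exploiting the fact that after peeling off the perturbations $\vec{p}_t$ and $\vec{b}_t$ (whose sizes are controlled by assumption), the update \eqref{eq:updates-for-non-negative} is separable across coordinates. Splitting into the signal coordinates $i\in S$ and the error coordinates $i\notin S$ reduces the problem to analyzing the two one-dimensional recursions displayed in \eqref{eq:signal-error-updates} of the proof sketch, with driving term $w^\star$ for the signal and bounded magnitude $B\lesssim\zeta$ for the error. The whole argument will be driven by the continuous approximation from Section~\ref{sec:proofingred}, with the step-size constraint $\eta\le\alpha^{N}/(8N^{2}\zeta^{(3N-2)/N})$ chosen precisely so that the higher-order terms produced by expanding $(1+x)^{N}$ are absorbed into a multiplicative constant on the leading contribution.

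For the signal coordinate $i\in S$, I would argue inductively that $s_{i,t}$ is non-decreasing until it enters a neighborhood of $w_i^{\star}$ and never overshoots past $(1+O(\gamma))w_i^{\star}$, using the assumptions $\|\vec{p}_t\|_\infty\le\gamma\|\vec{s}_t-\wstar\|_\infty$ and $\|\vec{b}_t\|_\infty\le C_b\zeta-\alpha^{N/4}$ with small $C_b,\gamma$ to ensure the sign of the ``force'' $(s_t-w_i^{\star}+p_{i,t}+b_{i,t})$ agrees with $s_t-w_i^{\star}$ until we are within $\zeta$ of $w_i^{\star}$. The convergence time splits into a polynomial phase, where $s_{i,t}$ climbs from $\alpha^{N}$ up to roughly $w_i^{\star}/2$, and an exponential phase, where the dynamics linearize to $\dot s \approx -2N^{2}(s-w^{\star})(w^{\star})^{(2N-2)/N}$. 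Telescoping the discrete increments and comparing with $\int w^{-(2N-2)/N}\,dw$ gives the polynomial-phase bound $\tfrac{1}{N(N-2)\eta w^{\star}\alpha^{N-2}}$ up to constants, while the linearized phase contributes $\tfrac{1}{N^{2}\eta(w^{\star})^{(2N-2)/N}}\log(w^{\star}/\epsilon)$; these are exactly the two summands in $T_1$, so choosing $T\ge T_1$ yields $\|\vec{s}_T-\wstar\|_\infty\le\zeta$. The initialization bound $\alpha\le(1/8)^{2/(N-2)}\wedge((\wmax)^{(N-2)/N}/\log(\wmax/\epsilon))^{2/(N-2)}$ enters precisely to guarantee that the discretization error per step is at most a constant factor of the continuous drift throughout both phases.

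For an error coordinate $i\notin S$, the update becomes $e_{i,t+1}=e_{i,t}(1-2N\eta(p_{i,t}+b_{i,t})e_{i,t}^{(N-2)/N})^{N}$, and the worst case is when the perturbation pushes $|e_{i,t}|$ upward at rate bounded above by $(C_b+\gamma)\zeta\le c\zeta$. Again using a discrete-to-continuous comparison, $|e_{i,t}|^{(2-N)/N}$ decreases by at most $\tfrac{(N-2)}{N}\cdot 2N^{2}\eta c\zeta$ per step, so the time to grow from $\alpha^{N}$ to $\alpha^{N/2}$ is at least $\tfrac{N}{2N(N-2)\eta c\zeta}(\alpha^{-(N-2)}-\alpha^{-(N-2)/2})$, which matches $T_2$ after absorbing constants. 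Therefore, choosing any $T\le T_2$ keeps $\|\vec{e}_t\|_\infty\le\alpha^{N/2}$ throughout $0\le t\le T$, and the admissibility $T_1\le T_2$ is ensured by the initialization upper bound together with $\eta\le\alpha^{N}/(8N^{2}\zeta^{(3N-2)/N})$.

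The main obstacle is converting the continuous ODE bounds into clean discrete recursions in the presence of the factor $w_t^{(N-2)/N}$, which for $N>2$ causes different coordinates to evolve on very different time scales and which, as noted after \eqref{eq:ideal-gd}, prevents an explicit closed-form solution. The care needed is twofold: first, the expansion $(1+x)^{N}=1+Nx+O(N^{2}x^{2})$ must be controlled entrywise by the step-size choice so the quadratic remainder is a constant fraction of the linear drift, which is what dictates the $\alpha^{N}/(N^{2}\zeta^{(3N-2)/N})$ scaling; second, in the polynomial phase for the signal, the growth rate $s^{(2N-2)/N}$ becomes vanishingly small near $\alpha^{N}$, so a slight over-estimate of the perturbation could stall the iterate entirely. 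Handling this requires a tight monotonicity argument showing the sign of the driving term is preserved throughout, and using the inequality $\log(1-x)\ge -x-x^{2}$ with the step-size bound to turn telescoping products into manageable sums --- this is where the concrete numerical constants in $T_1,T_2$ come from.
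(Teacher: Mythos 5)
Your proposal follows essentially the same route as the paper: reduce to one-dimensional coordinate-wise recursions with bounded perturbations, analyze the signal coordinates via a polynomial climbing phase plus a geometric/halving phase (giving exactly the two summands of $T_1$), bound the error growth by comparing the telescoped discrete increments with $\int x^{-(2N-2)/N}\,dx$ (giving $T_2$), and verify $T_1\le T_2$ from the initialization bound. This matches the paper's Lemmas on iterates at the beginning, iterates near convergence, overall iterates with bounded errors, and the bounded-error growth lemma, assembled in the same way.
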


Note that the requirement on $\norm{\vec{b}_t}_\infty \leq C_b\zeta-\alpha^{N/4}$ can be relaxed to $\norm{\vec{b}_t}_\infty \leq C_b\zeta$ when we just consider the updates in equation \eqref{eq:updates-for-non-negative}. However, we still consider the stronger requirement in order to further generalize to updates in equation \eqref{eq:updates-equation-using-b-p-notation} later.

\begin{proposition}
Consider the updates in equations \eqref{eq:updates-for-non-negative}. Fix any $0<\zeta\leq\wmax$ and suppose that the error sequences $(\vec{b}_t)_{t\geq0}$ and $(\vec{p}_t)_{t\geq0}$ for any $t\geq 0$ satisfy
\begin{align*}
    B &= \norm{\vec{b}_t}_\infty + \norm{\vec{p}_t}_\infty \leq \frac{1}{200}\wmin\\
    \norm{\vec{b}_t\odot \id{i}}_\infty
    &\leq B_i \leq \frac{1}{10}\wmin,\\
    \norm{\vec{p}_t}_\infty 
    &\leq\frac{1}{20}\norm{\vec{s}_0 - \wstar}_\infty.
\end{align*}
Suppose that
\begin{align*}
\alpha 
\leq \left(\frac14\right)^{2/(N-2)} 
&\wedge
\left(\frac{(\wmin)^{(N-2)/N}}{\log \frac{\wmin}{\epsilon}}\right)^{4/(N-2)},\\
\norm{\vec{s}_0 - \wstar}_\infty 
&\leq \frac15 \wmin,\\
\norm{\vec{e}_0} 
&\leq \alpha^{N/2}.
\end{align*}
Let the step size satisfy $\eta\leq \frac{\alpha^N}{8N^2(\wmin)^{(3N-2)/N}}$. Then for any $T_3\leq t\leq T_4$,
\begin{align*}
    T_3 &= \frac{6}{\eta N^2 (\wmin)^{(2N-2)/N}}\log \frac{\wmin}{\epsilon},\\
    T_4 &= \frac{25}{N(N-1)\eta \wmin}\left(\frac{1}{\alpha^{(N-2)/2}} -
    \frac{1}{\alpha^{(N-2)/4}}\right),\\
\end{align*}
and any $i\in S$ we have
\begin{align*}
|s_{i,t}-w_i^\star| &\lesssim k\mu\max_{j\in S}B_j\vee B_i\vee\epsilon,\\
\norm{\vec{e}_t}_\infty &\leq \alpha^{N/4}.
\end{align*}
\label{prop: second stage}
\end{proposition}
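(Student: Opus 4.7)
The plan is to run a single induction on $t$ that maintains two invariants in parallel: every on-support coordinate $s_{i,t}$ contracts towards $w_i^\star$, and every off-support coordinate $|e_{i,t}|$ stays below $\alpha^{N/4}$. The multiplicative updates \eqref{eq:updates-for-non-negative} decouple across coordinates except through $\vec{p}_t$ and $\vec{b}_t$, whose magnitudes are controlled by the hypotheses of the proposition, so I would reduce the whole argument to two scalar recurrences --- one for a signal entry and one for an off-support entry --- and then handle each by the continuous-approximation strategy used in Section \ref{sec:proofingred} and in the companion Proposition \ref{prop: first stage}.

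For the off-support entries, the recurrence has the form $e_{t+1} = e_t(1 - 2N\eta \rho_e \, e_t^{(N-2)/N})^N$ with $|\rho_e| \le B$, and I would treat it as a forward-Euler discretization of the ODE $\dot e = 2N^2 \eta B \, e^{(2N-2)/N}$. Separating variables and integrating yields $e_t^{-(N-2)/N} \approx e_0^{-(N-2)/N} - 2N(N-2)\eta B t$, which after plugging in $e_0 \le \alpha^{N/2}$ and $B \le \wmin/200$ gives exactly the stated $T_4$ window before $e_t$ can reach $\alpha^{N/4}$. The discrete-to-continuous approximation error is absorbed by the step-size condition $\eta \le \alpha^N/[8N^2(\wmin)^{(3N-2)/N}]$, which keeps every multiplicative factor close to $1$ uniformly over the window.

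For the on-support entries, the inductive hypothesis keeps $s_{i,t}$ within a constant factor of $\wmin$, so $s_{i,t}^{(N-2)/N}$ is essentially frozen at order $(\wmin)^{(N-2)/N}$, and the recurrence behaves as a linear contraction $(s_{t+1}-w_i^\star) \approx (s_t-w_i^\star)(1 - cN^2\eta(\wmin)^{(2N-2)/N})$ as long as the displacement dominates the perturbation. Iterating this geometric contraction for $T_3 = O(N^{-2}\eta^{-1}(\wmin)^{-(2N-2)/N}\log(\wmin/\epsilon))$ steps drives $|s_{i,t}-w_i^\star|$ down to the perturbation floor. The final bound $k\mu \max_{j\in S} B_j \vee B_i \vee \epsilon$ then comes from unpacking $p_{i,t} = \sum_{j\neq i}(n^{-1}\X_j^\top\X_i)(s_{j,t}-w_j^\star) + \sum_{j\in S^c}(n^{-1}\X_j^\top\X_i)e_{j,t}$ using $\mu$-incoherence and $|S|\le k$, separating the within-support coupling (the $k\mu$ contribution) from the diagonal/noise piece in $b_{i,t}$.

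The main obstacle will be closing the induction: the perturbation hypotheses on $\vec{p}_t$ and $\vec{b}_t$ must remain valid throughout $[T_3, T_4]$ even though $\vec{s}_t-\wstar$ is shrinking while $\vec{e}_t$ is slowly growing, so consistency has to be checked at every step. Within that, the delicate calculation is bounding the gap between the discrete iteration and the ODE surrogate uniformly up to $T_4$, where $e_t$ approaches $\alpha^{N/4}$ and the nonlinear factor $e_t^{(N-2)/N}$ is no longer negligible; this is what forces the tight step-size requirement and ultimately determines how long the off-support mass can be held beneath the threshold while the signal is being resolved.
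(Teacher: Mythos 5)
Your plan follows the paper's proof essentially verbatim: the off-support bound is the sum-versus-integral comparison of Lemma \ref{lemma: bounded error} (taking $x_0=\alpha^{N/2}$, threshold $\sqrt{x_0}=\alpha^{N/4}$, and $B=\wmin/200$, which yields $T_4$), and the on-support bound is the geometric contraction of Lemma \ref{lemma: second stage convergence} followed by the per-coordinate refinement of Lemma \ref{lemma: overall iterates with B}, with the incoherence bound on $\vec{p}_t$ supplying the $k\mu\max_{j\in S}B_j$ term. The only step you do not mention is the algebraic verification that $T_3\le T_4$ (so the stopping window is nonempty), which is precisely where the second term in the upper bound on $\alpha$ is consumed.
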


\subsection{Technical Lemmas}
\label{sec: tech lemma}
There are several lemmas, which are about the coherence of the design matrices and the upper bound of subGaussian noise term.

\begin{lemma}
  \label{lemma:max-eroor}
  Suppose that $\frac{1}{\sqrt{n}}\X$ is a $n \times p$ matrix with $\ell_2$-normalized columns and satisfies $\mu$-coherence with $0 \leq \mu \leq 1$.
  Then for any vector $\vec{z} \in \mathbb{R}^{p}$ we have
  $$
    \norm{\frac{1}{n} \XtX \vec{z}}_{\infty} \leq p\norm{\vec{z}}_{\infty}.
  $$
\end{lemma}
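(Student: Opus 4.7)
The plan is to reduce everything to bounding the entries of the Gram matrix $\vec{G} := \frac{1}{n}\XtX$ and then apply a coordinate-wise triangle inequality. The point is that the hypotheses specify exactly the two pieces of information we need about $\vec{G}$: its diagonal entries are $1$ (from the normalization) and its off-diagonal entries are controlled by $\mu$ (from the coherence assumption).

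First I would unpack those two facts. Since the columns of $\frac{1}{\sqrt{n}}\X$ have unit $\ell_2$-norm, each diagonal entry satisfies $G_{ii} = \langle \X_i/\sqrt{n},\,\X_i/\sqrt{n}\rangle = 1$, and by the definition of $\mu$-coherence each off-diagonal entry satisfies $|G_{ij}| \le \mu$ for $i \ne j$. Next I would write out the $i$-th coordinate of the matrix-vector product as
\[
(\vec{G}\vec{z})_i \;=\; z_i \;+\; \sum_{j \ne i} G_{ij}\, z_j,
\]
and apply the triangle inequality, bounding each $|z_j|$ by $\norm{\vec{z}}_\infty$, to obtain
\[
|(\vec{G}\vec{z})_i| \;\le\; |z_i| + \mu(p-1)\norm{\vec{z}}_\infty \;\le\; \bigl(1 + \mu(p-1)\bigr)\norm{\vec{z}}_\infty.
\]
Finally, I would use the hypothesis $\mu \le 1$ to conclude $1 + \mu(p-1) \le 1 + (p-1) = p$, and take the supremum over $i$ to arrive at $\norm{\vec{G}\vec{z}}_\infty \le p\norm{\vec{z}}_\infty$.

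There is not really a hard step here; the proof is essentially immediate from the definition of coherence and the column normalization, with the triangle inequality doing all the work. The only subtlety worth flagging is the normalization: the hypothesis is phrased for $\frac{1}{\sqrt{n}}\X$ being unit-column, which is precisely what turns $\frac{1}{n}\XtX$ into the Gram matrix whose diagonal is $1$ and off-diagonal is $\mu$-bounded, so one should not accidentally work with $\XtX$ directly. I would also note (though not needed for the statement) that the sharper bound $\norm{\vec{G}\vec{z}}_\infty \le (1 + \mu(p-1))\norm{\vec{z}}_\infty$ falls out of the same argument, and this sharper form is what gets used in conjunction with the incoherence assumption $\mu \lesssim 1/(kr)$ when bounding the perturbation terms $\vec{p}_t$ and $\vec{b}_t$ later in the paper.
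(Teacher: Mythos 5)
Your proof is correct and follows essentially the same route as the paper: bound the diagonal entries of $\frac{1}{n}\XtX$ by $1$ (column normalization) and the off-diagonal entries by $\mu\le 1$ (coherence), then apply the triangle inequality coordinate-wise to get the factor $p$. The sharper intermediate bound $(1+\mu(p-1))\norm{\vec{z}}_\infty$ you note is a harmless refinement that the paper skips by bounding every entry by $1$ at once.
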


\begin{lemma}
  \label{lemma:incoherence-assumption-pt}
  Suppose that $\frac{1}{\sqrt{n}}\X$ is a $n \times p$ $\ell_2$-normalized matrix satisfying
  $\mu$-incoherence; that is $\frac{1}{n}|\vec{X}_i^\top \vec{X}_j| \le \mu, i\ne j$.
  For $k$-sparse vector $\vec{z} \in \mathbb{R}^{p}$, we have:
  \[
    \norm{\left( \frac{1}{n} \XtX - \matrix{I} \right)\vec{z}}_{\infty}
    \leq
    k \mu \norm{\vec{z}}_{\infty}.
  \]
\end{lemma}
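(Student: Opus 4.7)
\textbf{Proof proposal for Lemma~\ref{lemma:incoherence-assumption-pt}.} The plan is to prove the bound entry-by-entry by exploiting two facts: (i) the diagonal of $\frac{1}{n}\XtX - \matrixid$ vanishes because the columns of $\frac{1}{\sqrt{n}}\X$ are $\ell_2$-normalized, so $\frac{1}{n}\|\X_i\|_2^2 = 1$; and (ii) the off-diagonal entries are all bounded in absolute value by $\mu$ by the incoherence hypothesis. Combined, this means $M \coloneqq \frac{1}{n}\XtX - \matrixid$ is a hollow matrix whose entries have magnitude at most $\mu$.

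Next, I would fix an arbitrary coordinate $i \in \{1,\ldots,p\}$ and bound $|(M\vec{z})_i|$. Writing out the $i$-th entry,
\[
(M\vec{z})_i = \sum_{j\neq i} \frac{1}{n}\langle \X_i, \X_j\rangle\, z_j,
\]
triangle inequality gives $|(M\vec{z})_i| \le \sum_{j \neq i} \mu |z_j|$. Here the $k$-sparsity of $\vec{z}$ enters: letting $S = \mathrm{supp}(\vec{z})$ with $|S|\le k$, only indices $j\in S\setminus\{i\}$ contribute, giving at most $k$ nonzero summands (or $k-1$ if $i\in S$), each bounded by $\mu\|\vec{z}\|_\infty$. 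Thus $|(M\vec{z})_i| \le k\mu \|\vec{z}\|_\infty$, and taking the maximum over $i$ yields the claimed $\ell_\infty$ bound.

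There is essentially no obstacle here; the argument is a one-line consequence of the hollow structure of $M$ and the sparsity assumption. The only minor care needed is to verify that the diagonal of $\frac{1}{n}\XtX$ equals $\matrixid$ under the normalization convention used in the lemma statement (the columns of $\frac{1}{\sqrt n}\X$, not of $\X$, are unit-norm), which is immediate from $\frac{1}{n}\X_i^\top \X_i = \|\frac{1}{\sqrt n}\X_i\|_2^2 = 1$. This lemma plays a supporting role in the main proofs by bounding the perturbation term $\vec{p}_t$ defined in \eqref{eq:b-p-decomposition}, since $\vec{s}_t - \wstar$ is $k$-sparse along the gradient trajectory.
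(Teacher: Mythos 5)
Your proof is correct and matches the paper's argument, which simply asserts the entry-wise bound $\left|\left(\frac1n \XtX \vec{z}\right)_i - \vec{z}_i\right| \le k\mu\norm{\vec{z}}_\infty$ as "straightforward to verify"; your write-up supplies exactly the details (zero diagonal from column normalization, off-diagonal entries bounded by $\mu$, at most $k$ nonzero summands) that make that verification explicit.
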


\begin{lemma}
  \label{lemma:bounding-max-noise}
  Let $\frac{1}{\sqrt{n}} \X$ be a $n \times p$ matrix with $\ell_2$-normalized columns. Let $\xxi \in \mathbb{R}^{n}$ be a vector of independent
  $\sigma^{2}$-sub-Gaussian random variables. Then, with probability at
  least $1 - \frac{1}{8p^{3}}$
  $$
  \norm{\frac1n \Xt \xxi}_\infty \lesssim \sqrt{\frac{\sigma^{2} \log p}{n}}.
  $$
\end{lemma}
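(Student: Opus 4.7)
The plan is to prove this standard concentration result by combining a sub-Gaussian tail bound for each coordinate with a union bound across the $p$ coordinates of $\frac{1}{n}\Xt\xxi$.

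First, I would fix a coordinate $j \in \{1,\ldots,p\}$ and examine the scalar random variable $\frac{1}{n}\langle \X_j, \xxi\rangle = \frac{1}{n}\sum_{i=1}^n X_{ij}\xi_i$. Since the columns of $\frac{1}{\sqrt n}\X$ are $\ell_2$-normalized, we have $\|\X_j\|_2^2 = n$. By standard closure of the sub-Gaussian class under linear combinations of independent variables, $\frac{1}{n}\langle \X_j,\xxi\rangle$ is $\tau_j^2$-sub-Gaussian with parameter $\tau_j^2 = \sigma^2 \|\X_j\|_2^2/n^2 = \sigma^2/n$. Hence for any $t>0$,
\[
\Pr\!\left(\left|\tfrac{1}{n}\langle \X_j,\xxi\rangle\right| \ge t\right) \le 2\exp\!\left(-\tfrac{nt^2}{2\sigma^2}\right).
\]

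Second, I would apply a union bound over $j = 1,\ldots,p$:
\[
\Pr\!\left(\norm{\tfrac{1}{n}\Xt\xxi}_\infty \ge t\right) \le 2p\exp\!\left(-\tfrac{nt^2}{2\sigma^2}\right).
\]
To match the failure probability $1/(8p^3)$ in the statement, I would choose $t = C\sqrt{\sigma^2\log p/n}$ with $C$ a sufficiently large absolute constant. Substituting this $t$ gives $2p\exp(-C^2\log p/2) = 2p^{1 - C^2/2}$, so any $C$ with $C^2/2 \ge 4 + \log_p 16$ (e.g., $C = 4$ for $p \ge 2$) yields the bound $1/(8p^3)$, completing the proof up to absorbing constants into the $\lesssim$ notation.

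There is no real obstacle here: the argument is a textbook application of Hoeffding-type sub-Gaussian concentration plus a union bound, and is used solely to convert the deterministic $\ell_\infty$ error bound of Theorem~\ref{thm:general} into the probabilistic $\ell_2$ guarantee of Corollary~\ref{cor:l2-error}. The only mild care needed is keeping track of the normalization: because the columns of $\X$ (not $\X/\sqrt{n}$) have squared norm $n$, the per-coordinate sub-Gaussian parameter ends up as $\sigma^2/n$ rather than $\sigma^2$, which is exactly what yields the $\sqrt{\log p/n}$ rate.
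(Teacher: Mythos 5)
Your proof is correct and follows essentially the same route as the paper's: each coordinate of $\frac1n\Xt\xxi$ is sub-Gaussian with parameter $\sigma^2/n$ because the columns of $\X/\sqrt{n}$ are unit-normalized, and a union bound over the $p$ coordinates with a suitably chosen threshold gives the failure probability $1/(8p^3)$. The paper simply picks the explicit threshold $\epsilon = 2\sqrt{2\sigma^2\log(2p)}$ for the rescaled variable rather than tracking a generic constant $C$, but the argument is identical.
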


\subsection{Proof for Non-negative Signals}
Recall the notation
\[
\Phi(\wmax,\wmin,\epsilon,N) \coloneqq
\left(\frac18\right)^{2/(N-2)} 
\wedge
\left(\frac{(\wmax)^{(N-2)/N}}{\log \frac{\wmax}{\epsilon}} \right)^{2/(N-2)}
\wedge 
\left(\frac{(\wmin)^{(N-2)/N}}{ \log \frac{\wmin}{\epsilon}}\right)^{4/(N-2)},
\]
and 
\[
\zeta 
\coloneqq 
\frac15 \wmin 
\vee 
\frac{200}{n}\norm{\Xt\vec{\xxi}}_\infty 
\vee 
200\epsilon.
\]

\begin{theorem}
Suppose that $\wstar \succcurlyeq 0$ with $k\geq 1$ and $\X/\sqrt{n}$ satisfies $\mu$-incoherence with $\mu\leq C_\gamma/k r$, where $C_\gamma$ is some small enough constant. Take any precision $\epsilon >0$, and let the initialization be such that
\[
0<\alpha \leq \left(\frac{\epsilon}{p+1}\right)^{4/N}
\wedge \Phi(\wmax,\wmin,\epsilon,N) 
\]
For any iteration $t$ that satisfies
\[
\frac{1}{\eta N^2\zeta^{(2N-2)/N} \alpha^{N-2}}
\lesssim t \lesssim
\frac{1}{\eta N^2 \tau}
\left(\frac{1}{\alpha^{N-2}} 
-
\frac{1}{\zeta^{(N-2)/2}}\right),
\]
the gradient descent algorithm \eqref{eq:updates-for-non-negative} with step size $\eta\leq \frac{\alpha^N}{8N^2(\wmax)^{(3N-2)/N}}$ yields the iterate $\vec{w}_t$ with the following property:
\begin{equation}
|w_{t,i}-w_i^\star|\lesssim 
\begin{cases}
    \norm{\frac1n \Xt\vec{\xxi}}_\infty \vee \epsilon \quad &\text{if }i\in S \text{ and }\wmin\lesssim\norm{\frac1n \Xt\vec{\xxi}}_\infty \vee \epsilon, \\
    \left|\frac1n (\Xt\vec{\xxi})_i\right|
    \vee
    k\mu\norm{\frac1n \Xt\vec{\xxi}\odot \id{S}}_\infty 
    \vee
    \epsilon
    \quad &\text{if }i\in S \text{ and }\wmin \gtrsim\norm{\frac1n \Xt\vec{\xxi}}_\infty \vee \epsilon, \\
    
    \alpha^{N/4}\quad &\text{if }i\notin S.
\end{cases}
\label{eq:non-negative-error-bound}
\end{equation}
\label{thm:non-negative}
\end{theorem}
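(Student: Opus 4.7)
The argument is a two-phase reduction to Propositions \ref{prop: first stage} and \ref{prop: second stage}, with Lemmas \ref{lemma:max-eroor}--\ref{lemma:bounding-max-noise} supplying the coherence/noise bounds needed to verify their hypotheses. Throughout, the algorithm is \eqref{eq:updates-for-non-negative}, so at this level the only nontrivial work is checking the assumed bounds on $\vec{p}_t$ and $\vec{b}_t$ at every iterate and then matching up the iteration windows $[T_1,T_2]$ and $[T_3,T_4]$ with the window stated in the theorem.

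\textbf{Phase 1 (coarse recovery).} I would invoke Proposition \ref{prop: first stage} with the tolerance $\zeta$ defined in the theorem, which by construction dominates both $\tfrac{200}{n}\norm{\Xt\xxi}_\infty$ and $200\epsilon$. Splitting $\vec{b}_t = \tfrac1n \XtX \vec{e}_t - \tfrac1n \Xt\xxi$, the second term is bounded by the definition of $\zeta$, while for the first Lemma \ref{lemma:max-eroor} gives $\norm{\tfrac1n \XtX \vec{e}_t}_\infty \leq p\norm{\vec{e}_t}_\infty$. Coupled with the induction hypothesis $\norm{\vec{e}_t}_\infty \leq \alpha^{N/2}$ and the initialization constraint $\alpha \leq (\epsilon/(p+1))^{4/N}$, this contribution sits far below $\zeta$, so the requirement $\norm{\vec{b}_t}_\infty \leq C_b\zeta - \alpha^{N/4}$ is preserved at iteration $t+1$. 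For $\vec{p}_t$, Lemma \ref{lemma:incoherence-assumption-pt} gives $\norm{\vec{p}_t}_\infty \leq k\mu\norm{\vec{s}_t-\wstar}_\infty$, and the assumption $k\mu \lesssim 1/r$ matches the required $\gamma = C_\gamma \wmin/\wmax$. Closing the induction, after at most $T_1$ iterations we have $\norm{\vec{s}_{T_1}-\wstar}_\infty \leq \zeta$, while $\norm{\vec{e}_t}_\infty \leq \alpha^{N/2}$ uniformly on $[0,T_2]$.

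\textbf{Phase 2 (refinement) and case split.} If $\wmin \lesssim \norm{\tfrac1n\Xt\xxi}_\infty \vee \epsilon$, then $\zeta$ is already of the desired order and Phase 1 alone delivers the first case of \eqref{eq:non-negative-error-bound}; the algorithm can early-stop anywhere in $[T_1,T_2]$. Otherwise I would feed the Phase 1 output into Proposition \ref{prop: second stage}: the hypothesis $\norm{\vec{s}_0 - \wstar}_\infty \leq \wmin/5$ reduces to $\zeta \leq \wmin/5$, which holds exactly in this regime. To produce the sharper coordinate-wise bound, I would re-split $\vec{b}_t$ using Lemma \ref{lemma:incoherence-assumption-pt} restricted to $S$, taking $B_i = \bigl|\tfrac1n(\Xt\xxi)_i\bigr| + k\mu\norm{\tfrac1n\Xt\xxi\odot\id{S}}_\infty$ up to an $\epsilon$ slack, while $\norm{\vec{p}_t}_\infty \leq k\mu\norm{\vec{s}_t - \wstar}_\infty$ handles the signal perturbation. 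The proposition then gives $|s_{t,i}-w_i^\star| \lesssim k\mu \max_{j\in S} B_j \vee B_i \vee \epsilon$ for $t \in [T_1+T_3,\, T_1+T_4]$ together with $\norm{\vec{e}_t}_\infty \leq \alpha^{N/4}$, supplying the second and third cases. Combining $T_1+T_3$ and $T_1+T_4$ up to absolute constants matches the iteration window in the theorem.

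\textbf{Main obstacle.} The growth-rate analysis and the continuous approximation described in Section \ref{sec:proofingred} are absorbed into the two propositions, so the remaining delicate point is verifying the \emph{shifted} threshold $C_b\zeta - \alpha^{N/4}$ uniformly during Phase 1. This is precisely where the constraint $\alpha \leq (\epsilon/(p+1))^{4/N}$ pays off: it forces $p\alpha^{N/4} \ll \epsilon \leq \zeta/200$, leaving the needed slack in spite of the large prefactor $p$ from Lemma \ref{lemma:max-eroor}. Once that inequality is in hand, the induction closes mechanically and the two-phase stitching is routine bookkeeping.
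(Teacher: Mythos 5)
Your proposal follows essentially the same two-phase argument as the paper's proof: verify the hypotheses on $\vec{b}_t$ and $\vec{p}_t$ via Lemmas \ref{lemma:max-eroor} and \ref{lemma:incoherence-assumption-pt}, run Proposition \ref{prop: first stage} down to accuracy $\zeta$, then case-split on whether $\zeta$ is already noise-dominated before invoking Proposition \ref{prop: second stage}; you also correctly identify the role of $\alpha \le (\epsilon/(p+1))^{4/N}$ in absorbing the $p\norm{\vec{e}_t}_\infty$ term into the $C_b\zeta - \alpha^{N/4}$ slack. The only slip is cosmetic bookkeeping: the upper end of the final stopping window should be $T_2+T_4$ as in \eqref{eq:T2+T4}, not $T_1+T_4$.
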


\begin{proof}
Let 
\[
\zeta \coloneqq \frac15 \wmin \vee 
\frac{2}{C_b} \norm{\frac1n \Xt \xxi}_\infty \vee
\frac{2}{C_b}\epsilon,
\]
where $C_b$ is some small enough positive constant that will be explicitly derived later. Also by the requirement of the coherence of the design matrix, we have
\[
\norm{\vec{p}_t}_\infty \leq 
\frac{C_\gamma}{\wmax/\wmin} \norm{\vec{s}_t-\wstar}_\infty.
\]

Setting 
\[
\alpha \leq  \left(\frac{\epsilon}{p+1}\right)^{4/N}
\wedge \left(\frac18\right)^{2/(N-2)} 
\wedge
\left(\frac{(\wmax)^{(N-2)/N}}{\log \frac{\wmax}{\epsilon}} \right)^{2/(N-2)}
\wedge 
\left(\frac{(\wmin)^{(N-2)/N}}{ \log \frac{\wmin}{\epsilon}}\right)^{4/(N-2)}.
\]

As long as $\norm{\vec{e}_t}_\infty \leq \alpha^{N/4}$ we have 
\begin{align*}
    \norm{\vec{b}_t}_\infty + \alpha^{N/4}
    &\leq \norm{\frac1n \Xt \epsilon}_\infty + \norm{\frac1n \XtX \vec{e}_t}_\infty + \alpha^{N/4} \\
    &\leq 2 \left(\norm{\frac1n \Xt \epsilon}_\infty \vee (p\norm{\vec{e}_t}_\infty\right) + \alpha^{N/4})\\
    &\leq 2 \left(\norm{\frac1n \Xt \epsilon}_\infty \vee 
    (p+1)\alpha^{N/4}\right)\\
    &\leq C_b \frac{2}{C_b} \left(\norm{\frac1n \Xt \epsilon}_\infty \vee \epsilon\right) \\
    &\leq C_b \zeta.
\end{align*}
where the second inequality is from Lemma \ref{lemma:max-eroor}.
Further by Lemma \ref{lemma:incoherence-assumption-pt}, we also have
\[
\norm{\vec{p}_t}_\infty \leq 
\frac{C_\gamma}{\wmax/\wmin} \norm{\vec{s}_t-\wstar}_\infty.
\]

Therefore, both sequences $(\vec{b}_t)_{t\geq0}$ and $(\vec{p}_t)_{t\geq0}$ satisfy the assumptions of Proposition \ref{prop: first stage} conditionally on $\norm{\vec{e}_t}_\infty$ staying below $\alpha^{N/4}$. If $\zeta \geq \wmax$, at $t=0$, we have already have 
\[
\norm{\vec{s}_0-\wstar}\leq \zeta.
\]
Otherwise, applying Proposition \ref{prop: first stage}, after
\[
T_1=\frac{75}{16\eta N^2 \zeta^{(2N-2)/N}} \log \frac{|\wmax - \alpha^N|}{\epsilon}
    +
    \frac{15}{8N(N-2)\eta \zeta \alpha^{(N-2)}},
\]
iterations and before
\[
T_2 = \frac{5}{N(N-1)\eta\zeta}\left(\frac{1}{\alpha^{(N-2)}}-\frac{1}{\alpha^{(N-2)/2}}\right)
\]
iterations, we have
\begin{align*}
\norm{\vec{s}_{T_1}-\wstar}&\leq \zeta,\\
\norm{\vec{e}_{T_1}}_\infty&\leq \alpha^{N/2}.
\end{align*}

If $\frac15 \wmin \leq \frac{2}{C_b} \norm{\frac1n \Xt \xxi}_\infty \vee \frac{2}{C_b}\epsilon$, then we are done.

If $\frac15 \wmin > \frac{2}{C_b} \norm{\frac1n \Xt \xxi}_\infty \vee \frac{2}{C_b}\epsilon$, we have $\zeta = \frac15 \wmin$. Choose $C_b+C_\gamma \leq \frac1{40}$ as we have in Proposition \ref{prop: first stage}. After $T_1$ iterations, we have
\[
\norm{\vec{b}_t}_\infty + \norm{\vec{p}_t}_\infty \leq C_b \frac15 \wmin + \frac{C_\gamma}{\wmax/\wmin} \frac15 \wmin \leq  (C_b+C_\gamma)\frac15 \wmin \leq \frac1{200} \wmin.
\]

Now all the assumptions of Proposition \ref{prop: second stage} are satisfied. To further reduce $\norm{\vec{s}_t-\wstar}_\infty$ from $\frac15 \wmin$ to $\mathcal{O}(\norm{\frac1n \Xt\xxi})$, we apply Proposition \ref{prop: second stage} and obtain that after 
\[
T_3 = \frac{6}{\eta N^2 (\wmin)^{(2N-2)/N}}\log \frac{\wmin}{\epsilon}
\]
iterations and before
\[
T_4 = \frac{25}{N(N-1)\eta \wmin}\left(\frac{1}{\alpha^{(N-2)/2}} -
    \frac{1}{\alpha^{(N-2)/4}}\right)
\]
iterations, we have for any $i\in S$,
\begin{align*}
|s_{t,i}-w_i^\star| &\lesssim k\mu\max_{j\in S}B_j\vee B_i\vee\epsilon,\\
\norm{\vec{e}_t}_\infty &\leq \alpha^{N/4}.
\end{align*}

We use $\mathbbm{1}{\{\cdot\}}$ to denote the indicator function. 
Therefore, the total number of iterations needed is 

\begin{equation}
\begin{aligned}
T_1 + T_3
&=\frac{75}{16\eta N^2 \zeta^{(2N-2)/N}} \log \frac{|\wmax - \alpha^N|}{\epsilon}
+
\frac{15}{8N(N-2)\eta \zeta \alpha^{(N-2)}}\\
&+
\frac{6}{\eta N^2 (\wmin)^{(2N-2)/N}}\log \frac{\wmin}{\epsilon} 
\mathbbm{1}{\left\{
\frac15 \wmin 
> 
\frac{2}{C_b}\norm{\frac1n \Xt \vec{\xxi}}_\infty 
\vee
\frac{2}{C_b}\epsilon
\right\}}
\end{aligned}
\label{eq:T1+T3}
\end{equation}

and the upper bound for the total number of iterations would be

\begin{equation}
\begin{aligned}
T_2 + T_4
&=\frac{5}{N(N-1)\eta\zeta}\left(\frac{1}{\alpha^{(N-2)}}-\frac{1}{\alpha^{(N-2)/2}}\right)\\
&+
\frac{25}{N(N-1)\eta \wmin}
\left(\frac{1}{\alpha^{(N-2)/2}} -
\frac{1}{\alpha^{(N-2)/4}}\right)
\mathbbm{1}{\left\{
\frac15 \wmin 
> 
\frac{2}{C_b}\norm{\frac1n \Xt \vec{\xxi}}_\infty 
\vee
\frac{2}{C_b}\epsilon
\right\}}
\end{aligned}
\label{eq:T2+T4}
\end{equation}
\end{proof}

\section{Multiplicative Update Sequences with General Order \texorpdfstring{$N$}{Lg}}
\label{sec:multiplicative-updates}
In this section, we analyze the one-dimensional updates that exhibits the similar dynamics to our gradient descent algorithm. The lemmas we derive will be assembled together to prove Proposition \ref{prop: first stage} and \ref{prop: second stage}. The whole framework is similar to \cite{vaskevicius2019implicit}. However, the continuous approximation plays an important role to deal with $N>2$, and the detailed derivation differs from \cite{vaskevicius2019implicit} a lot, especially for Lemma \ref{lemma: bounded error}, \ref{lemma: begin_noB} and \ref{lemma:neg growth}.

\subsection{Error Growth}
\label{sec: error}
\begin{lemma}
Consider the setting of updates given in equations \eqref{eq:updates-equation-using-b-p-notation}. Suppose that $\norm{\vec{e}_t}_\infty \leq \frac18 w_{min}^\star$ and there exists some $B\in\mathbb{R}$ such that for all $t$ we have $\norm{\vec{b}_t}_\infty + \norm{\vec{p}_t}_\infty \leq B$. Then, if $\eta \leq \frac{1}{12 (\wmax + B)}$ for any $t\geq 0$ we have 
\[
\norm{\vec{e}_t}_\infty \leq \norm{\vec{e}_0}_\infty \prod_{i=1}^{t-1} (1+2N\eta (\norm{\vec{b}_i}_\infty + \norm{\vec{p}_i}_\infty)\norm{\vec{e}_i}_\infty^{(N-2)/N})^N
\]
or in the other form,
\[
\norm{\vec{e}_{t+1}}_\infty \leq \norm{\vec{e}_t}_\infty  (1+2N\eta (\norm{\vec{b}_t}_\infty + \norm{\vec{p}_t}_\infty)\norm{\vec{e}_t}_\infty^{(N-2)/N})^N.
\]
\label{lemma:vector-error-growth}
\end{lemma}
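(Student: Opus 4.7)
I would prove the one-step estimate
\[
\|\vec{e}_{t+1}\|_\infty \le \|\vec{e}_t\|_\infty\bigl(1+2N\eta(\|\vec{b}_t\|_\infty+\|\vec{p}_t\|_\infty)\|\vec{e}_t\|_\infty^{(N-2)/N}\bigr)^N
\]
and then iterate it in $t$. The single-step bound is proved coordinate by coordinate. Fix an index $i$ and split according to whether $i\in S^c$, $i\in S^+$, or $i\in S^-$, recalling that by definition
$e_{t,i}=w^+_{t,i}-w^-_{t,i}$ on $S^c$, $e_{t,i}=-w^-_{t,i}$ on $S^+$, and $e_{t,i}=w^+_{t,i}$ on $S^-$. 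In every case I would use the step-size hypothesis $\eta\le 1/(12(\wmax+B))$ together with $\|\vec{e}_t\|_\infty\le\tfrac18\wmin$ to guarantee that each factor inside a parenthetical base of the form $1\pm 2N\eta\,c\,z^{(N-2)/N}$ lies in, say, $[1/2,3/2]$; this is what makes the discrete update stable and lets the multiplicative bound propagate without sign catastrophes.

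For $i\in S^+$ I would analyze $w^-_{t+1,i}=w^-_{t,i}(1+2N\eta(s_{t,i}-w^\star_i+p_{t,i}+b_{t,i})(w^-_{t,i})^{(N-2)/N})^N$. Writing $|e_{t,i}|=w^-_{t,i}$, the multiplier $(w^-_{t,i})^{(N-2)/N}=|e_{t,i}|^{(N-2)/N}$ is already of the desired form, so the work is to bound the linear-in-$\eta$ coefficient by $\|\vec{b}_t\|_\infty+\|\vec{p}_t\|_\infty$. I would split into the sub-cases $s_{t,i}\le w^\star_i$ (where $s_{t,i}-w^\star_i\le 0$ only shrinks the factor and the bound is immediate) and $s_{t,i}>w^\star_i$, where I would use the hypothesis $\|\vec{e}_t\|_\infty\le\tfrac18\wmin$ together with the step-size inequality to absorb the overshoot, arguing that the signal $s_{t,i}$ cannot have strayed far beyond $w^\star_i$ in a single step of size $\eta\le 1/(12(\wmax+B))$. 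The case $i\in S^-$ is completely symmetric after switching the roles of $\vec{u}$ and $\vec{v}$.

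For $i\in S^c$ the simplification $s_{t,i}-w^\star_i=0$ kicks in, and both $w^+_{t+1,i}$ and $w^-_{t+1,i}$ are governed by the same scalar $c_{t,i}:=p_{t,i}+b_{t,i}$ with opposite signs. After (WLOG) assuming $c_{t,i}\ge 0$, so that $w^+$ weakly decreases and $w^-$ weakly increases, I would write
\[
e_{t+1,i}=g_+(w^+_{t,i})-g_-(w^-_{t,i}),\qquad g_\pm(z):=z(1\mp 2N\eta c_{t,i}z^{(N-2)/N})^N,
\]
and estimate $|e_{t+1,i}|$ in terms of $|e_{t,i}|=|w^+_{t,i}-w^-_{t,i}|$ by a mean-value/Lipschitz argument: on the interval between $w^+_{t,i}$ and $w^-_{t,i}$, both $g_+'$ and $g_-'$ are controlled by $(1+2N\eta c_{t,i}z^{(N-2)/N})^N$ evaluated at the larger of the two points, which is itself at most $\|\vec{e}_t\|_\infty$ in magnitude only for coordinates achieving the max — otherwise the bound carries through by monotonicity in $z$ enforced by the step-size condition. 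Taking the max over $i$ and iterating then gives the claimed product formula.

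The \emph{main obstacle} will be the $i\in S^c$ case: because $e_{t,i}$ is the difference $w^+_{t,i}-w^-_{t,i}$ of two non-negative quantities that each separately evolve, one cannot write a self-contained scalar recurrence for $|e_{t,i}|$ and must instead control the Lipschitz constants of $g_+$ and $g_-$ uniformly. A secondary difficulty is the treatment of the signal-overshoot regime for $i\in S^\pm$; here the \textbf{stability hypothesis} $\|\vec{e}_t\|_\infty\le\tfrac18\wmin$ and the \textbf{conservative step size} involving $\wmax+B$ combine to rule out any overshoot that would prevent the linear coefficient from being absorbed into $\|\vec{b}_t\|_\infty+\|\vec{p}_t\|_\infty$.
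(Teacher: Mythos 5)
Your plan diverges from the paper's proof, and the divergence is where the trouble is. The paper's argument for this lemma is a three-line computation confined to $S^c$: there $s_{t,i}-w^\star_i=0$, so the update is driven only by $p_{t,i}+b_{t,i}$, and the paper bounds the resulting \emph{non-negative multiplicative} sequences directly by $(1+2N\eta(\norm{\vec{b}_t}_\infty+\norm{\vec{p}_t}_\infty)(\cdot)^{(N-2)/N})^N$ — i.e.\ it tracks the magnitudes of $w^+_{t,i}$ and $w^-_{t,i}$ (each of which genuinely satisfies a one-factor recurrence), never their difference. Your mean-value/Lipschitz argument on $e_{t+1,i}=g_+(w^+_{t,i})-g_-(w^-_{t,i})$ cannot close: since $g_+\neq g_-$, any such estimate produces, besides the Lipschitz term $L\,|w^+_{t,i}-w^-_{t,i}|$, the cross term $|g_+(b)-g_-(b)|\approx 4N^2\eta\,|c_{t,i}|\,b^{(2N-2)/N}$ with $b=w^-_{t,i}$, which is controlled by the \emph{size} of $w^-_{t,i}$ and not by $|e_{t,i}|$. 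Concretely, if $w^+_{t,i}=w^-_{t,i}=\beta>0$ with $c_{t,i}\neq0$, then $e_{t,i}=0$ but $e_{t+1,i}\neq0$, so the coordinatewise inequality you are aiming for is simply false; the only way to make the lemma's conclusion meaningful (and the way the paper uses it downstream) is to interpret $\norm{\vec{e}_t}_\infty$ on $S^c$ as dominating $\max(w^+_{t,i},w^-_{t,i})$ and propagate that maximum multiplicatively.

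The $S^\pm$ coordinates are a second gap, though you at least flag them. The paper does not handle them inside this lemma at all: they are deferred to Lemma~\ref{lemma: negative}, whose proof (via the integral comparison in Lemma~\ref{lemma:neg growth}) shows that during any overshoot episode $u_{t}^N\ge x^\star$ the opposite component has already collapsed to $v_t^N\le\frac12\alpha^N$, so the iterations where the coefficient $s_{t,i}-w^\star_i+p_{t,i}+b_{t,i}$ exceeds $\norm{\vec{b}_t}_\infty+\norm{\vec{p}_t}_\infty$ cost at most an overall factor of $2$ that is absorbed by the $\frac12$. Your proposed fix — that $\eta\le 1/(12(\wmax+B))$ prevents $s_{t,i}$ from "straying far beyond $w^\star_i$ in a single step" — does not work: overshoot of size up to order $B$ can persist for many iterations, and a positive $s_{t,i}-w^\star_i$ cannot be absorbed pointwise into $\norm{\vec{b}_t}_\infty+\norm{\vec{p}_t}_\infty$ without degrading the constant in the claimed product. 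You need either the extra hypothesis on $\alpha$ and the $u$--$v$ symmetry argument of Lemma~\ref{lemma:neg growth}, or you must restrict this lemma's conclusion to $\id{S^c}\odot\vec{e}_{t+1}$ as the paper's own proof implicitly does.
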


\begin{proof}
From the equations above, we get 
\begin{align*}
    \id{S^c}\odot \vec{e}_{t+1} &= \id{S^c} \odot \vec{w}_t \odot(\id{} -2N\eta (\vec{s}_t - \wstar + \vec{p}_t + \vec{b}_t)\odot \vec{w}_t^{(N-2)/N})^N\\
    & = \id{S^c}\odot \vec{e}_t\odot(\id{S^c} - \id{S^c}2N\eta (\vec{s}_t - \wstar + \vec{p}_t + \vec{b}_t)\odot \vec{e}_t^{(N-2)/N})^N\\
    & = \id{S^c}\odot \vec{e}_t\odot(\id{} - 2N\eta ( \vec{p}_t + \vec{b}_t)\odot \vec{e}_t^{(N-2)/N})^N\\
\end{align*}
and hence 
\[
\norm{\id{S^c}\odot \vec{e}_{t+1}}_{\infty} \leq \norm{\vec{e}_t}_\infty (1+2N\eta(\norm{\vec{b}_t}_\infty + \norm{\vec{p}_t}_\infty)\norm{\vec{e}_t}_\infty^{(N-2)/N})^N.
\]

\end{proof}
When we have the bound for $\norm{\vec{b}_t}_\infty + \norm{\vec{p}_t}_\infty$, we can control the size of $\norm{\vec{e}_t}_\infty$ by the following lemma.

\begin{lemma}
\label{lemma: bounded error}
Let $(b_t)_{t\geq0}$ be a sequence such that for $t\geq0$ we have $|b_t|\leq B$ for some $B>0$. Let the step size satisfy $\eta\leq \frac{1}{4N(N-1)B x_0^{(N-2)/(2N)}}$ and consider a one-dimensional sequence $(x_t)_{t\geq0}$ given by
\begin{align*}
    0<& x_0<1,\\
    x_{t+1} =& x_t(1+2N\eta b_t x_t^{(N-2)/N})^N.
\end{align*}
Then for any $t < \frac{1}{8N(N-1)\eta B}\left(\frac{1}{x_0^{(N-2)/N}} -
    \frac{1}{x_0^{(N-2)/2N}}\right)$ we have
\[
x_t\leq \sqrt{x_0}.
\]
\begin{proof}
We start with studying the larger increasing rate of the updates,
\begin{align*}
    x_{t+1} &= x_t ( 1+ 2N\eta b_t x_t^{(N-2)/N})^N\\
    &\leq x_t ( 1+ 2N\eta B x_t^{(N-2)/N})^N \\
    &\leq x_t
    \left(1+\frac{2N^2\eta Bx_t^{(N-2)/N}}{1-2(N-1)N\eta x_t^{(N-2)/N}}\right)\\
    &\leq x_t(1+4N^2\eta Bx_t^{(N-2)/N}),
\end{align*}
where the second inequality is obtained by $(1+x)^r \leq 1+\frac{rx}{1-(r-1)x}$ for $x\in(0,\frac1{r-1})$, and the last inequality is by the requirement of step size $\eta$. 
Therefore, to achieve to some value $x_T$, the number of iterations needed is lower bounded as
\[
T \geq 
\sum_{t=0}^{T-1} \frac{x_{t+1}-x_t}{4N^2 \eta B x_t^{(2N-2)/N}}.
\]

We aim at the number of iterations for $\sqrt{x_0}$, and we denote $T$ as the maximal number of iterations, i.e. $x_T < \sqrt{x_0}$ and $x_{T+1}\geq \sqrt{x_0}$. Therefore, 
\[
\frac{\sqrt{x_0}-x_T}{4N^2 \eta B x_T^{(2N-2)/N}} \leq \frac{x_{T+1}-x_{T}}{4N^2 \eta B x_T^{(2N-2)/N}}\leq 1.
\]
And for $T$, we derive the lower bound as 
\begin{align*}
    T \geq \sum_{t=0}^{T-1} \frac{x_{t+1}-x_t}{4N^2 \eta B x_t^{(2N-2)/N}} &\geq \frac{1}{4N^2 \eta B}\sum_{t=0}^{T-1} \int_{x_t}^{x_{t+1}} \frac{1}{x^{(2N-2)/N}} dx\\
    &\geq  \frac{1}{4N^2 \eta B} \int_{x_0}^{x_T}\frac{1}{x^{(2N-2)/N}} dx\\
    &\geq  \frac{1}{4N^2 \eta B} \int_{x_0}^{\sqrt{x_0}}\frac{1}{x^{(2N-2)/N}} dx - \frac{1}{4N^2 \eta B} \int_{x_T}^{\sqrt{x_0}}\frac{1}{x^{(2N-2)/N}} dx\\
    & > \frac{1}{4N^2 \eta B} \left(-\frac{N}{2N-2} \frac{1}{x^{(N-2)/N}}\right)\Biggr|_{x_0}^{\sqrt{x_0}} - 1\\
    & = \frac{1}{8N(N-1)\eta B}\left(\frac{1}{x_0^{(N-2)/N}} -
    \frac{1}{x_0^{(N-2)/2N}}\right)-1.
\end{align*}
Therefore, we know that for any $t\leq \frac{1}{8N(N-1)\eta B}\left(\frac{1}{x_0^{(N-2)/N}} -
    \frac{1}{x_0^{(N-2)/2N}}\right)-1$, we have $x_t\leq\sqrt{x_0}$. Since in practice $t$ is chosen as an integer, without loss of generality, we simply the requirement as $t< \frac{1}{8N(N-1)\eta B}\left(\frac{1}{x_0^{(N-2)/N}} -
    \frac{1}{x_0^{(N-2)/2N}}\right)$.
\end{proof}
\end{lemma}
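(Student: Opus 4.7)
The plan is to establish a worst-case upper bound on the growth of $x_t$, convert the resulting discrete recurrence into an integral via a Riemann-sum comparison, and then compute that integral to read off the iteration count. Since $|b_t|\le B$ and a negative $b_t$ only decreases $x_t$, the worst case for an upper bound on $x_t$ is to replace $b_t$ by $B$ and treat the sequence as monotonically nondecreasing.

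First, I would linearize the $N$-th power using the elementary inequality $(1+z)^N \le 1 + \tfrac{Nz}{1-(N-1)z}$, valid for $z\in(0,1/(N-1))$. Applied with $z = 2N\eta B\, x_t^{(N-2)/N}$, one uses the step-size assumption $\eta\le \tfrac{1}{4N(N-1)B x_0^{(N-2)/(2N)}}$ together with the bootstrap invariant $x_t \le \sqrt{x_0}$ (recall $x_0<1$, so $x_t^{(N-2)/N}\le x_0^{(N-2)/(2N)}$) to conclude that $(N-1)z \le 1/2$, and hence $\tfrac{N}{1-(N-1)z}\le 2N$. This gives the cleaner one-step bound
\[
x_{t+1} \le x_t\bigl(1+4N^2\eta B\, x_t^{(N-2)/N}\bigr),
\qquad\text{i.e.}\qquad
\frac{x_{t+1}-x_t}{x_t^{(2N-2)/N}} \le 4N^2\eta B.
\]

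Second, I would argue by a first-exit/contradiction scheme: let $T$ be the smallest index with $x_{T+1}\ge \sqrt{x_0}$. Summing the displayed inequality for $t=0,\dots,T-1$ and using that $x\mapsto x^{-(2N-2)/N}$ is decreasing on $(0,1)$, so $\tfrac{x_{t+1}-x_t}{x_t^{(2N-2)/N}} \ge \int_{x_t}^{x_{t+1}} x^{-(2N-2)/N}\,dx$, produces
\[
4N^2\eta B\cdot T \;\ge\; \int_{x_0}^{x_T} x^{-(2N-2)/N}\,dx.
\]
To replace $x_T$ by $\sqrt{x_0}$, split $\int_{x_0}^{\sqrt{x_0}} = \int_{x_0}^{x_T} + \int_{x_T}^{\sqrt{x_0}}$ and dominate the last integral by the last-step quantity $\tfrac{\sqrt{x_0}-x_T}{x_T^{(2N-2)/N}} \le \tfrac{x_{T+1}-x_T}{x_T^{(2N-2)/N}} \le 1$, which is another application of the same one-step bound at $t=T$.

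Third, computing the antiderivative $\int x^{-(2N-2)/N}\,dx = -\tfrac{N}{N-2}\, x^{-(N-2)/N}$ and evaluating between $x_0$ and $\sqrt{x_0}$ yields
\[
T \;\ge\; \frac{1}{8N(N-1)\eta B}\Bigl(\tfrac{1}{x_0^{(N-2)/N}} - \tfrac{1}{x_0^{(N-2)/(2N)}}\Bigr) - 1,
\]
so any integer $t$ strictly below the claimed threshold has not yet triggered the first-exit event, giving $x_t\le\sqrt{x_0}$ as stated (the stray $-1$ is absorbed by integrality of $t$). The main obstacle is the coupling between the step-size assumption and the invariant: the linearization $(1+z)^N\le 1+2Nz$ is valid only while $x_t$ stays below $\sqrt{x_0}$, so the argument must be phrased so that this invariant functions simultaneously as a bootstrap hypothesis and as the conclusion; the first-exit framing handles this cleanly without needing a separate induction.
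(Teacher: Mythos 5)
Your proposal is correct and follows essentially the same route as the paper's proof: the same linearization $(1+z)^N\le 1+\tfrac{Nz}{1-(N-1)z}$ leading to the one-step bound $x_{t+1}\le x_t(1+4N^2\eta B x_t^{(N-2)/N})$, the same first-exit definition of $T$, the same Riemann-sum comparison with $\int x^{-(2N-2)/N}\,dx$, and the same handling of the final partial step via the bound $\le 1$. Your explicit remark that the invariant $x_t\le\sqrt{x_0}$ must serve as a bootstrap hypothesis for the step-size condition is a point the paper leaves implicit, but the argument is otherwise the same.
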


\subsection{Understanding 1-d Case}
\label{sec: 1d}
\subsubsection{Basic Setting}
In this subsection we analyze one-dimensional sequences with positive target corresponding to gradient descent updates without any perturbations. That is, $\vec{w}_t = \vec{u}_t^N$, $\frac1n \XtX = \matrixid$ and ignoring the error sequences $(\vec{b}_t)_{t\geq 0}$ and $(\vec{p}_t)_{t\geq 0}$. Hence, we will look at one-dimensional sequences of the form 
\begin{equation}
\begin{aligned}
0<x_0 &= \alpha^N < x^\star\\
x_{t+1} &= x_t(1-2N\eta(x_t-x^\star)x_t^{(N-2)/N})^N.
\end{aligned}
\end{equation}
\begin{lemma}[Iterates behave monotonically]
Let $\eta>0$ be the step size and suppose the updates are given by
\[
x_{t+1} = x_t(1-2N\eta(x_t-x^\star)x_t^{(N-2)/N})^N.
\]
Then the following holds
\begin{enumerate}
    \item If $0<x_0\leq x^\star$ and $\eta \leq \frac{1}{2N(2N-2)(x^\star)^{(2N-2)/N}}$ then for any $t>0$ we have $x_0 \leq x_{t-1}\leq x_t\leq x^\star$.
    \item If $x^\star \leq x_0 \leq \frac32 x^\star$ and $\eta \leq \frac{1}{6N^2 (x^\star)^{(2N-2)/N}}$ then for any $t\geq0$ we have $x^\star \leq x_t \leq x_{t-1} \leq \frac{3}{2}x^\star$.
\end{enumerate}
\label{lemma: mono_noB}
\end{lemma}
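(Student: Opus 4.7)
I would use the change of variables $u_t = x_t^{1/N}$, which transforms the unwieldy recursion on $x_t$ into the first-order recursion
\[
u_{t+1} = u_t - 2N\eta(u_t^N - x^\star)u_t^{N-1},
\]
with $x_t = u_t^N$ and $u^\star := (x^\star)^{1/N}$. This is just gradient descent on the $u$-parametrization $x = u^N$, and it avoids having to reason directly about the $N$-th power of a signed factor. Using the polynomial factorization $u_t^N - (u^\star)^N = (u_t - u^\star)\sum_{j=0}^{N-1} u_t^{N-1-j}(u^\star)^j$, the update rewrites as
\[
u_{t+1} - u^\star = (u_t - u^\star)\Bigl(1 - 2N\eta\,u_t^{N-1}\sum_{j=0}^{N-1} u_t^{N-1-j}(u^\star)^j\Bigr),
\]
and the whole argument reduces to showing that the bracketed factor lies in $[0,1]$ under the stated step-size bound.

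\textbf{Case 1.} I induct on $t$ with invariant $u_{t-1} \le u_t \le u^\star$. Monotonicity $x_{t-1}\le x_t$ is immediate because $x_t \le x^\star$ makes the factor $1 - 2N\eta(x_t-x^\star)x_t^{(N-2)/N}$ at least one. For the upper bound $u_{t+1}\le u^\star$, I use $u_t \le u^\star$ to bound each summand by $(u^\star)^{N-1}$, obtaining $\sum \le N(u^\star)^{N-1}$; combined with $u_t^{N-1}\le (u^\star)^{N-1}$, the bracketed factor is at least $1 - 2N^2\eta(x^\star)^{(2N-2)/N}$. The hypothesis $\eta \le 1/[2N(2N-2)(x^\star)^{(2N-2)/N}]$ makes this nonnegative, since $4N(N-1)\ge 2N^2$ for $N \ge 2$. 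Raising to the $N$-th power then gives $x_t \le x_{t+1} \le x^\star$.

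\textbf{Case 2.} The induction is symmetric with invariant $u^\star \le u_t \le u_{t-1}$. Non-increase of $x_t$ follows once I verify that $1 - 2N\eta(x_t - x^\star)x_t^{(N-2)/N} \in [0,1]$; using $(x_t - x^\star)\le x^\star/2$ and $x_t^{(N-2)/N} \le (3/2)(x^\star)^{(N-2)/N}$, the correction is at most $3N\eta(x^\star)^{(2N-2)/N}/2 \le 1/(4N)$ under the tighter step size. For the lower bound $u_{t+1} \ge u^\star$, each summand is now bounded by $u_t^{N-1}$, giving $\sum \le N u_t^{N-1}$ and a bracketed factor at least $1 - 2N^2\eta u_t^{2N-2}$. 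Using $u_t \le (3/2)^{1/N}u^\star$ and the elementary bound $(3/2)^{(2N-2)/N} \le (3/2)^2 < 3$ (valid for every $N \ge 2$), this is at least $1 - 6N^2\eta(x^\star)^{(2N-2)/N}$, which is nonnegative by assumption.

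\textbf{Main obstacle.} The change of variables itself is straightforward; the real work is matching constants. The factor $2N(2N-2) = 4N(N-1)$ in case 1 is exactly tight for the worst-case pointwise bound $u_t^{N-1}\cdot N (u^\star)^{N-1}$, while the constant $6N^2$ in case 2 accommodates the extra multiplicative factor $(3/2)^{(2N-2)/N}$ arising from the possibility that $u_t$ saturates near $(3/2)^{1/N}u^\star$. At $N = 2$ the sum $\sum_{j=0}^{N-1}u_t^{N-1-j}(u^\star)^j$ collapses to $u_t + u^\star$ and one recovers the constants of the quadratic-parametrization analysis; what makes the general case work is that for every $N \ge 2$ this telescoping sum admits a uniform $(3/2)^2 < 3$ bound, which is what ultimately preserves the clean step-size form $\eta = O(1/[N^2(x^\star)^{(2N-2)/N}])$.
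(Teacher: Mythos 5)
Your proof is correct, but it follows a genuinely different route from the paper's. The paper stays in $x$-space and handles the two cases with separate ad hoc arguments: for part 1 it argues by contradiction, assuming $x_t \le x^\star < x_{t+1}$, substituting $\lambda = (x_t/x^\star)^{(2N-2)/N}$, and showing the assumption would force $f(\lambda) = \lambda^{-1/(2N-2)} + \lambda/(2N-2)$ below its minimum $f(1)$ on $(0,1]$; for part 2 it writes $x_t = (1+\gamma)x^\star$ and verifies $(1+\gamma)(1-\gamma/(2N))^N \ge 1$ by a derivative computation. You instead pass to $u_t = x_t^{1/N}$ (the actual gradient-descent variable) and use $u_t^N - (u^\star)^N = (u_t - u^\star)\sum_{j}u_t^{N-1-j}(u^\star)^j$ to exhibit $u_{t+1}-u^\star$ as an explicit scalar multiple of $u_t - u^\star$, so that both cases reduce uniformly to checking that one factor lies in $[0,1]$; your constant-matching in each case is correct. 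Two small remarks. First, the identification $u_{t+1} = x_{t+1}^{1/N}$ implicitly requires the update factor to be nonnegative (for even $N$ the signed $u$-recursion and the positive $N$-th root would otherwise disagree), but since you verify the factor stays in $[0,1]$ inside the induction, this is harmless. Second, your claim that $2N(2N-2)$ is ``exactly tight'' for case 1 holds only at $N=2$: your own estimate shows $2N^2 \le 4N(N-1)$ suffices, so the stated hypothesis is strictly stronger than what your argument needs for $N>2$. Neither point affects correctness.
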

\begin{proof}
Note that if $x_0\leq x_t \leq x^\star$ then $x_t-x^\star \leq 0$ and hence $x_{t+1}\geq x_t$. Thus for the first part it is enough to show that for all $t\geq 0$ we have $x_t \leq x\leq x^\star$.

Assume for a contradiction that exists $t$ such that 
\begin{align*}
    x_0\leq x_t &\leq x^\star,\\
    x_{t+1} &> x^\star.
\end{align*}
Plugging in the update rule for $x_{t+1}$ we can rewrite the above as 
\begin{align*}
    x_t & \leq x^\star \\
    &< x_t(1-2N\eta(x_t-x^\star)x_t^{(N-2)/N})^N\\
    &\leq x_t\left(1+\frac1{2N-2} - \frac{x_t^{(2N-2)/N}}{(2N-2)(x^\star)^{(2N-2)/N}}\right)^N
\end{align*}
Letting $\lambda = \left(\frac{x_t}{x^\star}\right)^{(2N-2)/N}$, by our assumption we have $0<\lambda \leq 1$. The above inequality gives us 
\[
\left(\frac{1}{\lambda}\right)^\frac1{2N-2} <  1+\frac1{2N-2} - \frac1{2N-2} \lambda.
\]
And hence for $0<\lambda \leq 1$ we have $f(\lambda) \coloneqq \left(\frac{1}{\lambda}\right)^\frac1{2N-2} + \frac1{2N-2} \lambda < 1+1/(2N-2)$. Since for $0<\lambda<1$ we also have 
\[
f'(\lambda) = \frac1{2N-2} - \frac1{2N-2} \left( \frac1\lambda \right)^{\frac{1}{2N-2}+1}<0,
\]
so $f(\lambda) \geq f(1) = 1+1/(2N-2)$. This gives us the desired contradiction and concludes our proof for the first part.

We will now prove the second part. Similarly to the first part, we just need to show that for all $t\geq0$ we have $x_t \geq x^\star$. Suppose that $x^\star\leq x_t\leq \frac32 x^\star$ and hence we can write $x_t = x^\star(1+\gamma)$ for some $\gamma \in [0,\frac12]$. Then we have
\begin{align*}
x_{t+1} &= (1+\gamma)x^\star(1-2N\eta \gamma x^\star x_t^{(N-2)/N})^N \\
&\geq (1+\gamma)x^\star(1-3N\eta \gamma (x^\star)^{(N-2)/N})^N\\
&\geq x^\star (1+\gamma)\left(1-\frac{1}{2N}\gamma\right)^N\\
&\geq x^\star.
\end{align*}
The last inequality is obtained by letting $f(\gamma)\coloneqq(1+\gamma)\left(1-\frac{1}{2N}\gamma\right)^N$, we could get that 
\begin{align*}
f'(\gamma) &= \left(1-\frac{1}{2N}\gamma\right)^N - \frac{1}{2}(1+\gamma)\left(1-\frac{1}{2N}\gamma\right)^{N-1}\\
&=\left(1-\frac{1}{2N}\gamma\right)^{N-1}\left(\frac{1}{2}-\frac{1}{2} \gamma\right) >0.
\end{align*}
Hence, $f(\gamma)\geq f(0) = 1$ when $\gamma\in[0,\frac{1}{2}]$, which finishes the second part of our proof.
\end{proof}

\begin{lemma}[Iterates behaviour near convergence]
Consider the same setting as before. Let $x^\star > 0$ and suppose that $|x_0-x^\star|\leq\frac12 x^\star$. Then the following holds.
\begin{enumerate}
    \item If $x_0\leq x^\star$ and $\eta \leq \frac{1}{2N(2N-2)(x^\star)^{(2N-2)/N}}$, then for any $t\geq \frac{2}{\eta N^2 (x^\star)^\frac{2N-2}N}$ we have 
    \[
    0\leq x^\star - x_t \leq \frac{1}{2}|x_0-x^\star|.
    \]
    \item If $x^\star \leq x_0 \leq \frac32 x^\star$ and $\eta \leq \frac{1}{6N^2 (x^\star)^{(2N-2)/N}}$ then for any $t \geq \frac{1}{2N^2 \eta (x^\star)^{(2N-2)/N}}$ we have 
    \[
    0\leq x_t-x^\star \leq \frac12 |x_0-x^\star|.
    \]
\end{enumerate}
\label{lemma: near_noB}
\end{lemma}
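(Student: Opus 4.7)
The plan is to derive, in each case, a linear one-step contraction for the gap $\delta_t \coloneqq |x_t - x^\star|$ and then iterate. Lemma~\ref{lemma: mono_noB} confines $x_t$ to a bounded interval throughout: in Part~1 the hypotheses force $x_t \in [x^\star/2, x^\star]$ (using $|x_0 - x^\star| \leq x^\star/2$ and $x_0 \leq x^\star$), and in Part~2 they force $x_t \in [x^\star, 3x^\star/2]$. These confinements let me replace $x_t$ by constants up to harmless factors in the coefficients of the update, and keep everything dimensionally controlled by $(x^\star)^{(2N-2)/N}$.

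For Part~1, I would write the update as $x_{t+1} = x_t(1 + 2N\eta \delta_t x_t^{(N-2)/N})^N$ and apply Bernoulli's inequality $(1+b)^N \geq 1 + Nb$ for $b \geq 0$. This gives $x_{t+1} \geq x_t + 2N^2 \eta \delta_t x_t^{(2N-2)/N}$ and hence $\delta_{t+1} \leq \delta_t\bigl(1 - 2N^2\eta x_t^{(2N-2)/N}\bigr)$. The step size assumption keeps the bracketed factor in $[0,1]$, and combining $x_t \geq x^\star/2$ with $(2N-2)/N \leq 2$ gives $x_t^{(2N-2)/N} \geq (x^\star)^{(2N-2)/N}/4$. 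Iterating and invoking $1 - z \leq e^{-z}$ yields $\delta_t \leq \delta_0 \exp\bigl(-\tfrac{1}{2}N^2\eta (x^\star)^{(2N-2)/N}\, t\bigr)$, and the stated lower bound on $t$ comfortably exceeds $2\ln 2 / \bigl(N^2\eta(x^\star)^{(2N-2)/N}\bigr)$, forcing $\delta_t \leq \delta_0/2$.

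For Part~2 the update reads $x_{t+1} = x_t (1-a_t)^N$ with $a_t = 2N\eta \delta_t x_t^{(N-2)/N} \geq 0$. An a priori estimate using $\delta_t \leq x^\star/2$, $x_t \leq 3x^\star/2$, and $\eta \leq 1/(6N^2(x^\star)^{(2N-2)/N})$ shows $a_t \leq 1/(4N)$, putting $a_t$ in a regime where a second order Taylor bound is tight. Since $(1-a)^N$ has nonpositive third derivative for $N \geq 2$ and $a \in [0,1]$, Taylor's theorem with remainder gives $(1-a_t)^N \leq 1 - Na_t + \tfrac{N(N-1)}{2}a_t^2$. Expanding and collecting yields
\[
\delta_{t+1} \leq \delta_t\Bigl[1 - 2N^2 \eta x_t^{(2N-2)/N}\bigl(1 - \tfrac{N-1}{2}a_t\bigr)\Bigr],
\]
and the bound $\tfrac{N-1}{2}a_t \leq 1/8$ makes the inner factor at least $7/8$. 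Using $x_t \geq x^\star$ then produces a linear contraction with rate at least $\tfrac{7}{4} N^2 \eta (x^\star)^{(2N-2)/N}$, so iterating and invoking $1 - z \leq e^{-z}$ shows that $t \geq 1/(2N^2 \eta (x^\star)^{(2N-2)/N})$ halves $\delta_t$.

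The main obstacle is Part~2, where the quadratic error term from expanding $(1-a_t)^N$ could in principle erase the contraction for large $N$; the Taylor remainder together with the a priori bound $a_t \leq 1/(4N)$ is what tames it, and reveals that the step-size condition $\eta \leq 1/(6N^2(x^\star)^{(2N-2)/N})$ is precisely the right scale. Part~1 is simpler because Bernoulli's inequality already supplies a one-sided upper bound on $\delta_{t+1}$ with no higher-order correction needed.
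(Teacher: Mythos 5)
Your argument is correct, and both parts check out numerically: in Part 1 the contraction rate $\tfrac12 N^2\eta(x^\star)^{(2N-2)/N}$ per step beats the required $2\ln 2$ within the stated horizon, and in Part 2 the rate $\tfrac74 N^2\eta(x^\star)^{(2N-2)/N}$ beats $\ln 2$ within $t\ge \tfrac{1}{2N^2\eta(x^\star)^{(2N-2)/N}}$. The high-level strategy is the same as the paper's (uniform per-step multiplicative progress while the error exceeds half its initial value, with Lemma~\ref{lemma: mono_noB} supplying the confinement to $[x^\star/2,x^\star]$ or $[x^\star,\tfrac32 x^\star]$), but the bookkeeping is genuinely different. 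The paper tracks $x_t$ itself and runs a hitting-time argument: it lower-bounds the compound growth factor while $x_t$ is still below $(1-\gamma/2)x^\star$ (resp.\ above $(1+\gamma/2)x^\star$) and then linearizes via $(1+z)^{Nt}\ge 1+Ntz$ in Part 1 and via the logarithm inequalities $\log x\le x-1$, $\log x\ge (x-1)/x$ in Part 2. You instead track the error $\delta_t=|x_t-x^\star|$ and derive a one-step linear contraction, which removes the need for the case split at the halfway threshold and makes the geometric decay explicit. The one place where your route requires a tool the paper avoids is Part 2: a naive Bernoulli bound on $(1-a_t)^N$ points the wrong way there, and your second-order Taylor upper bound $(1-a)^N\le 1-Na+\tfrac{N(N-1)}{2}a^2$ combined with the a priori estimate $a_t\le \tfrac{1}{4N}$ is exactly what the paper's logarithmic manipulation accomplishes. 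Both are elementary; yours is arguably the cleaner formulation.
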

\begin{proof}
Let us write $|x_0-x^\star|=\gamma x^\star$ where $\gamma \in [0,\frac12]$.

For the first part, we have $x_0 = (1-\gamma) x^\star$, we want to know how many steps $t$ are needed to halve the error, i.e.,
$$
x_t (1-2N\eta(x_t - x^\star) x_t^\frac{N-2}N))^N \geq (1-\frac\gamma2) x^\star.
$$

We have that
\begin{align*}
x_t (1-2N\eta(x_t - x^\star) x_t^\frac{N-2}N))^N 
&\geq x_t (1+2N\eta \frac\gamma2 x^\star ((1-\gamma)x^\star)^\frac{N-2}N))^N\\
&\geq x_0 (1+N\eta\gamma (1-\gamma)^\frac{N-2}N (x^\star)^\frac{2N-2}N))^{N t}
\end{align*}

It is enough to have
\begin{align*}
    &x_0 (1+N\eta\gamma (1-\gamma)^\frac{N-2}N (x^\star)^\frac{2N-2}N))^{N t}
    \geq (1-\frac\gamma2) x^\star \\
    \Rightarrow&
    (1-\gamma)(1+t N^2\eta\gamma (1-\gamma)^\frac{N-2}N (x^\star)^\frac{2N-2}N))
    \geq (1-\frac\gamma2)\\
    \Rightarrow& t \geq \left(\frac{1-\frac\gamma2}{1-\gamma}-1\right)\frac{1}{N^2\eta\gamma (1-\gamma)^\frac{N-2}N (x^\star)^\frac{2N-2}N}\\
    \Rightarrow& t \geq \frac{1}{2(1-\gamma)^\frac{2N-2}N N^2\eta  (x^\star)^\frac{2N-2}N}\\
    \Rightarrow& t \geq \frac{2}{\eta N^2 (x^\star)^\frac{2N-2}N}
\end{align*}
The last step is by $\gamma\in[0,\frac{1}{2}]$, we could obtain that $\frac{1}{2(1-\gamma)^\frac{2N-2}N} \leq \frac{1}{2 (1/2)^\frac{2N-2}{N}} \leq \frac{1}{2 (1/2)^2} \leq 2$. Therefore after $t\geq \frac{2}{\eta N^2 (x^\star)^\frac{2N-2}N}$, the error is halved.

To deal with the second part, we write $x_0 = x^\star(1+\gamma)$. We will use a similar approach as the one in the first part. If for some $x_t$ we have $x_t \leq (1+\gamma/2)x^\star$ we would be done. If $x_t > x^\star(1+\gamma/2)$ we have $x_{t+1} \leq x_t(1-2N\eta\frac\gamma2 x^\star (x^\star)^{(N-2)/N})^N$. Therefore, 
\begin{align*}
    &x_0(1-2N\eta\frac\gamma2 x^\star (x^\star)^{(N-2)/N})^{Nt} \leq x^\star (1+\gamma/2)\\
    \Longleftrightarrow &Nt \log(1-N\eta \gamma (x^\star)^{(2N-2)/N}) \leq \log \frac{x^\star(1+\gamma/2)}{x_0}\\
    \Longleftrightarrow& t \geq \frac1N \frac{\log \frac{x^\star(1+\gamma/2)}{x_0}}{\log(1-N\eta \gamma (x^\star)^{(2N-2)/N})}.
\end{align*}
We can deal with the term on the right hand side by noting that
\begin{align*}
    \frac1N \frac{\log \frac{x^\star(1+\gamma/2)}{x_0}}{\log(1-N\eta \gamma (x^\star)^{(2N-2)/N})} &= \frac1N \frac{\log \frac{1+\gamma/2}{1+\gamma}}{\log(1-N\eta \gamma (x^\star)^{(2N-2)/N})}\\
    &\leq \frac{1}{N} \frac{\left(\frac{1+\gamma/2}{1+\gamma} - 1 \right)/\left( \frac{1+\gamma/2}{1+\gamma}\right)}{-N\eta \gamma (x^\star)^{(2N-2)/N}} \\
    & = \frac1N \frac{-\frac\gamma2 / (1+\frac\gamma2)}{-N\eta \gamma (x^\star)^{(2N-2)/N}}\\
    &\leq \frac{1}{2N^2 \eta (x^\star)^{(2N-2)/N}}
\end{align*}
where the second line used $\log x\leq x-1$ and $\log x \geq \frac{x-1}x$. Note that both logarithms are negative.
\end{proof}

\begin{lemma}[Iterates at the beginning]
\label{lemma: begin_noB}
Consider the same setting as before. If $0<x_0\leq \frac12 x^\star$ and $\eta \leq \frac{x_0}{2N(2N-4)(x^\star)^{(3N-2)/N}}$, for any $t\geq\frac{3}{2N(N-2) \eta x^\star x_0^{(N-2)/N}}$, we will have $\frac12 x^\star \leq x_t\leq x^\star$.

\end{lemma}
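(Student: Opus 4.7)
By Lemma~4 (part~1), applied under the assumed step size (which implies the one required there, since $x_0\le x^\star/2$ gives $(N-1)x_0\le (N-2)x^\star$), the sequence $(x_t)$ is nondecreasing and bounded above by $x^\star$. It therefore suffices to upper bound the first time $T^\ast$ at which $x_{T^\ast}\ge x^\star/2$; for all $t\ge T^\ast$ the conclusion $\tfrac12 x^\star\le x_t\le x^\star$ then follows automatically.

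To bound $T^\ast$, introduce the potential $\phi(x):=x^{-(N-2)/N}$, which is positive and strictly decreasing on $(0,\infty)$ for $N>2$. Write the update as $x_{t+1}=x_t(1+\delta_t)^N$ with $\delta_t:=2N\eta(x^\star-x_t)x_t^{(N-2)/N}\ge 0$. Bernoulli's inequality gives $(1+\delta_t)^{N-2}\ge 1+(N-2)\delta_t$ and hence
\[
\phi(x_{t+1}) \;=\; \phi(x_t)(1+\delta_t)^{-(N-2)} \;\le\; \frac{\phi(x_t)}{1+(N-2)\delta_t}.
\]
Combined with the elementary inequality $1/(1+z)\le 1-z/(1+K)$ valid for $0\le z\le K$ (verified by cross-multiplication since $z(K-z)\ge 0$), this yields
\[
\phi(x_t)-\phi(x_{t+1}) \;\ge\; \frac{(N-2)\delta_t}{1+K}\,\phi(x_t) \;=\; \frac{2N(N-2)\eta(x^\star-x_t)}{1+K},
\]
where $K$ is any uniform upper bound on $(N-2)\delta_t$ throughout the phase $x_t\le x^\star/2$.

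The crucial step is producing such a $K$. Substituting the assumed step size $\eta\le x_0/[4N(N-2)(x^\star)^{(3N-2)/N}]$ and using $x_t\le x^\star/2$ and $x_0\le x^\star/2$, a direct computation gives
\[
(N-2)\delta_t \;\le\; \frac{1}{4\cdot 2^{(N-2)/N}} \;\le\; \frac{1}{2} \qquad \text{for every } N\ge 3,
\]
so we may take $K=1/2$. Using $x^\star-x_t\ge x^\star/2$ in this regime,
\[
\phi(x_t)-\phi(x_{t+1}) \;\ge\; \frac{N(N-2)\eta x^\star}{1+K} \;=\; \frac{2N(N-2)\eta x^\star}{3}.
\]
Telescoping from $t=0$ to $T^\ast-1$ and using $\phi(x_{T^\ast})\ge 0$,
\[
T^\ast\cdot\frac{2N(N-2)\eta x^\star}{3} \;\le\; \phi(x_0) \;=\; x_0^{-(N-2)/N},
\]
which rearranges to $T^\ast\le \tfrac{3}{2N(N-2)\eta x^\star x_0^{(N-2)/N}}$, as required.

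\textbf{Main obstacle.} The only delicate point is the uniform bound $(N-2)\delta_t\le 1/2$: although $\delta_t$ individually can be of order $1/N$, the product $(N-2)\delta_t$ must stay bounded away from $1$ in order for the $1/(1+z)$ step to yield the sharp $3/2$ prefactor. The step-size condition $\eta\le x_0/[2N(2N-4)(x^\star)^{(3N-2)/N}]$ is calibrated precisely to enforce this. The remaining ingredients---monotonicity, Bernoulli's inequality, and telescoping the potential---are routine and parallel the one-dimensional analyses in Section~\ref{sec: 1d}.
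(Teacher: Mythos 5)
Your proof is correct, and it arrives at exactly the paper's constant $\tfrac{3}{2N(N-2)\eta x^\star x_0^{(N-2)/N}}$. The route is recognizably the same in spirit — both arguments ultimately track the quantity $x^{-(N-2)/N}$, which is the antiderivative that the paper's ``continuous approximation'' produces — but your execution is genuinely different and cleaner. The paper lower-bounds the per-step increment to get $T \le \sum_t \frac{x_{t+1}-x_t}{2N^2\eta(x^\star-x_t)x_t^{(2N-2)/N}}$, then compares this Riemann-type sum to $\int_{x_0}^{x^\star/2} x^{-(2N-2)/N}\,dx$ and must separately bound two discretization-error terms (the left-endpoint mismatch $\max_t(x_t^{-(2N-2)/N}-x_{t+1}^{-(2N-2)/N})$ and the overshoot past $x^\star/2$); it is precisely these error terms that consume the step-size hypothesis $\eta \le x_0/[2N(2N-4)(x^\star)^{(3N-2)/N}]$. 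You instead prove a uniform per-step decrease of the potential $\phi(x)=x^{-(N-2)/N}$ via the exact identity $\phi(x_{t+1})=\phi(x_t)(1+\delta_t)^{-(N-2)}$, Bernoulli's inequality, and the elementary bound $1/(1+z)\le 1-z/(1+K)$, with the same step-size hypothesis used once to certify $K=1/2$; telescoping then replaces the integral comparison entirely. What your approach buys is the avoidance of the paper's three-term error decomposition; what the paper's buys is that the same sum-to-integral template is reused verbatim in its Lemmas on error growth (e.g.\ Lemma \ref{lemma: bounded error}), so the bookkeeping is amortized. One cosmetic note: your citation of ``Lemma~4 (part~1)'' for monotonicity should point to Lemma \ref{lemma: mono_noB}; your parenthetical check that the assumed step size implies the one required there (via $(N-1)x_0\le(N-2)x^\star$, valid for $N\ge3$) is correct and is a detail the paper leaves implicit.
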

\begin{proof}
We need to find a lower-bound on time $T$ which ensures that $x_T\geq \frac{x^\star}{2}$. At any time $t$, we have
\begin{align*}
    x_{t+1} = x_t(1-2N\eta(x_t-x^\star)x_t^{(N-2)/N})^N &\geq x_t(1-2N^2\eta(x_t-x^\star)x_t^{(N-2)/N}).\\
    x_{t+1} - x_t &\geq -2N^2\eta(x_t-x^\star)x_t^{(2N-2)/N}\\
    \frac{x_{t+1} - x_t}{2N^2\eta(x^\star-x_t)x_t^{(2N-2)/N}} &\geq 1\\
    \sum_{t=0}^{T-1}\frac{x_{t+1} - x_t}{2N^2\eta(x^\star-x_t)x_t^{(2N-2)/N}} &\geq \sum_{t=0}^{T-1}1 = T.
\end{align*}

Therefore, for $t$ that is larger than the left hand side, we have $x_t \geq \frac12 x^\star$. 
\begin{align}
    \sum_{t=0}^{T-1}\frac{x_{t+1} - x_t}{2N^2\eta(x^\star-x_t)x_t^{(2N-2)/N}} 
    &\leq \frac{1}{N^2 \eta x^\star}\sum_{t=0}^{T-1}\frac{x_{t+1} - x_t}{x_t^{(2N-2)/N}} \notag\\
    & = \frac{1}{N^2 \eta x^\star}\sum_{t=0}^{T-1}\int_{x_t}^{x_{t+1}}\frac{1}{x^{(2N-2)/N}} + \left(\frac{1}{x_t^{(2N-2)/N}} - \frac{1}{x^{(2N-2)/N}}\right)dx \notag\\
    &\leq \frac{1}{N^2 \eta x^\star}\sum_{t=0}^{T-1} \int_{x_t}^{x_{t+1}}\frac{1}{x^{(2N-2)/N}}dx \notag\\
    &+ \frac{1}{N^2 \eta x^\star}\max_{0\leq t \leq T-1}\left(\frac{1}{x_t^{(2N-2)/N}} - \frac{1}{x_{t+1}^{(2N-2)/N}}\right) (x_T-x_0) \notag\\
    &\leq \frac{1}{N^2 \eta x^\star} \int_{x_0}^{\frac12 x^\star}\frac{1}{x^{(2N-2)/N}}dx
    \label{eq:T2}\\
    &+ \frac{1}{N^2 \eta x^\star}\max_{0\leq t \leq T-1}\left(\frac{1}{x_t^{(2N-2)/N}} - \frac{1}{x_{t+1}^{(2N-2)/N}}\right) \left(\frac12 x^\star-x_0\right)
    \label{eq:approx error}\\
    &+\frac{1}{N^2 \eta x^\star} \frac{1}{(\frac12 x^\star)^{(2N-2)/N}} \left(x_T - \frac12 x^\star\right) \label{eq:margin error}
\end{align}
 For equation \eqref{eq:T2}, 
 \begin{align*}
     \frac{1}{N^2 \eta x^\star} \int_{x_0}^{\frac12 x^\star}\frac{1}{x^{(2N-2)/N}}dx 
     &\leq \frac{1}{N^2 \eta x^\star} \left(-\frac{N}{N-2} \frac{1}{x^{(N-2)/N}}\Biggr\vert^{\frac12 x^\star}_{x_0}\right)\\
     & = \frac{1}{N^2 \eta x^\star} \left( -\frac{N}{N-2} \frac{1}{(\frac12x^\star)^{(N-2)/N}} + -\frac{N}{N-2} \frac{1}{x_0^{(N-2)/N}}\right)\\
     & = \frac{1}{N(N-2) \eta x^\star} \left(
     \frac{1}{x_0^{(N-2)/N}} - \frac{2^{(N-2)/N}}{(x^\star)^{(N-2)/N}}\right).
 \end{align*}
 
 For equation \eqref{eq:approx error}, we first focus on 
 \[
 \frac{1}{x_t^{(2N-2)/N}} - \frac{1}{x_{t+1}^{(2N-2)/N}}.
 \]
 
 We have that 
 \begin{align*}
 x_{t+1} &= x_t (1-2N\eta(x_t-x^\star)x_t^{(N-2)/N})^N, \\
 \Rightarrow x_{t+1}^{(2N-2)/N} &= x_t^{(2N-2)/N} (1-2N\eta(x_t-x^\star)x_t^{(N-2)/N})^{2N-2}.
 \end{align*}
 
To deal with the multiplicative coefficient, with $\eta \leq \frac{1}{2N(2N-3)(x^\star)^{(2N-2)/N}}$ using the inequality $(1+x)^r \leq 1 + \frac{rx}{1-(r-1)x}$ where $x\in(0,\frac{1}{r-1})$, we obtain that
 \begin{align*}
     (1-2N\eta(x_t-x^\star)x_t^{(N-2)/N})^{2N-2} &\leq 
     (1 + 2N\eta (x^\star)^{(2N-2)/N})^{(2N-2)}\\
     & \leq
     1 + \frac{2N(2N-2)\eta(x^\star)^{(2N-2)/N}}{1-2N(2N-3)\eta(x^\star)^{(2N-2)/N}}\\
     &=\frac{1-2N\eta(x^\star)^{(2N-2)/N}}{1-2N(2N-3)\eta(x^\star)^{(2N-2)/N}}.
 \end{align*}
 Therefore,
 \begin{align*}
     \frac{1}{x_t^{(2N-2)/N}} - \frac{1}{x_{t+1}^{(2N-2)/N}} 
     &= \frac{1}{x_t^{(2N-2)/N}} - \frac{1}{x_t^{(2N-2)/N} (1-2N\eta(x_t-x^\star)x_t^{(N-2)/N})^{2N-2}}\\
     &= \frac{1}{x_t^{(2N-2)/N}}\left(1-\frac{1}{(1-2N\eta(x_t-x^\star)x_t^{(N-2)/N})^{2N-2}}\right)\\
     &\leq \frac{1}{x_t^{(2N-2)/N}} \left(1-\frac{1-2N(2N-3)\eta(x^\star)^{(2N-2)/N}}{1-2N\eta(x^\star)^{(2N-2)/N}}\right)\\
     &\leq  \frac{1}{x_t^{(2N-2)/N}} \frac{2N(2N-4)\eta(x^\star)^{(2N-2)/N}}{1-2N\eta(x^\star)^{(2N-2)/N}}\\
     &\leq \frac{1}{x_t^{(2N-2)/N}}2N(2N-4)\eta(x^\star)^{(2N-2)/N}\\
     &\leq \frac{1}{x_0^{(2N-2)/N}}2N(2N-4)\eta(x^\star)^{(2N-2)/N}.
 \end{align*}
 If we further require the step size satisfies $\eta \leq \frac{x_0}{2N(2N-4)(x^\star)^{(3N-2)/N}}$, we  have for equation \eqref{eq:approx error}, 
 \begin{align*}
     \frac{1}{N^2 \eta x^\star}\max_{0\leq t \leq T-1}\left(\frac{1}{x_t^{(2N-2)/N}} - \frac{1}{x_{t+1}^{(2N-2)/N}}\right) \left(\frac12 x^\star-x_0\right)
     &\leq 
     \frac{1}{N^2 \eta x^\star}
     \frac{1}{x_0^{(N-2)/N}x^\star}
     \left(\frac12 x^\star-x_0\right)\\
     &\leq 
     \frac{1}{2N^2 \eta x^\star}
     \frac{1}{x_0^{(N-2)/N}},
 \end{align*}
 which is with the same order with the result of equation \eqref{eq:T2}.
 
Combining the results from equations \eqref{eq:T2}, \eqref{eq:approx error}, \eqref{eq:margin error}, we obtain that
\begin{align*}
    T &\leq \frac{1}{N(N-2) \eta x^\star} \left(
     \frac{1}{x_0^{(N-2)/N}} - \frac{2^{(N-2)/N}}{(x^\star)^{(N-2)/N}}\right) + \frac{1}{2N^2 \eta x^\star}
     \frac{1}{x_0^{(N-2)/N}} \\
     &+ \frac{1}{N^2 \eta x^\star} \frac{1}{(\frac12 x^\star)^{(2N-2)/N}} \left(x_T - \frac12 x^\star\right) \\
     & \leq \frac{1}{N(N-2) \eta x^\star} \left(
     \frac{1}{x_0^{(N-2)/N}} - \frac{2^{(N-2)/N}}{(x^\star)^{(N-2)/N}} + \frac{1}{2x_0^{(N-2)/N}}
     + \frac{1}{(\frac12 x^\star)^{(N-2)/N}}\right)\\
     &\leq \frac{3}{2N(N-2) \eta x^\star x_0^{(N-2)/N}}.
\end{align*}
\end{proof}

\begin{lemma}[Overall iterates]
Consider the same setting as before. Fix any $\epsilon > 0$.
\label{lemma: all iterates}
\begin{enumerate}
    \item If $\epsilon < |x^\star - x_0| \leq \frac12 x^\star$ and $\eta \leq \frac{1}{6N^2 (x^\star)^{(2N-2)/N}}$ then for any $t\geq \frac{3}{\eta N^2 (x^\star)^\frac{2N-2}N} \log \frac{|x^\star - x_0|}{\epsilon}$ we have
    \[
    |x^\star - x_t| \leq \epsilon.
    \]
    \item If $0<x_0\leq \frac12 x^\star$ and $\eta \leq \frac{x_0}{2N(2N-4)(x^\star)^{(3N-2)/N}}$ then for any 
    $$
    t\geq \frac{3}{\eta N^2 (x^\star)^\frac{2N-2}N} \log \frac{|x^\star - x_0|}{\epsilon}
    +
    \frac{3}{2N(N-2) \eta x^\star x_0^{(N-2)/N}}
     $$
     we have 
     \[
     x^\star - \epsilon \leq x_t \leq x^\star.
     \]
\end{enumerate}
\end{lemma}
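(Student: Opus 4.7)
The plan is to prove the two parts in order, combining the three preceding lemmas. Part 1 follows from iterating the contraction lemma (Lemma~\ref{lemma: near_noB}) under the monotonicity guarantee of Lemma~\ref{lemma: mono_noB}. Part 2 adds a preliminary escape stage handled by Lemma~\ref{lemma: begin_noB}, after which Part 1 applies directly.

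For Part 1, I first observe that the hypothesis $\eta \leq \frac{1}{6N^{2}(x^\star)^{(2N-2)/N}}$ is stronger than the thresholds required by both Lemma~\ref{lemma: mono_noB} and Lemma~\ref{lemma: near_noB}, so both lemmas are available throughout. Let $T := \frac{2}{\eta N^{2} (x^\star)^{(2N-2)/N}}$. By Lemma~\ref{lemma: near_noB}, after $T$ iterations the error halves: $|x^\star - x_T| \leq \frac{1}{2}|x^\star - x_0|$. Monotonicity (Lemma~\ref{lemma: mono_noB}) guarantees the iterates stay in the interval between $x_0$ and $x^\star$, so treating $x_T$ as a new initial point yields $|x^\star - x_{2T}| \leq \frac{1}{4}|x^\star - x_0|$, and inductively $|x^\star - x_{kT}| \leq 2^{-k}|x^\star - x_0|$. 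Choosing $k = \lceil \log_{2}(|x^\star - x_0|/\epsilon) \rceil$ forces $|x^\star - x_{kT}| \leq \epsilon$, and the total iteration count $kT$ is bounded by $\frac{3\log(|x^\star - x_0|/\epsilon)}{\eta N^{2}(x^\star)^{(2N-2)/N}}$, using $2/\log 2 < 3$ together with a mild handling of rounding. Monotonicity then extends the bound to all larger $t$.

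For Part 2, the starting point $x_0 \leq \frac{1}{2} x^\star$ is too small for the contraction lemma, so a preliminary escape is required. I would invoke Lemma~\ref{lemma: begin_noB}: its hypotheses $0 < x_0 \leq \frac{1}{2} x^\star$ and $\eta \leq \frac{x_0}{2N(2N-4)(x^\star)^{(3N-2)/N}}$ are both assumed, and it supplies a time $T_0 \leq \frac{3}{2N(N-2)\eta x^\star x_0^{(N-2)/N}}$ at which $\frac{1}{2}x^\star \leq x_{T_0} \leq x^\star$. At this point $|x^\star - x_{T_0}| \leq \frac{1}{2}x^\star \leq |x^\star - x_0|$, so Part 1 applies with initial condition $x_{T_0}$ and contributes at most an additional $\frac{3\log(|x^\star - x_0|/\epsilon)}{\eta N^{2}(x^\star)^{(2N-2)/N}}$ iterations. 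Summing the two stages yields the announced bound.

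The main technical obstacle I anticipate is tracking the constant $3$ precisely: the iterated-halving argument formally gives a constant $2/\log 2 \approx 2.885$ plus an additive rounding term, and pushing this below $3$ requires that $|x^\star - x_0|/\epsilon$ not be too close to $1$. If bookkeeping becomes tight I would instead extract a single-shot linear-contraction bound $|x^\star - x_{t+1}| \leq \bigl(1 - c\, \eta N^{2}(x^\star)^{(2N-2)/N}\bigr)|x^\star - x_t|$ directly from the one-step analysis inside the proof of Lemma~\ref{lemma: near_noB}, which gives the logarithmic rate with a clean constant without needing to iterate discrete halvings. A secondary check is that the stricter step-size constraint in Part 2 still lies below the Part 1 threshold; this follows from $x_0 \leq x^\star/2$ and $N \geq 3$, making the transition between the two stages consistent.
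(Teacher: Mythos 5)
Your proposal is correct and follows essentially the same route as the paper: Part 1 by iterating the halving bound of Lemma~\ref{lemma: near_noB} roughly $\lceil \log_2(|x^\star - x_0|/\epsilon)\rceil$ times (the paper likewise absorbs $2\log_2 e \approx 2.885$ into the constant $3$), and Part 2 by prepending the escape phase of Lemma~\ref{lemma: begin_noB} and then invoking Part 1. Your remark about the additive rounding term being hidden inside the constant $3$ is a fair observation about a subtlety the paper also glosses over, but it does not change the argument.
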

\begin{proof}
\begin{enumerate}
    \item To prove the first part we simply need apply Lemma \ref{lemma: near_noB} $\lceil \log_2 \frac{|x^\star - x_0|}{\epsilon}\rceil$ times. Hence after
    \[
    \frac{2\log_2 e}{\eta N^2 (x^\star)^\frac{2N-2}N} \log \frac{|x^\star - x_0|}{\epsilon} \leq 
    \frac{3}{\eta N^2 (x^\star)^\frac{2N-2}N} \log \frac{|x^\star - x_0|}{\epsilon}
    \]
    iterations we are done.
    
    \item For the second part, we simply combine the results from the first part and Lemma \ref{lemma: begin_noB}, it is enough to choose $t$ larger than or equal to
    \[\frac{3}{\eta N^2 (x^\star)^\frac{2N-2}N} \log \frac{|x^\star - x_0|}{\epsilon}
    +
    \frac{3}{2N(N-2) \eta x^\star x_0^{(N-2)/N}}.
    \]
    
\end{enumerate}
\end{proof}
\subsubsection{Dealing with Bounded Errors \texorpdfstring{$\vec{b}_t$}{Lg}}
In this subsection we extend the previous setting to handle bounded error sequences $(\vec{b}_t)_{t\geq 0}$ such that for any $t\geq0$ we have $\norm{\vec{b}_t}_\infty\leq B$ for some $B\in \mathbb{R}$. That is, we look at the following updates
\[
x_{t+1} = x_t(1-2N\eta(x_t-x^\star + b_t)x_t^{(N-2)/N})^N.
\]
Surely, if $B\geq x^\star$, the convergence to $x^\star$ is not possible. Hence, we will require $B$ to be small enough, with a particular choice $B\leq \frac1{5} x^\star$. For a given $B$, we can only expect the sequence $(x_t)_{t\geq0}$ to converge to $x^\star$ up to precision $B$. We would consider two extreme scenarios,
\begin{align*}
    x_{t+1}^+ &= x_t^+(1-2N\eta(x_t^+ -(x^\star-B))(x_t^+)^{(N-2)/N})^N,\\
    x_{t+1}^- &= x_t^-(1-2N\eta(x_t^- -(x^\star+B))(x_t^-)^{(N-2)/N})^N.
\end{align*}

\begin{lemma}[Squeezing iterates with bounded errors]
\label{lemma: mono_withB}
Consider the sequences $(x_t^-)_{t\geq 0}, (x_t)_{t\geq0}$ and $(x^+_t)_{t\geq0}$ as defined above with 
\[
0<x_0^- = x_0^+ = x_0 \leq x^\star + B
\]
If $\eta\leq\frac{1}{8N^2 (x^\star)^{(2N-2)/N}}$ then for all $t\geq 0$
\[
0\leq x_t^-\leq x_t\leq x_t^+ \leq x^\star + B.
\]
\end{lemma}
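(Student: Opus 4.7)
The plan is to prove the sandwich inequality by induction on $t$, reducing everything to monotonicity of the shared one-step map
$$
F(x, c) := x\bigl(1 - 2N\eta(x - c)\,x^{(N-2)/N}\bigr)^N, \qquad x \geq 0,
$$
so that $x_{t+1}^+ = F(x_t^+, x^\star - B)$, $x_{t+1}^- = F(x_t^-, x^\star + B)$, and $x_{t+1} = F(x_t, x^\star - b_t)$, where the noisy target $x^\star - b_t$ always lies in $[x^\star - B,\, x^\star + B]$. The base case $t=0$ is immediate since $x_0^- = x_0^+ = x_0 \in (0, x^\star + B]$. Everything else is a comparison argument among three trajectories of the same map driven by three different targets.

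The inductive step rests on two monotonicity properties of $F$ on the region $0 \le x \le x^\star + B$, $c \in [x^\star - B, x^\star + B]$, under $\eta \le 1/\bigl(8N^2(x^\star)^{(2N-2)/N}\bigr)$. Monotonicity in $c$ is direct:
$$
\partial_c F(x,c) \,=\, 2N^2\eta\, x^{(2N-2)/N}\bigl(1 - 2N\eta(x-c)x^{(N-2)/N}\bigr)^{N-1} \,\ge\, 0,
$$
since the step-size bound keeps the inner bracket in $[0,2]$ uniformly on the region. Monotonicity in $x$ is more delicate: writing $F(x,c) = x\,g(x)^N$ with $g(x) = 1 - 2N\eta(x-c)x^{(N-2)/N}$ and
$$
g'(x) \,=\, -2\eta\, x^{-2/N}\bigl(2(N-1)x - (N-2)c\bigr),
$$
I would compute $F'(x) = g(x)^{N-1}\bigl(g(x) + N x\, g'(x)\bigr)$ and verify that the step-size bound forces $g(x) + N x\, g'(x) \ge 0$ throughout the region. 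With both monotonicities in hand, the three sequences can be chained at step $t+1$: the inductive ordering at time $t$ together with the target sandwich $x^\star - B \le x^\star - b_t \le x^\star + B$ propagates through $F$ to give the ordering at time $t+1$. Nonnegativity of every iterate follows from $g(x) \ge 0$ on the region, and the uniform upper bound $x_t^+ \le x^\star + B$ comes from Lemma~\ref{lemma: mono_noB} applied to the noise-free dynamics with target $x^\star - B$, since $x_0 \le x^\star + B$ and the step-size bound lies within its hypotheses.

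The main obstacle I expect is verifying the state-monotonicity of $F$ on the full range $[0, x^\star + B]$. The $x^{(N-2)/N}$ factor stemming from depth $N>2$ turns $g'(x)$ into a signed two-term expression, so the negative contribution it creates in $F'(x)$ must be dominated pointwise by $g(x)^N$. The step-size bound $\eta \le 1/\bigl(8N^2(x^\star)^{(2N-2)/N}\bigr)$ is calibrated for precisely this domination, but pinning down the resulting inequality requires careful algebraic bookkeeping of the $N$, $N-1$, $N-2$ exponents against the $x^{(2N-2)/N}$ scale. Once that monotonicity is secured, the remainder of the proof is a routine induction combining it with the (easy) monotonicity in $c$.
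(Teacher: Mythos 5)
Your plan is correct and is essentially the paper's own argument: both proceed by induction on $t$ and compare the three trajectories via monotonicity of the shared one-step update, the paper carrying this out through a direct finite-difference expansion in $\triangle = x_t^+-x_t$ where you differentiate $F$ in $x$ and $c$ separately, and your key inequality $g(x)+Nx\,g'(x)\ge 0$ indeed reduces to $2N(2N-1)\eta\,(x^\star+B)^{(2N-2)/N}\le 1$, which the stated step size covers once the section's standing assumption $B\le x^\star/5$ is used to absorb $(x^\star+B)^{(2N-2)/N}$ into $(x^\star)^{(2N-2)/N}$. One small slip: the uniform bound $x_t^+\le x^\star+B$ follows from Lemma~\ref{lemma: mono_noB} applied to the noise-free dynamics with target $x^\star+B$, not $x^\star-B$.
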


\begin{proof}
We will prove the claim by induction. The claim holds trivially for $t=0$. If $x_t^+\geq x_t$, we have
\begin{align*}
x^+_{t+1} &= x^+_t(1-2N\eta(x^+_t - (x^\star + B))(x^+_t)^\frac{N-2}{N})^N\\
&\geq x^+_t(1-2N\eta(x^+_t - (x^\star + B))x_t^\frac{N-2}{N})^N\\
\small (\triangle = x^+_t - x_t)\qquad
&=(x_t + \triangle)(1-2N\eta(x_t - x^\star + b_t)x_t^\frac{N-2}{N} \\
&+ 2N\eta(x^+_t - x_t - B - b_t)x_t^\frac{N-2}{N})^N\\
(m_t = 1-2N\eta(x_t- x^\star + b_t)x_t^\frac{N-2}{N})\qquad
&\geq (x_t + \triangle)(m_t - 2N\eta \triangle x_t^\frac{N-2}N)^N\\
&\geq (x_t + \triangle)(m_t - 2N\eta \triangle x_t^\frac{N-2}N)^N\\
&=x_t m_t^N + (x_t + \triangle)(m_t - 2N\eta \triangle x_t^\frac{N-2}N)^N - x_t m_t^N\\
&=x_t m_t^N + (x_t + \triangle)m_t^N\left(1 - \frac{2N\eta \triangle x_t^\frac{N-2}N}{m_t}\right)^N - x_t m_t^N.\\
\end{align*}
We aimed to show that $(x_t + \triangle)m_t^N(1 - 2N\eta \triangle x_t^\frac{N-2}N/m_t)^N - x_t m_t^N$ is positive. With $\eta \leq \frac{1}{4N(x^\star+B)(x^\star)^{(N-2)/N}}$, we can see $m_t\geq 1/2$ for all $t$ and 
\begin{align*}
(x_t + \triangle)m_t^N(1 - 2N\eta \triangle x_t^\frac{N-2}N/m_t)^N - x_t m_t^N 
&\geq 
(x_t + \triangle)m_t^N(1 - 4N\eta \triangle x_t^\frac{N-2}N)^N - x_t m_t^N\\
&\geq (x_t + \triangle)m_t^N(1 - 4N^2\eta \triangle x_t^\frac{N-2}N) - x_t m_t^N.
\end{align*}
The last inequality is obtained via $(1-x)^n\geq 1-nx$. If we further require $\eta \leq \frac{1}{8N^2 (x^\star)^{(2N-2)/N}}$, we obtain that
\begin{align*}
    (x_t + \triangle)m_t^N(1 - 4N^2\eta \triangle x_t^\frac{N-2}N) - x_t m_t^N
    &\geq 
    (x_t + \triangle)m_t^N
    \left(1 - \frac{1}{2x^\star} \triangle \right) - x_t m_t^N \\ 
    &\geq m_t^N\left(x_t + \triangle -\frac{x_t}{2 x^\star} \triangle - \frac{1}{2x^\star} \triangle^2 - x_t\right)\\
    &\geq m_t^N\triangle\left(1-\frac{x_t}{2x^\star} -\frac{\triangle}{2x^\star}\right)\\
        &\geq m_t^N\triangle\left(1-\frac12 -\frac12\right) \geq 0.
\end{align*}
Therefore, we obtain that
\[
x_{t+1}^+ \geq x_t m_t^N = x_{t+1}.
\]
For $x_{t}^-$, it follows a similar proof.
\end{proof}

\begin{lemma}[Iterates with bounded errors monotonic behaviour]
\label{lemma: iterates_withB}
Consider the previous setting with $B\leq \frac15 x^\star$, $\eta\leq \frac{1}{6N^2 (x^\star)^{(2N-2)/N}}$. Then the following holds
\begin{enumerate}
    \item If $|x_t-x^\star|>B$ then $|x_{t+1}-x^\star|<|x_t-x^\star|$.
    \item If $|x_t - x^\star|\leq B$ then $|x_{t+1}-x^\star| \leq B$.
\end{enumerate}
\end{lemma}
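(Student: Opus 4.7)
The plan is a one-step sandwich argument. For fixed $x_t$, the map $b_t \mapsto x_{t+1}(b_t) = x_t(1 - 2N\eta(x_t - x^\star + b_t)x_t^{(N-2)/N})^N$ is monotonically decreasing in $b_t$ whenever the base of the $N$-th power is non-negative, which is ensured by the step-size bound. So plugging in the two worst cases $b_t = \pm B$ gives
\[
x_{t+1}^{\downarrow} \le x_{t+1} \le x_{t+1}^{\uparrow},
\]
where $x_{t+1}^{\uparrow}$ is the unperturbed one-step update toward target $x^\star + B$ and $x_{t+1}^{\downarrow}$ is the unperturbed update toward target $x^\star - B$. Each extremal update can then be controlled by Lemma~\ref{lemma: mono_noB} applied with the corresponding shifted target; this is exactly the mechanism already set up in the proof of Lemma~\ref{lemma: mono_withB}.

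For Part 2, suppose $x_t \in [x^\star - B, x^\star + B]$. Applying Lemma~\ref{lemma: mono_noB} Part 1 to the upper extremal (target $x^\star + B$, initial value $0 < x_t \le x^\star + B$) yields $x_{t+1}^{\uparrow} \le x^\star + B$. Applying Lemma~\ref{lemma: mono_noB} Part 2 to the lower extremal (target $x^\star - B$, initial value $x_t \in [x^\star - B, \tfrac32(x^\star - B)]$) yields $x_{t+1}^{\downarrow} \ge x^\star - B$. The only nontrivial interval containment is $x^\star + B \le \tfrac32(x^\star - B)$, which simplifies to $B \le x^\star/5$ — exactly the lemma's hypothesis. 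Hence $|x_{t+1} - x^\star| \le B$. The step-size requirements of Lemma~\ref{lemma: mono_noB} for the shifted targets $x^\star \pm B \in [\tfrac45 x^\star, \tfrac65 x^\star]$ are only worsened by an absolute constant factor, which the chosen $\eta \le \frac{1}{6N^2 (x^\star)^{(2N-2)/N}}$ absorbs.

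For Part 1 I split on $x_t > x^\star + B$ versus $x_t < x^\star - B$. In either subcase, $x_t - x^\star + b_t$ has a fixed sign for all $b_t \in [-B, B]$, so the multiplier is strictly on the correct side of $1$, forcing a strict monotone step toward $x^\star$. Combined with the same sandwich as in Part 2 — which places $x_{t+1}$ inside $[x^\star - B, x^\star + B]$, a set strictly contained in the open interval between $2x^\star - x_t$ and $x_t$ precisely when $|x_t - x^\star| > B$ — one gets $|x_{t+1} - x^\star| < |x_t - x^\star|$.

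The main obstacle is the sub-case $x_t > x^\star + B$ of Part 1: applying Lemma~\ref{lemma: mono_noB} Part 2 to the lower extremal requires $x_t \le \tfrac32(x^\star - B)$, which is not automatic from $x_t > x^\star + B$ alone. In practice this is handled by an implicit invariant (inherited, e.g., from Lemma~\ref{lemma: mono_withB}, which confines the iterates to $[0, x^\star + B]$ once they enter it), making this sub-case vacuous in all downstream applications. A cleaner alternative is to verify the contraction directly for $x_t$ above $\tfrac32(x^\star+B)$ by an explicit estimate on the multiplier, exploiting that the base of the $N$-th power is strictly below $1$ and bounded away from it by an amount proportional to $x_t - x^\star$. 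Either way, once inside the invariant region the induction proceeds without further issue.
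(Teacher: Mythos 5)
Your proof is correct and follows the same route as the paper: sandwich the perturbed one-step update between the two unperturbed extremal updates with targets $x^\star \pm B$ and invoke Lemmas~\ref{lemma: mono_noB} and~\ref{lemma: mono_withB} — the paper's entire proof of this lemma is a single sentence doing exactly that, so your write-up is strictly more detailed. The caveat you raise about the sub-case $x_t > x^\star + B$ of Part~1 (that Lemma~\ref{lemma: mono_noB} Part~2 applied to the target $x^\star - B$ needs $x_t \le \tfrac32(x^\star - B)$, which the hypothesis alone does not supply) is a genuine implicit restriction that the paper's proof also leaves unstated; as you observe, it is harmless downstream because every application either confines the iterates to $[0, x^\star + B]$ via Lemma~\ref{lemma: mono_withB} or starts them in $(x^\star + 4B, \tfrac65 x^\star)$, and $\tfrac65 x^\star \le \tfrac32(x^\star - B)$ under $B \le \tfrac15 x^\star$.
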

\begin{proof}
The choice of $B$ and step size $\eta$ ensures us to apply Lemma \ref{lemma: mono_noB} and Lemma \ref{lemma: mono_withB} to the sequences $(x^-_t)_{t\geq0}$ and $(x^+_t)_{t\geq0}$.
\end{proof}

\begin{lemma}[Iterates with $B$ near convergence]
\label{lemma: near with B}
Consider the setting as before. Then the following holds:
\begin{enumerate}
    \item If $\frac12 (x^\star - B) \leq x_0 \leq x^\star - 5B$ then for any $t\geq \frac{2}{\eta N^2 (x^\star)^\frac{2N-2}N}$we have 
    \[
    |x^\star - x_t| \leq \frac12 |x_0-x^\star|.
    \]
    \item If $x^\star + 4B < x_0 < \frac{6}{5} x^\star$ then for any $t\geq \frac{4}{\eta N^2 (x^\star)^\frac{2N-2}N}$ we have 
    \[
    |x^\star - x_t| \leq \frac12 |x_0-x^\star|.
    \]
\end{enumerate}
\end{lemma}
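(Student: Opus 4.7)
The plan is to combine the squeezing bound from the ``squeezing iterates with bounded errors'' lemma with the noiseless halving result applied to the sandwich sequences $(x_t^{\pm})_{t \ge 0}$ whose targets are $x^\star \pm B$. Concretely, the perturbed update with $|b_t|\le B$ is dominated from below by the noiseless iterate toward $x^\star - B$ and from above by the noiseless iterate toward $x^\star + B$. Both parts of the claim reduce to the ``iterates behaviour near convergence'' lemma applied to these shifted-target sequences, once its hypotheses are verified and once we argue that the $\Theta(B)$ slack introduced by the sandwich is absorbed by the $5B$ (resp.\ $4B$) margin in the statement.

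For Part~1, the assumption $\tfrac12(x^\star - B)\le x_0\le x^\star - 5B$ ensures $x_0\le x^\star + B$, so the squeezing lemma gives $x_t^-\le x_t\le x_t^+ \le x^\star + B$ for all $t\ge 0$. Applying the noiseless halving lemma (Part~1) to $(x_t^-)$ with effective target $x^\star - B$ (the precondition $x_0\ge\tfrac12(x^\star-B)$ is exactly our hypothesis), after $t\gtrsim \tfrac{2}{\eta N^2(x^\star-B)^{(2N-2)/N}}$ iterations one obtains
\[
    x_t \;\ge\; x_t^- \;\ge\; (x^\star - B) - \tfrac12\bigl((x^\star - B) - x_0\bigr) \;=\; \tfrac12(x_0 + x^\star - B).
\]
Combined with the trivial upper bound $x_t\le x^\star + B$ from the sandwich, this yields $|x^\star - x_t|\le \tfrac12(x^\star - x_0) + \tfrac{B}{2}$, and the margin $x^\star - x_0\ge 5B$ (so that $\tfrac{B}{2}\le \tfrac{1}{10}|x_0-x^\star|$) converts this to $|x^\star - x_t|\le \tfrac12|x_0-x^\star|$ once we allow either a slightly larger iteration count or one additional halving to shave the residual. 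Replacing $(x^\star - B)^{(2N-2)/N}$ by $(x^\star)^{(2N-2)/N}$ is absorbed into the implicit constant since $B\le \tfrac15 x^\star$.

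Part~2 is handled symmetrically in the decreasing regime. Starting from $x^\star + 4B < x_0 < \tfrac65 x^\star$, the iterates decrease, and the analogous sandwich has $x_t^\pm$ approaching $x^\star\pm B$ from above. Applying the noiseless halving lemma (Part~2) to $(x_t^+)$ with effective target $x^\star + B$ halves the error to $x^\star + B$, and the sandwich's lower limit $x_t\ge x^\star - B$ handles the other side. The resulting bound $|x^\star-x_t|\le \tfrac12(x_0-x^\star) + \tfrac{B}{2}$ is compressed to $\tfrac12|x_0-x^\star|$ by the $4B$ margin, and the doubled iteration count $4/(\eta N^2 (x^\star)^{(2N-2)/N})$ reflects the slightly worse starting configuration plus room for a possible second halving.

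The main obstacle is precisely this irreducible $\Theta(B)$ gap between the sandwich targets and $x^\star$: a single halving of $(x_t^{-})$ or $(x_t^{+})$ contracts the error to $\tfrac12|x_0-x^\star| + \tfrac{B}{2}$ rather than exactly $\tfrac12|x_0-x^\star|$. The $5B$ and $4B$ margins are calibrated precisely so that this additive $O(B)$ overhead is dominated by the geometric halving of the gap, and so that one (or at worst two) applications of the noiseless halving suffice within the stated iteration budgets.
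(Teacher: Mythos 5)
Your proposal is correct and follows essentially the same route as the paper: sandwich the perturbed iterate between the noiseless sequences $(x_t^{\pm})$ with shifted targets $x^\star \pm B$, apply the noiseless halving lemma to the relevant shifted sequence, and use the $5B$ (resp.\ $4B$) margin to absorb the residual $O(B)$ gap. The paper resolves the slack exactly by the second of your two options --- applying the halving lemma twice so that $\tfrac14|x_0-x^\star|+\tfrac54 B\le\tfrac12|x_0-x^\star|$ --- and handles the side $x_t\ge x^\star$ (resp.\ $x_t\le x^\star$) separately via the sandwich bound $|x_t-x^\star|\le B\le\tfrac15|x_0-x^\star|$, just as you sketch.
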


\begin{proof}
\begin{enumerate}
    \item To prove the first part, let us first apply Lemma \ref{lemma: near_noB} on $x^-_t$ twice, therefore for all 
    \[
    t \geq \frac{25}{4\eta N^2 (x^\star)^\frac{2N-2}N}\geq 2\frac{2}{\eta N^2 (x^\star- B)^\frac{2N-2}N}
    \]
    we have
    \begin{align*}
        0 &\leq (x^\star-B) - x^-_t \\
        &\leq \frac14 |x_0 - (x^\star-B)|\\
        &\leq \frac14 |x_0 - x^\star| + \frac14 B.
    \end{align*}
    When $x_t\leq x^\star$, from Lemma \ref{lemma: mono_withB} we have
    \begin{align*}
        0 &\leq x^\star - x_t \\
        &\leq x^\star - x_t^-\\
        &\frac14 |x_0-x^\star| + \frac54 B\\
        &\leq \frac12 |x_0 - x^\star|.
    \end{align*}
    
    When $x_t \geq x^\star$ then by Lemma \ref{lemma: mono_withB} we have
    \[
    0 \leq x_t - x^\star \leq B \leq \frac15 |x_0-x^\star|,
    \]
    where both last inequalities are from $x_0 \leq x^\star - 5B$.
    \item The second part follows a very similar proof for $x^+_t$, the number of iterations would be
    \[
    t\geq \frac{4}{\eta N^2 (x^\star)^\frac{2N-2}N} \geq 2\frac{2}{\eta N^2 (x^\star+ B)^\frac{2N-2}N}.
    \]
\end{enumerate}
\end{proof}

\begin{lemma}[Overall iterates with $B$]
Consider the same setting as before. Fix any $\epsilon > 0$, then the following holds
\begin{enumerate}
    \item If $B+\epsilon < |x^\star-x_0| \leq \frac15 x^\star$ then for any $t\geq \frac{15}{4\eta N^2 (x^\star)^\frac{2N-2}N} \log \frac{|x^\star - x_0|}{\epsilon}$ iterations we have $|x^\star - x_t | \leq B+\epsilon$.
    \item If $0<x_0\leq x^\star-B-\epsilon$ then for any 
    \[
    t\geq
    \frac{75}{16\eta N^2 (x^\star)^\frac{2N-2}N} \log \frac{|x^\star - x_0|}{\epsilon}
    +
    \frac{15}{8N(N-2)\eta x^\star x_0^{(N-2)/N}}
    \]
    we have $x^\star-B-\epsilon \leq x_t \leq x^\star+B$.
\end{enumerate}
\label{lemma: overall iterates with B}
\end{lemma}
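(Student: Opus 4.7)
The plan is to reduce this lemma to the no-$B$ analogues already established in Lemma \ref{lemma: all iterates}, by combining the sandwich $x_t^- \le x_t \le x_t^+$ from Lemma \ref{lemma: mono_withB} with the halving lemma \ref{lemma: near with B} and the early-phase lemma \ref{lemma: begin_noB}.

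For the first part, the hypothesis $|x^\star - x_0| \le \tfrac{1}{5}x^\star$ places $x_0$ in the window where Lemma \ref{lemma: near with B} applies (via its two sub-cases according as $x_t$ lies below or above $x^\star$). Each invocation halves $|x^\star - x_t|$ at a cost of at most $\frac{4}{\eta N^2 (x^\star)^{(2N-2)/N}}$ iterations. I would iterate this $\lceil \log_2 \frac{|x^\star-x_0|}{B+\epsilon}\rceil$ times until the error falls below $B+\epsilon$; Lemma \ref{lemma: iterates_withB} then guarantees that subsequent iterates remain trapped in $[x^\star-B,\,x^\star+B]$. Bounding $B+\epsilon \ge \epsilon$ and converting base-$2$ to natural logarithms yields the stated prefactor $\frac{15}{4}$.

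For the second part, since $x_0$ may be arbitrarily small I first bring the iterate into the regime required by part 1. By Lemma \ref{lemma: mono_withB}, $x_t \ge x_t^-$, where $x_t^-$ satisfies the clean no-$B$ recursion with target $x^\star - B \ge \tfrac{4}{5}x^\star$ (using $B \le \tfrac{1}{5}x^\star$). Applying Lemma \ref{lemma: begin_noB} to $x_t^-$ shows that after at most $\frac{3}{2N(N-2)\eta(x^\star-B)x_0^{(N-2)/N}} \le \frac{15}{8N(N-2)\eta\, x^\star\, x_0^{(N-2)/N}}$ iterations we have $x_t \ge x_t^- \ge \tfrac{1}{2}(x^\star - B)$, putting us in the setting of part 1, which then contributes the $\frac{75}{16\eta N^2 (x^\star)^{(2N-2)/N}} \log\frac{|x^\star - x_0|}{\epsilon}$ term. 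The upper bound $x_t \le x_t^+ \le x^\star + B$ is immediate from Lemma \ref{lemma: mono_withB}, and the lower bound $x_t \ge x^\star - B - \epsilon$ follows from the no-$B$ near-convergence result (Lemma \ref{lemma: near_noB}) applied to $x_t^-$ around its target $x^\star - B$.

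The main obstacle I anticipate is the transition window $|x^\star - x_t| \in (B+\epsilon,\,5B)$, where Lemma \ref{lemma: near with B} no longer strictly halves the error. In that regime I would fall back on the sandwich: since $x_t^\pm$ obey the clean no-$B$ dynamics with targets $x^\star \mp B$, Lemma \ref{lemma: near_noB} continues to drive each toward its target, squeezing $x_t$ into the $B$-neighborhood. Carefully tracking constants — in particular, preferentially using part 1 of Lemma \ref{lemma: near with B} (with its tighter factor $2$ rather than $4$) whenever $x_t < x^\star$, which is the generic situation under the assumption $x_0 \le x^\star - B - \epsilon$ and the sandwich's monotonicity — is what sharpens the naive constant $\tfrac{4}{\ln 2} \approx 5.77$ down to the stated $\tfrac{15}{4}$.
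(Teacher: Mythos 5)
Your overall strategy — sandwiching $x_t$ between the extreme sequences $x_t^\pm$ and reducing to the no-$B$ machinery — is exactly the paper's route, and the treatment of part 2 (Lemma \ref{lemma: begin_noB} applied to $x_t^-$ with target $x^\star-B\ge\frac45 x^\star$, absorbing the $(5/4)$-factors into $\frac{15}{8}$ and $\frac{75}{16}$) matches the paper's proof, which simply invokes the second part of Lemma \ref{lemma: all iterates} on $(x_t^-)_{t\ge0}$. Two points do not close as written, though. First, for part 1 your primary route is to iterate Lemma \ref{lemma: near with B}; when $x_0>x^\star$ you are forced into its second clause with cost $\frac{4}{\eta N^2(x^\star)^{(2N-2)/N}}$ per halving, giving $\frac{4}{\ln 2}\approx 5.77$ in front of the logarithm, which exceeds the claimed $\frac{15}{4}$, and your proposed sharpening (prefer the factor-$2$ clause when $x_t<x^\star$) is unavailable precisely in this case since part 1 of the lemma allows $x_0>x^\star$. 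The paper avoids this by never halving with $B$ at all: it applies part 1 of the no-$B$ Lemma \ref{lemma: all iterates} (constant $3$) directly to $x_t^+$ with shifted target $x^\star+B$, then uses $(x^\star+B)^{(2N-2)/N}\ge(x^\star)^{(2N-2)/N}$ and Lemma \ref{lemma: iterates_withB} for trapping; your own ``fallback'' is essentially this and should be promoted to the main argument. Second, in part 2 you cannot literally land ``in the setting of part 1'' after the initial phase: Lemma \ref{lemma: begin_noB} only delivers $x_t\ge\frac12(x^\star-B)$, so $|x^\star-x_t|$ can be close to $\frac12 x^\star$, violating part 1's hypothesis $|x^\star-x_0|\le\frac15 x^\star$. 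You need the $\frac12 x^\star$-window version, i.e.\ apply part 2 of Lemma \ref{lemma: all iterates} to $x_t^-$ in one shot (as the paper does), or iterate the halving lemma starting from $\frac12(x^\star-B)$ rather than routing through part 1 of the present lemma.
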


\begin{proof}
\begin{enumerate}
    \item If $x_0 > x^\star + B$ then by Lemma \ref{lemma: mono_withB} and Lemma \ref{lemma: iterates_withB} we only need to show that $(x^+_t)_{t\geq0}$ hits $x^\star + B + \epsilon$ within the desired number of iterations. From the first part of Lemma \ref{lemma: all iterates}, we see that
    \[
    \frac{3}{\eta N^2 (x^\star+B)^\frac{2N-2}N} \log \frac{|x^\star+B- x_0|}{\epsilon}
    \leq 
    \frac{15}{4\eta N^2 (x^\star)^\frac{2N-2}N} \log \frac{|x^\star - x_0|}{\epsilon}
    \]
    iterations are enough, where we require $\frac{|x^\star-x_0|}{\epsilon}\geq \frac52$.
    \item The upper bound is obtained immediately from Lemma \ref{lemma: mono_withB}. For lower bound, we simply apply the second part of Lemma \ref{lemma: all iterates} to the sequence $(x^-_t)_{t\geq0}$ to get 
    \begin{align*}
    t&\geq 
    \frac{75}{16\eta N^2 (x^\star)^\frac{2N-2}N} \log \frac{|x^\star - x_0|}{\epsilon}
    +
    \frac{15}{8N(N-2)\eta x^\star x_0^{(N-2)/N}}
     \\
    &\geq \frac{3}{\eta N^2 (x^\star-B)^\frac{2N-2}N} \log \frac{|x^\star-B - x_0|}{\epsilon}
    +
    \frac{3}{2N(N-2)\eta (x^\star-B) x_0^{(N-2)/N}}\\
    \end{align*}
    to ensure the results we wanted.
\end{enumerate}
\end{proof}

\begin{lemma}
\label{lemma: second stage convergence}
Suppose the error sequences $(\vec{b}_t)_{t\geq0}$ and $(\vec{p}_t)_{t\geq0}$ satisfy the following for any $t\geq0$:
\begin{align*}
    \norm{\vec{b}_t\odot \id{S}} &\leq B,\\
    \norm{\vec{p}_t}_\infty &\leq \frac1{20} \norm{\vec{s}_t-\wstar}_\infty.
\end{align*}
Suppose that 
\[
20B < \norm{\vec{s}_0-\wstar}_\infty \leq\frac15 \wmin.
\]
Then for $\eta \leq \frac{1}{6N^2 (\wmax)^{(2N-2)/N}}$ and any $t\geq \frac{2}{\eta N^2 (\wmax)^{(2N-2)/N}}$ we have
\[
\norm{\vec{s}_t-\wstar}_\infty \leq \frac12 \norm{\vec{s}_0-\wstar}_\infty.
\]

\end{lemma}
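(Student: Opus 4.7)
The plan is to reduce to coordinate-wise one-dimensional updates and invoke the machinery developed in Section~\ref{sec: 1d}. For $i\in S$ the signal iterate satisfies
\[
s_{i,t+1} = s_{i,t}\bigl(1 - 2N\eta(s_{i,t} - w^\star_i + b_{i,t} + p_{i,t})\, s_{i,t}^{(N-2)/N}\bigr)^N,
\]
which matches the abstract one-dimensional recursion with target $x^\star = w^\star_i$ and bounded perturbation $b_t = b_{i,t} + p_{i,t}$. I would aim to apply Lemma~\ref{lemma: near with B} coordinate-wise to halve each large per-coordinate error, and Lemma~\ref{lemma: iterates_withB} to prevent coordinates with already-small error from blowing up.

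The central quantitative step is to establish, by induction on $t$, the invariant $\norm{\vec{s}_\tau - \wstar}_\infty \leq \norm{\vec{s}_0 - \wstar}_\infty$ for all $\tau \leq t$. Granted this invariant, the uniform bound $\norm{\vec{p}_\tau}_\infty \leq \tfrac{1}{20}\norm{\vec{s}_0 - \wstar}_\infty$ follows from the hypothesis on $\vec{p}_t$, and the effective one-dimensional perturbation is
\[
B_{\mathrm{eff}} := B + \tfrac{1}{20}\norm{\vec{s}_0 - \wstar}_\infty \;\leq\; \tfrac{1}{10}\norm{\vec{s}_0 - \wstar}_\infty,
\]
using the assumption $20 B < \norm{\vec{s}_0 - \wstar}_\infty$. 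In particular, $B_{\mathrm{eff}} \leq \tfrac{1}{50}\wmin \leq \tfrac{1}{5} w^\star_i$ for every $i\in S$, so the preconditions on $B$ in Lemmas~\ref{lemma: iterates_withB} and~\ref{lemma: near with B} are met. For each $i \in S$ one of two cases occurs: if $|s_{i,0} - w^\star_i| \leq 5 B_{\mathrm{eff}}$, Lemma~\ref{lemma: iterates_withB} yields $|s_{i,t} - w^\star_i| \leq \max(|s_{i,0} - w^\star_i|, B_{\mathrm{eff}}) \leq \tfrac{1}{2}\norm{\vec{s}_0 - \wstar}_\infty$; otherwise Lemma~\ref{lemma: near with B} applies and, within the stated number of iterations, delivers $|s_{i,t} - w^\star_i| \leq \tfrac{1}{2}|s_{i,0} - w^\star_i| \leq \tfrac{1}{2}\norm{\vec{s}_0 - \wstar}_\infty$. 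Taking the maximum over $i \in S$ produces the desired halving and simultaneously closes the inductive step.

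The main obstacle is the circular dependence between $\vec{p}_t$ and $\norm{\vec{s}_t - \wstar}_\infty$; the bootstrap above resolves it, but requires simultaneously tracking every coordinate via Lemma~\ref{lemma: iterates_withB} so that the max-norm never exceeds its initial value before the halving kicks in. A secondary technical point is verifying that the uniform step-size condition $\eta \leq \tfrac{1}{6 N^2 (\wmax)^{(2N-2)/N}}$ entails the per-coordinate step-size prerequisites of the two one-dimensional lemmas, which it does because $w^\star_i \leq \wmax$ on $S$. Finally, both signs of the target must be handled: by the sign-preserving nature of the multiplicative updates on $\vec{u}_t$ and $\vec{v}_t$, the sequences $|s_{i,t}|$ on $S^{+}$ and $S^{-}$ both obey the same abstract one-dimensional recursion, so the lemmas apply uniformly across $S$ after flipping signs on $S^{-}$.
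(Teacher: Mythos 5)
Your proposal matches the paper's proof in essence: both reduce to per-coordinate one-dimensional recursions, split coordinates by the size of their initial error, invoke Lemma~\ref{lemma: iterates_withB} to keep small-error coordinates bounded and Lemma~\ref{lemma: near with B} to halve the large ones, and take the max over $S$. The only difference is that you make explicit the bootstrap resolving the circularity between $\vec{p}_t$ and $\norm{\vec{s}_t-\wstar}_\infty$, which the paper leaves implicit; the argument is otherwise the same.
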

\begin{proof}
Note that $\norm{\vec{b}_0}_\infty + \norm{\vec{p}_t}_\infty \leq \frac1{10} \norm{\vec{s}_0-\wstar}_\infty$. For any $i$ such that $|s_{0,i}-w_i^\star|\leq \frac12 \norm{\vec{s}_0-\wstar}_\infty$, Lemma \ref{lemma: iterates_withB} guarantees that for any $t\geq 0$ we have $|s_{t,i}-w_i^\star|\leq \frac12 \norm{\vec{s}_0-\wstar}_\infty$. On the other hand, for any $i$ such that $|s_{0,i}-w_i^\star| > \frac12 \norm{\vec{s}_0-\wstar}_\infty$ by Lemma \ref{lemma: near with B} we have $|s_{0,i}-w_i^\star|\leq \frac12 \norm{\vec{s}_0-\wstar}_\infty$ for any $t\geq \frac{2}{\eta N^2 (\wmax)^{(2N-2)/N}}$ which concludes the proof.
\end{proof}

\subsection{Dealing with Negative Targets}
\label{sec: neg}

\begin{lemma}
Let $x_t = u^N- v^N$ and $x^\star\in\mathbb{R}$ be the target such that $|x^\star|>0$. Suppose the sequences $(u_t)_{t\geq0}$ and $(v_t)_{t\geq0}$ evolve as follows
\begin{align*}
0<u_0=\alpha, 
\quad
&
u_{t+1} = u_t(1-2N\eta(x_t-x^\star+b_t)u_t^{N-2}),\\
0<v_0=\alpha, 
\quad
&
v_{t+1} = v_t(1+2N\eta(x_t-x^\star+b_t)v_t^{N-2}),
\end{align*}
where $\alpha \leq (2-2^\frac{N-2}N)^\frac{1}{N-2}|x^\star|^{1/N}$ and there exists $B>0$ such that $|b_t|\leq B$
and $\eta\leq \frac{\alpha}{4N(N-2)(x^\star+B)x^\star}$. 
Then the following holds:
For any $t\geq0$ we have
\begin{itemize}
    \item If $x^\star > 0$ and $u_t^N \geq x^\star$, 
    then $v_t^N \leq \frac12 \alpha^N$.
    \item If $x^\star < 0$ and $v_t^N \geq |x^\star|$, 
    then $u_t^N \leq \frac12 \alpha^N$.
\end{itemize}
\label{lemma:neg growth}
\end{lemma}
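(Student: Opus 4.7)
My plan is to exploit the symmetry between the $u_t$ and $v_t$ updates, guided by the exact conservation law that holds in the gradient flow limit. Recall from the discussion preceding the lemma that the continuous-time trajectories satisfy $u(t)^{-(N-2)} + v(t)^{-(N-2)} = 2\alpha^{-(N-2)}$ identically for every $t \geq 0$, irrespective of the residual. I would establish a discrete analog by introducing the potential $\Phi_t := u_t^{-(N-2)} + v_t^{-(N-2)}$ and proving by induction on $t$ that $\Phi_t \geq 2\alpha^{-(N-2)}$. The base case $\Phi_0 = 2\alpha^{-(N-2)}$ is immediate since $u_0 = v_0 = \alpha$; once the monotone invariant is in hand, both conclusions of the lemma will drop out algebraically.

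\textbf{Inductive step via convexity.} Writing $r_t = x_t - x^\star + b_t$, the updates recast as $u_{t+1}^{-(N-2)} = u_t^{-(N-2)} f(2N\eta r_t u_t^{N-2})$ and $v_{t+1}^{-(N-2)} = v_t^{-(N-2)} f(-2N\eta r_t v_t^{N-2})$, where $f(y) = (1-y)^{-(N-2)}$. For $N>2$ the function $f$ is convex on $(-\infty,1)$ because $f''(y) = (N-2)(N-1)(1-y)^{-N} > 0$, so the tangent-line inequality $f(y) \geq 1 + (N-2)y$ holds throughout the domain. Applying it to each update and summing, the first-order contributions in $r_t$ cancel exactly:
\[
\Phi_{t+1} \geq \Phi_t + 2N(N-2)\eta r_t - 2N(N-2)\eta r_t = \Phi_t,
\]
which closes the induction.

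\textbf{Deducing the bound.} Consider first $x^\star > 0$ and suppose $u_t^N \geq x^\star$. Then $u_t^{-(N-2)} \leq (x^\star)^{-(N-2)/N}$, and the invariant gives $v_t^{-(N-2)} \geq 2\alpha^{-(N-2)} - (x^\star)^{-(N-2)/N}$. The hypothesis on $\alpha$, raised to the $(N-2)$ power, is equivalent to $(x^\star)^{-(N-2)/N} \leq (2 - 2^{(N-2)/N})\alpha^{-(N-2)}$, so
\[
v_t^{-(N-2)} \geq \bigl(2 - (2 - 2^{(N-2)/N})\bigr)\alpha^{-(N-2)} = 2^{(N-2)/N}\alpha^{-(N-2)},
\]
which rearranges to $v_t^N \leq \tfrac{1}{2}\alpha^N$. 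The $x^\star < 0$ case follows by symmetry after swapping the roles of $u$ and $v$, using $|x^\star|^{1/N}$ in place of $(x^\star)^{1/N}$.

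\textbf{Main obstacle.} The remaining work is to certify that the discrete updates stay in the regime where the convexity argument applies, namely that $1 - 2N\eta r_t u_t^{N-2} > 0$ and $1 + 2N\eta r_t v_t^{N-2} > 0$ at every step, so that $u_t, v_t$ remain positive and the arguments of $f$ lie in $(-\infty, 1)$. This is precisely where the step-size condition $\eta \leq \alpha/[4N(N-2)(x^\star+B)x^\star]$ is needed. My plan is to run a parallel induction supplying a crude a priori envelope on $u_t$ and $v_t$, e.g., that neither exceeds a small constant multiple of $|x^\star|^{1/N}$ up to the horizon of interest, in the spirit of Lemma~\ref{lemma: mono_withB}; combining this envelope with $|r_t| \leq |x_t| + |x^\star| + B$ then certifies that $|2N\eta r_t u_t^{N-2}|$ and $|2N\eta r_t v_t^{N-2}|$ both remain well below $1$. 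This coupled bookkeeping is the most delicate piece, since the invariant $\Phi_t \geq 2\alpha^{-(N-2)}$ by itself does not preclude a dramatic overshoot from a single large step and so cannot be the sole safeguard against the updates leaving the convexity domain.
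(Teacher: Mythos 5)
Your proposal is correct and takes essentially the same approach as the paper: the paper establishes the very same one-sided invariant $u_t^{2-N}+v_t^{2-N}\ge 2\alpha^{2-N}$ (via a sum-versus-integral comparison of $\sum_i (u_{i+1}-u_i)/u_i^{N-1}$ rather than your tangent-line inequality for $(1-y)^{-(N-2)}$) and then performs the identical algebraic deduction from $u_t^N\ge x^\star$ and the hypothesis on $\alpha$. The positivity/domain issue you flag as the main obstacle is treated no more explicitly in the paper, which simply relies on the stated step-size condition.
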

\begin{proof}
Let us assume $x^\star>0$ first and prove the first statement.
From the updating equation, we obtain that
\[
\frac{u_{t+1}-u_t}{u_t^{N-1}} = -2N\eta(x_t-x^\star+b_t).
\]
Therefore, 
\begin{align*}
    \sum_{i=0}^t -2N\eta(x_i-x^\star+b_i) 
    &= \sum_{i=0}^t \frac{u_{i+1}-u_i}{u_i^{N-1}}\\
    &\geq \sum_{i=0}^t \int_{u_i}^{u_{i+1}} \frac{1}{u^{N-1}} du\\
    & = \int_{u_0}^{u_t} \frac1{u^{N-1}}du \\
    &= (2-N)(u_t^{2-N} - u_0^{2-N}).
\end{align*}

When $u_t^N \geq x^\star$, we have that $u_t^{2-N} \leq (x^\star)^{(2-N)/N}$. Therefore,
\[
\sum_{i=1}^t -2N\eta(x_i-x^\star+b_i) 
\geq (2-N)(u_t^{2-N}-u_0^{2-N}).
\]

Similarly for $v_t$, we have
\begin{align*}
    \sum_{i=1}^t 2N\eta(x_i-x^\star+b_i)
    &= 
    \sum_{i=0}^t \frac{v_{i+1}-v_i}{v_i^{N-1}}\\
    &\geq (2-N)(v_t^{2-N} - v_0^{2-N}).
\end{align*}

Therefore, we have that
\begin{align*}
    &(N-2)((x^\star)^\frac{2-N}{N} - \alpha^{2-N})
    \geq (2-N)(v_t^{2-N} - \alpha^{2-N}).\\
    &\Longrightarrow 
    (\alpha^{2-N}-(x^\star)^\frac{2-N}N) + \alpha^{2-N} \leq v_t^{2-N}.\\
    &\Longrightarrow
    v_t \leq \left(\frac{1}{2\alpha^{2-N} - (x^\star)^\frac{2-N}N}\right)^{\frac{1}{N-2}}\\
    &\Longrightarrow 
    v_t \leq \left(\frac{1}{2 - \alpha^{N-2}/(x^\star)^\frac{N-2}N}\right)^{\frac{1}{N-2}}\alpha\\
    &\Longrightarrow 
    v_t \leq \left(\frac{1}{2-(2-2^{\frac{N-2}{N}})}\right)^\frac{1}{N-2}\alpha\\
    &\Longrightarrow
    v_t \leq 2^\frac{1}{N} \alpha.
\end{align*}

For $x^\star<0$, we obtain a similar result by symmetry.
\end{proof}

\begin{lemma}
\label{lemma: negative}
Let $x_t = x_t^+-x_t^-$ and $x^\star\in\mathbb{R}$ be the target such that $|x^\star|>0$. Suppose the sequences $(x_t^+)_{t\geq0}$ and $(x_t^-)_{t\geq0}$ evolve as follows
\begin{align*}
    0<x_0^+=\alpha^N \leq 2^3(2^\frac1N-1)^\frac{N}{N-2}|x^\star|,\quad 
    x_{t+1}^+ = x_t^+(1-2N\eta(x_t-x^\star+b_t)(x_t^+)^{(N-2)/N})^N,\\
    0<x_0^-=\alpha^N \leq 2^3(2^\frac1N-1)^\frac{N}{N-2}|x^\star|,\quad 
    x_{t+1}^- = x_t^-(1+2N\eta(x_t-x^\star+b_t)(x_t^-)^{(N-2)/N})^N,\\
\end{align*}
and that there exists $B > 0$ such that $|b_t|\leq B$ and $\eta \leq \frac{1}{8N(x^\star+B)(x^\star)^{(N-2)/N}}$.
Then the following holds:
For any $t\geq0$ we have
    \begin{itemize}
        \item If $x^\star > 0 $ then $x_t^- \leq \alpha^N \Pi_{i=0}^{t-1}(1+2N\eta|b_t|(x_i^-)^{(N-2)/N})^N$.
        \item If $x^\star < 0 $ then $x_t^+ \leq \alpha^N \Pi_{i=0}^{t-1}(1+2N\eta|b_t|(x_i^+)^{(N-2)/N})^N$.
    \end{itemize}
\end{lemma}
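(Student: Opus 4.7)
By the symmetry of the dynamics under the swap $(x^+,x^-)\leftrightarrow(x^-,x^+)$ together with $x^\star\to -x^\star$, it suffices to prove the first bullet (the case $x^\star>0$); the second bullet then follows with the roles of the two sequences interchanged. I would proceed by induction on $t$ with the stated inequality as hypothesis. The base case $t=0$ is immediate since $x_0^-=\alpha^N$ and the product over an empty index set equals $1$.

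For the inductive step, substituting the exact update
\[
x_{t+1}^-=x_t^-\bigl(1+2N\eta(x_t-x^\star+b_t)(x_t^-)^{(N-2)/N}\bigr)^N
\]
into the inductive hypothesis reduces the goal to the one-step scalar bound
\[
x_t-x^\star+b_t\leq |b_t|,
\]
equivalently $x_t\leq x^\star+|b_t|-b_t$. The step-size condition $\eta\leq 1/(8N(x^\star+B)(x^\star)^{(N-2)/N})$ keeps the base of the $N$-th power positive, so once the scalar inequality is in hand, monotonicity of $y\mapsto y^N$ on $[0,\infty)$ yields the factor comparison that closes the induction term-by-term.

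I plan to verify the scalar bound by splitting on whether $u_t^N=x_t^+$ lies below or above $x^\star$. If $u_t^N\leq x^\star$, then $x_t=x_t^+-x_t^-\leq x^\star\leq x^\star+|b_t|-b_t$ trivially, and the induction closes directly. If instead $u_t^N>x^\star$, I invoke Lemma~\ref{lemma:neg growth}, which under the initialization assumption $\alpha^N\leq 2^3(2^{1/N}-1)^{N/(N-2)}|x^\star|$ gives $x_t^-=v_t^N\leq\tfrac12\alpha^N$ (this uses the discrete conservation-like inequality obtained by telescoping $\frac{u_{i+1}-u_i}{u_i^{N-1}}$ and $\frac{v_{i+1}-v_i}{v_i^{N-1}}$ against $\int u^{1-N}\,du$). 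Since the product on the right-hand side is always at least $1$, the desired bound $x_t^-\le \alpha^N\prod_{i<t}(\cdots)$ already holds \emph{for free} at step $t$; to propagate to step $t+1$, I would use the standard overshoot bound $x_t^+\le x^\star+B$ consistent with the step size to control $\delta_t:=x_t-x^\star+b_t$, and then check that the resulting one-step multiplicative factor acting on the already-small $x_t^-\le\tfrac12\alpha^N$ still leaves $x_{t+1}^-$ within the target product.

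The main obstacle is stitching the two regimes together at the transition time when $u_t^N$ first crosses $x^\star$ (or drops back across it), where the bounds from Lemma~\ref{lemma:neg growth} and from the direct monotone comparison have to be matched. The initialization constraint has been calibrated precisely so that the slack factor $\tfrac12$ from Lemma~\ref{lemma:neg growth} absorbs the extra one-step contribution; hence each regime individually already implies the conclusion, and the induction succeeds by showing that the two cases cover all $t$ uniformly.
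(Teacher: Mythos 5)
Your proposal is correct and follows essentially the same route as the paper: both arguments rest on the dichotomy of whether $x_t^+$ exceeds $x^\star$, invoke Lemma~\ref{lemma:neg growth} to get $x_t^-\le\tfrac12\alpha^N$ in the overshoot regime, bound the single "bad" step's multiplicative factor by $2$ via the initialization and step-size conditions, and use the sign of $x_t-x^\star$ to compare factors in the undershoot regime. The only difference is presentational — you run a forward induction where the paper locates the last time $s$ with $x_s^+>x^\star$ and telescopes from there — which amounts to the same argument.
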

\begin{proof}

Assume $x^\star > 0$ and fix any $t\geq0$. Let $0\leq s \leq t$ be the largest $s$ such that $x_s^+ > x^\star$. If no such $s$ exists we are done immediately. If $s=t$ then by the first part we have $x^-_t\leq\alpha^N$ and we are done.

If $s<t$, by Lemma \ref{lemma:neg growth}, we have $x_s^- \leq \frac12 \alpha^N$. From the requirement of initialization, we have
\begin{align*}
    (1+2N\eta(x_s^+-x_s^- - x^\star + b_s)(x_s^-)^{(N-2)/N})^N
    \leq 
    \left(1+\frac{(\frac12 \alpha^N)^{\frac{N-2}{N}}}{4 (x^\star)^{\frac{N-2}{N}}}\right)^N \leq (1+2^{\frac1N}-1)^N = 2.
\end{align*}

Therefore 

\begin{align*}
    x_t^- 
    &= x_s^- \prod_{i=s}^{t-1} (1+2N\eta(x_i^+-x_i^- - x^\star + b_i)(x_i^-)^{(N-2)/N})^N \\
    &= \frac12 \alpha^N \cdot 2
    \prod_{i=s+1}^{t-1} (1+2N\eta(x_i^+-x_i^- - x^\star + b_i)(x_i^-)^{(N-2)/N})^N \\
    &\leq \alpha^N \prod_{i=s+1}^{t-1} (1+2N\eta|b_i|(x_i^-)^{(N-2)/N})^N.
\end{align*}
This completes the proof for $x^\star >0$. It follows a similar proof for the case $x^\star < 0$.
\end{proof}

\section{Proof of Propositions and Technical Lemmas}
In this section, we provide the proof for the propositions and technical lemmas mentioned in Appendix~\ref{sec:proof-for-non-negative}.
\label{sec: proof of prop}

\subsection{Proof of Proposition 1}

By the assumptions on $(\vec{b}_t)_{t\geq0}$ and $(\vec{p}_t)_{t\geq0}$, we obtain that
\begin{align*}
    \norm{\vec{b}_t}_\infty &\leq C_b\zeta - \alpha^{N/4},\\
\norm{\vec{p}_t}_\infty &\leq \frac{C_\gamma}{\wmax/\zeta}\norm{\vec{s}_t-\wstar}_\infty \leq \frac{C_\gamma}{\wmax/\zeta} \wmax \leq C_\gamma \zeta.\\
\end{align*}
Choose $C_b$ and $C_\gamma$ such that $C_b+C_\gamma\leq 1/40$. Therefore, we have
\[
B 
\leq 
\norm{\vec{b}_t}_\infty + \norm{\vec{p}_t}_\infty + \alpha^{N/4}
\leq 
(C_b+C_\gamma)\zeta \leq \frac{1}{40} \zeta.
\]

For any $j$ such that $w_j^\star\geq \frac12 \zeta$, we have that $B\leq \frac1{20} w_j^\star$. 
Therefore, by applying Lemma \ref{lemma: overall iterates with B}, we know when 
\begin{align*}
t\geq 
    \frac{75}{16\eta N^2 \zeta^{(2N-2)/N}} \log \frac{|\wmax - \alpha^N|}{\epsilon}
    +
    \frac{15}{8N(N-2)\eta \zeta \alpha^{(N-2)}}=T_1,
\end{align*}
we have $|w_{j,t} - w_j^\star|\leq \zeta$. 

On the other hand, for any $j$ such that $w_j^\star\leq \frac12\zeta$, $w_{j,t}$ will stay in $(0,w_j^\star+\frac1{40} \zeta]$ maintaining $|w_{j,t} -w_j^\star|\leq \zeta$ as required.

By Lemma \ref{lemma: bounded error}, we have that $\norm{\vec{e}_t}_\infty\leq \alpha^{N/2}$ up to 
\[
T_2 = \frac{5}{N(N-1)\eta \zeta}\left(\frac{1}{\alpha^{N-2}} - \frac{1}{\alpha^\frac{N-2}2} \right).
\]

From our choice of initialization $\alpha$, we can see that $T_1\leq T_2$ is ensured. To see this,
\begin{equation}
\begin{aligned}
& 
\alpha \leq
\left(\frac18\right)^{2/(N-2)} \wedge
\left(\frac{\zeta^{(N-2)/N}}{\log \frac{\wmax}{\epsilon}} \right)^{2/(N-2)}
\\
\Longrightarrow& 
\alpha^{(N-2)/2} \leq \frac18 \wedge
\frac{16\zeta^{(N-2)/N}}{15\log \frac{\wmax}{\epsilon}} 
\\
\Longrightarrow&
4\alpha^{(N-2)/2}\left(\frac{15}{16}\alpha^{(N-2)/2} \log \frac{\wmax}{\epsilon}
    +
\zeta^{(N-2)/N}\right)
\leq 
\zeta^{(N-2)/N}
\\
\Longrightarrow&
\frac{15}{2\zeta^{(N-2)/N}} \log \frac{\wmax}{\epsilon}
    +
\frac{6}{ \alpha^{(N-2)}} \leq 8\left(\frac{1}{\alpha^{(N-2)}}-\frac{1}{\alpha^{(N-2)/2}}\right)
\\
\Longrightarrow& 
\frac{75}{16\eta N^2 \zeta^{(2N-2)/N}} \log \frac{|\wmax - \alpha^N|}{\epsilon}
    +
\frac{15}{8N(N-2)\eta \zeta \alpha^{(N-2)}} \leq \frac{5}{N(N-1)\eta\zeta}\left(\frac{1}{\alpha^{(N-2)}}-\frac{1}{\alpha^{(N-2)/2}}\right) \\
\Longrightarrow& T_1\leq T_2.
\end{aligned}
\label{eq:T1-smaller-than-T2}
\end{equation}
\qed 

\subsection{Proof of Proposition 2}
By Lemma \ref{lemma: bounded error}, with the choice of $B=\frac{1}{200}\wmin$, we can maintain $\norm{\vec{e}_t}_\infty \leq \alpha^{N/4}$ for at least another 
\[
t\leq \frac{25}{N(N-1)\eta \wmin}\left(\frac{1}{\alpha^{(N-2)/2}} -
    \frac{1}{\alpha^{(N-2)/4}}\right) = T_4.
\]

Now we consider to further reduce $\norm{\vec{s}_t-\wstar}_\infty$ from $\frac15 \wmin$ to $\norm{\frac1n \Xt\xxi}_\infty \vee \epsilon$. Let $B_i \coloneqq (\vec{b}_t)_i$ and $B \coloneqq \max_{j\in S} B_j$. 

We first apply Lemma \ref{lemma: second stage convergence} for $\log_2 \frac{\wmin}{100(B\vee\epsilon)}$ times, the total number of iterations for this step would be 
\[
\frac{2}{\eta N^2 (\wmin)^{(2N-2)/N}}\log_2 \frac{\wmin}{100(B\vee\epsilon)}.
\]
After that we have $\norm{\vec{s}_t-\wstar}_\infty < 20(B\vee\epsilon)$ and so $\norm{\vec{p}_t}_\infty < k\mu\cdot 20 (B\vee\epsilon)$. Hence, for any $i\in S$ we have 
\[
\norm{\vec{b}_t\odot \id{i}}_\infty + \norm{\vec{p}_t}_\infty \leq B_i + k\mu 20(B\vee \epsilon).
\]
Then we further apply Lemma \ref{lemma: overall iterates with B} for each coordinate $i\in S$ to obtain that
\[
|w_{i,t}-w^\star_i|\lesssim \left|\frac1n(\Xt\xxi)_i\right|
\vee
k\mu\norm{\frac1n \Xt\xxi\odot\id{S}}_\infty\vee\epsilon.
\]
the number of iterations needed for this step is $\frac{15}{4\eta N^2 (\wmin)^{(2N-2)/N}}\log \frac{\wmin}{\epsilon}$.

Therefore the total number of iterations needed to further reduce $\norm{\vec{s}_t-\wstar}_\infty$ is 
\begin{align*}
T_3 &= \frac{6}{\eta N^2 (\wmin)^{(2N-2)/N}}\log \frac{\wmin}{\epsilon}\\
&\geq\frac{2}{\eta N^2 (\wmin)^{(2N-2)/N}}\log_2 \frac{\wmin}{100(B\vee\epsilon)}
+ \frac{15}{4\eta N^2 (\wmin)^{(2N-2)/N}}\log \frac{\wmin}{\epsilon}.
\end{align*}
Since $T_3$ is no longer related to $\alpha$, we can easily ensure $T_3\leq T_4$ with some mild upper bound on $\alpha^{(N-2)/4} \leq \frac{(\wmin)^{(N-2)/N}}{\log \frac{\wmin}{\epsilon}} \wedge 1/2$.

\begin{equation}
\begin{aligned}
&    \alpha^{(N-2)/4} \leq \frac{(\wmin)^{(N-2)/N}}{\log \frac{\wmin}{\epsilon}} \wedge 1/2\\
\Longrightarrow&    \alpha^{(N-2)/4}\left(\alpha^{(N-2)/4}\log\frac{\wmin}{\epsilon} + (\wmin)^{(N-2)/N}\right) \leq (\wmin)^{(N-2)/N}\\
\Longrightarrow& \alpha^{(N-2)/2}\log\frac{\wmin}{\epsilon}\leq (\wmin)^{(N-2)/N}-\alpha^{(N-2)/4}(\wmin)^{(N-2)/N}\\
\Longrightarrow& \frac{1}{(\wmin)^{(N-2)/N}}\log\frac{\wmin}{\epsilon} \leq \frac{1}{\alpha^{(N-2)/2}} - \frac{1}{\alpha^{(N-2)/4}}\\
\Longrightarrow& \frac{6}{\eta N^2 (\wmin)^{(2N-2)/N}}\log\frac{\wmin}{\epsilon} \leq \frac{25}{\eta N (N-1)\wmin}\left(\frac{1}{\alpha^{(N-2)/2}} - \frac{1}{\alpha^{(N-2)/4}}\right)\\
\Longrightarrow& T_3\leq T_4.
\end{aligned}
\label{eq:T3-smaller-than-T4}
\end{equation}
\qed 

\subsection{Proof of Technical Lemmas}

\emph{Proof of Lemma \ref{lemma:max-eroor}.} 
Since $\frac{1}{\sqrt{n}}\X$ is with $\ell_2$-normalized columns and satisfies $\mu$-coherence, where $0\leq\mu\leq1$,
\[
\left|\left(\frac{1}{n}\XtX\right)_{i,j}\right|
=
\left|\left(\frac{1}{\sqrt{n}}\X_i \right)^\top \left(\frac{1}{\sqrt{n}}\X_j\right)\right| \leq \max\{1,\mu\} \leq 1.
\]
Therefore, for any $\vec{z}\in\mathbb{R}^p$, 
\[
\norm{\frac1n \XtX \vec{z}}_\infty
\leq p\norm{\vec{z}}_\infty.
\]
\qed

\emph{Proof of Lemma \ref{lemma:incoherence-assumption-pt}.}
It is straightforward to verify that for any $i\in\{1,\ldots,p\}$, 
\[
\left|\left(\frac1n \XtX \vec{z}\right)_i - \vec{z}_i \right| 
\leq 
k \mu \norm{\vec{z}}_\infty.
\]
Therefore,
  \[
    \norm{\left( \frac{1}{n} \XtX - \matrix{I} \right)\vec{z}}_{\infty}
    \leq
    k \mu \norm{\vec{z}}_{\infty}.
  \]
\qed

\emph{Proof of Lemma \ref{lemma:bounding-max-noise}.}
Since the vector $\vec{\xxi}$ are made of independent $\sigma^2$-subGaussian random variables and any column $\X_i$ of $\X$ is $\ell_2$-normalized, i.e. $\norm{\frac{1}{\sqrt{n}} \X_i} = 1$, the random variable $\frac{1}{\sqrt{n}} (\Xt \vec{\xxi})_i$ is still $\sigma^2$-subGaussian.

It is a standard result that for any $\epsilon > 0$,
\[
\mathbb{P}\left(\norm{\frac{1}{\sqrt{n}} \Xt \vec{\xxi}}_\infty > \epsilon\right)
\leq 
2p \exp\left(-\frac{\epsilon^2}{2\sigma^2}\right).
\]
Setting $\epsilon=2\sqrt{2\sigma^2\log(2p)}$, with probability at least $1-\frac{1}{8p^3}$ we have
\[
  \norm{\frac1n \Xt \vec{\xxi}}_\infty 
  \leq \frac{1}{\sqrt{n}}2\sqrt{\sigma^2\log(2p)}
  \lesssim \sqrt{\frac{\sigma^{2} \log p}{n}}.
\]
\qed

\section{Proof of Theorems in Section \ref{sec:main-res}}
In this section, we provide the proof for all results we mentioned in Section~\ref{sec:main-res}.
\label{sec:proof-of-main-res}

\subsection{Proof of Theorem \ref{thm:general}}
\begin{proof}

Now let us consider the updates in equation \eqref{eq:updates-equation-using-b-p-notation}. The major idea is to show that the results in Theorem \ref{thm:non-negative} can be easily generalized with the lemmas we developed in Section \ref{sec: neg}.

Let us denote 
\[
\Psi(\wmin,N) \coloneqq
(2-2^\frac{N-2}{N})^\frac{1}{N-2}(\wmin)^{\frac1N}
\wedge 
2^{\frac3N}(2^\frac{1}{N}-1)^\frac{1}{N-2} (\wmin)^{\frac1N}.
\]

We set 
\[
\alpha \leq \left(\frac{\epsilon}{p+1}\right)^{4/N}
\wedge \Phi(\wmax,\wmin,\epsilon,N) 
\wedge \Psi(\wmin,N).
\]

Under the same requirements on other parameters with Theorem \ref{thm:non-negative}, we satisfy the conditions of Lemma \ref{lemma:vector-error-growth}, Lemma \ref{lemma: bounded error} and Lemma \ref{lemma: negative}. From these lemmas, we could maintain that
\begin{align*}
    w_j^\star >0 &\Longrightarrow 0\leq w_t^- \leq \alpha^{N/4},\\
    w_j^\star <0 &\Longrightarrow 0\leq w_t^+ \leq \alpha^{N/4},
\end{align*}
up to $T_2+T_4$ as defined in Proposition \ref{prop: first stage} and \ref{prop: second stage}.

Consequently, for $w_j^\star >0$ we can ignore $(w_{j,t}^-)_{t\geq 0}$ by treating as a part of bounded error $b_t$. The same holds for sequence $(w_{j,t}^+)_{t\geq0}$ when $w_j^\star <0$. Then, for $w_j^\star > 0$ the sequence $(w_{j,t}^+)$ evolves as follows
\[
w^+_{j,t+1} = w_{j,t}^+(1-2N\eta(w_{j,t}^+-w_j^\star+(b_{j,t}-w_{j,t}^-)+p_{j,t})(w_{j,t}^+)^{(N-2)/2})^N.
\]

The $b_{j,t}-w^-_{j,t}$ explains why we need $\norm{\vec{b}_t}_\infty + \alpha^{N/4}\leq C_b \zeta$ in Proposition \ref{prop: first stage}. For $w_j^\star>0$, we follow the exact proof structure with Theorem \ref{thm:non-negative} with treating $(w_{j,t}^-)_{t\geq 0}$ as a part of bounded error. For $w_j^\star<0$ it follows the same argument by switching $w_t^+$ and $w_t^-$.

Therefore, we could closely follow the proof of Theorem \ref{thm:non-negative} to generalize the result from non-negative signals to general signals. The result remains unchanged as well as the number of iterations requirement in equation \eqref{eq:T1+T3} and  \eqref{eq:T2+T4}. With the choice of $C_b=\frac{1}{100}$ in the proof of Theorem \ref{thm:non-negative}, recall that
\[
\zeta = \frac15 \wmin \vee 
200\norm{\frac1n \Xt \xxi}_\infty \vee
200\epsilon,
\]
and define the indicator function with $A$ as the event $\{\frac15 \wmin 
> 
200\norm{\frac1n \Xt \vec{\xxi}}_\infty 
\vee
200\epsilon\}$,
\[
\mathbbm{1}{(A)}
=
\begin{cases}
1,\quad\text{when}\quad  
\frac15 \wmin 
> 
200\norm{\frac1n \Xt \vec{\xxi}}_\infty 
\vee
200\epsilon,\\
0,\quad\text{when}\quad  
\frac15 \wmin 
\leq
200\norm{\frac1n \Xt \vec{\xxi}}_\infty 
\vee
200\epsilon.
\end{cases}
\]
We now define that 
\begin{equation}
\begin{aligned}
T_l(\wstar,\alpha,N,\eta,\zeta,\epsilon)
&\coloneqq\frac{75}{16\eta N^2 \zeta^{(2N-2)/N}} \log \frac{|\wmax - \alpha^N|}{\epsilon}
+
\frac{15}{8N(N-2)\eta \zeta \alpha^{(N-2)}}\\
&+
\frac{6}{\eta N^2 (\wmin)^{(2N-2)/N}}\log \frac{\wmin}{\epsilon} 
\mathbbm{1}{(A)},\\
T_u(\wstar,\alpha,N,\eta,\zeta,\epsilon)
&\coloneqq\frac{5}{N(N-1)\eta\zeta}\left(\frac{1}{\alpha^{(N-2)}}-\frac{1}{\alpha^{(N-2)/2}}\right)\\
&+
\frac{25}{N(N-1)\eta \wmin}
\left(\frac{1}{\alpha^{(N-2)/2}} -
\frac{1}{\alpha^{(N-2)/4}}\right)
\mathbbm{1}{(A)}.
\end{aligned}
\label{eq:Tl-and-Tu}
\end{equation}
The error bound \eqref{eq:non-negative-error-bound} holds for any $t$ such that 
\[
T_l(\wstar,\alpha,N,\eta,\zeta,\epsilon) \leq t \leq T_u(\wstar,\alpha,N,\eta,\zeta,\epsilon).
\] 
The equation \eqref{eq:T1-smaller-than-T2} and \eqref{eq:T3-smaller-than-T4} ensure that it is not a null set.

Thus, we finish generalizing Theorem \ref{thm:non-negative} to general signals with an extra requirement $\Psi(\wmin,N)$ on the initialization $\alpha$.

For the case $k=0$, i.e., $\wstar = \vec{0}$, we set $\wmin=0$ and
\[
\alpha \leq \left( \frac{\epsilon}{p+1}\right)^{4/N}.
\]

Conditioning on $\norm{\vec{e}_t}_\infty \leq \alpha^{N/4}$, we still have that
\[
\norm{\vec{b}_t}_\infty + \alpha^{N/4} 
\leq 
p\alpha^{N/4} + \norm{\frac1n \Xt \xxi}_\infty + \alpha^{N/4} 
\leq 
2 \left(\norm{\frac1n \Xt \vec{\xxi}}_\infty \vee \epsilon \right) 
\leq C_b \zeta \leq \frac{1}{40}\zeta.
\]

Therefore, by Lemma \ref{lemma: bounded error}, for $\eta \leq \frac{1}{N(N-1) \zeta \alpha^{(N-2)/2}}$, we ensure $\norm{\vec{e}_t}_\infty \leq \alpha^{N/4}$ up to $
\frac{5}{N(N-1)\eta \zeta}
\left(\frac{1}{\alpha^{N-2}} - \frac{1}{\alpha^{(N-2)/2}}\right),
$
which agrees to the definition of $T_u(\wstar,\alpha,N,\eta,\zeta,\epsilon)$ in this case.
\end{proof}

\subsection{Proof of Corollary \ref{cor:l2-error}}
Since $\vec{\xxi}$ is made of independent $\sigma^2$-sub-Gaussian entries, by Lemma \ref{lemma:bounding-max-noise} with probability $1-1/(8p^3)$ we have
\[
\norm{\frac1n \Xt \vec{\xxi}}_\infty \leq 2 \sqrt{\frac{2\sigma^2 \log(2p)}{n}}.
\]
Hence, letting $\epsilon = 2 \sqrt{\frac{2\sigma^2 \log(2p)}{n}}$, we obtain that
\[
\norm{\vec{w}_t - \wstar}_2^2 \lesssim 
\sum_{i\in S} \epsilon^2 + \sum_{i\notin S}\alpha^{N/2}
\leq k\epsilon^2 + (p-k) \frac{\epsilon^2}{(p+1)^2} \lesssim \frac{k\sigma^2 \log p}{n}.
\]
\qed

\subsection{Proof of Theorem \ref{thm:informal-early-stopping}}
\label{sec:early-stopping}
We now state Theorem \ref{thm:informal-early-stopping} formally as below.
\begin{theorem}
Let $T_1$, $T_2$, $T_3$ and $T_4$ be the number of iterations defined in Proposition \ref{prop: first stage} and Proposition \ref{prop: second stage}.
Suppose $\zeta\geq1$, $\wmax\geq 1$ and the initialization $\alpha \leq \exp(-5/3)$, fixing $\alpha$ and $\eta$ for all $N$,
both $T_2-T_1$ and $T_4-T_3$ have a tight lower bound that is increasing as $N$ increases $(N>2)$.
\label{thm: early stopping}
\end{theorem}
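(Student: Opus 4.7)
My approach is to identify, for each of $T_2-T_1$ and $T_4-T_3$, the single $N$-dependence that grows fastest as $N$ increases (an $\alpha^{-(\text{linear in }N)}$ factor), show that piece is strictly increasing under $\alpha\le e^{-5/3}$, and then argue that the remaining ``sub-dominant'' terms are too small to destroy monotonicity. The hypotheses $\zeta\ge 1$ and $\wmax\ge 1$ serve to bound the sub-dominant pieces uniformly in $N$.

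For $T_2-T_1$, I would first combine the two contributions of magnitude $\alpha^{-(N-2)}$, one appearing in $T_2$ and one in the second piece of $T_1$:
\[
T_2-T_1 \;=\; L_1(N) \;-\; \frac{5}{N(N-1)\,\eta\zeta\,\alpha^{(N-2)/2}} \;-\; \frac{75}{16\,\eta N^2\,\zeta^{(2N-2)/N}}\log\frac{|\wmax-\alpha^N|}{\epsilon},
\]
where $L_1(N):=5(5N-13)/[8\,N(N-1)(N-2)\,\eta\zeta\,\alpha^{N-2}]$ has a positive coefficient for all $N\ge 3$. A direct ratio computation yields $L_1(N+1)/L_1(N)=(5N-8)(N-2)/[\alpha\,(5N-13)(N+1)]$, whose rational factor is minimized over $N\ge 3$ at $N=4$ with value $24/35$. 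Since $1/\alpha\ge e^{5/3}\approx 5.29\gg 35/24$, the ratio strictly exceeds $1$, so $L_1$ is monotone increasing in $N$. The two subtracted terms are $O(N^{-2}\alpha^{-(N-2)/2})$ or smaller --- using $\zeta\ge 1$ to drop the factor $\zeta^{-(2N-2)/N}\le 1$, and $\wmax\ge 1$ so the log is bounded by $\log(\wmax/\epsilon)$, a constant in $N$ --- and are super-polynomially dominated by $L_1(N)$ since $\alpha<1$. A direct check at $N=3$ using $\alpha\le e^{-5/3}$ upgrades this to $T_2-T_1\ge\tfrac12 L_1(N)$ uniformly for all $N\ge 3$, providing a tight lower bound that is increasing in $N$.

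For $T_4-T_3$ the dominant piece of $T_4$ is $L_2(N):=25/[N(N-1)\,\eta\wmin\,\alpha^{(N-2)/2}]$. Its ratio $L_2(N+1)/L_2(N)=(N-1)/[(N+1)\sqrt{\alpha}]$ is smallest at $N=3$, where $(N-1)/(N+1)=1/2$; the hypothesis $\alpha\le e^{-5/3}<1/4$ gives $\sqrt\alpha<1/2$, so the ratio exceeds $1$ for every $N\ge 3$. The subtracted $-25/[N(N-1)\eta\wmin\alpha^{(N-2)/4}]$ (a slower exponential in $N$) and $T_3=6\log(\wmin/\epsilon)/[\eta N^2\,\wmin^{(2N-2)/N}]$ (polynomial in $1/N$, with the $\wmin$-factor trapped between $\min(1,\wmin^2)$ and $\max(1,\wmin^2)$) are again super-polynomially dominated by $L_2(N)$, so the same bootstrapping gives $T_4-T_3\ge\tfrac12 L_2(N)$ uniformly, an increasing tight lower bound.

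The main obstacle is the careful bookkeeping at small $N$: asymptotic domination of the sub-dominant pieces is easy, but the theorem is stated for all $N>2$, so I must verify at $N=3$ (and possibly $N=4$) that the subtracted corrections genuinely fit under $\tfrac12 L_i(N)$, not merely asymptotically. This is precisely where the specific constant $5/3$ is consumed: $e^{-5/3}\approx 0.189$ is small enough that, even at the smallest allowed depth, $\alpha^{-(N-2)}$ already dwarfs $\alpha^{-(N-2)/2}$ and the constant log-terms. Once $N\in\{3,4\}$ are handled by explicit numerical comparison, monotonicity of $L_i(N+1)/L_i(N)$ extends the conclusion to all larger $N$.
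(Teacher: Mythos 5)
Your ratio computations for $L_1$ and $L_2$ are correct, but the step in which you absorb the subtracted terms --- ``a direct check at $N=3$ upgrades this to $T_2-T_1\ge\tfrac12 L_1(N)$ uniformly'' --- fails, and not in a way that a smaller constant can repair. At $N=3$ with $\alpha=e^{-5/3}\approx0.189$ one has $L_1(3)=\tfrac{5}{24\eta\zeta\alpha}$ while the subtracted term from $T_2$ is $\tfrac{5}{6\eta\zeta\sqrt\alpha}$, and
\[
L_1(3)-\frac{5}{6\eta\zeta\sqrt\alpha}
=\frac{5}{6\eta\zeta\sqrt\alpha}\left(\frac{1}{4\sqrt\alpha}-1\right)<0,
\]
since $\tfrac{1}{4\sqrt\alpha}=\tfrac14 e^{5/6}\approx0.58<1$; you would need $\alpha<1/16$ for this quantity even to be positive, so no inequality $T_2-T_1\ge c\,L_1(N)$ with $c>0$ can hold at $N=3$ under the stated hypothesis. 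The same happens for $T_4-T_3$: the condition $T_4\ge\tfrac12 L_2(N)$ at $N=3$ amounts to $\alpha^{1/4}\le\tfrac12$, i.e.\ $\alpha\le1/16$, again violated by $e^{-5/3}$. Finally, the logarithmic pieces $\tfrac{75}{16\eta N^2\zeta^{(2N-2)/N}}\log\tfrac{|\wmax-\alpha^N|}{\epsilon}$ and $T_3$ scale like $\log(1/\epsilon)$, so for every fixed $N$ they exceed any fixed multiple of $L_1(N)$ or $L_2(N)$ once $\epsilon$ is small; calling them ``a constant in $N$'' does not make them absorbable into the dominant term uniformly over the problem parameters.

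The paper's proof sidesteps both issues by never requiring the monotone lower bound to be positive. It keeps the $-\alpha^{-(N-2)/2}$ contribution inside the candidate bound, showing $T_2-T_1\ge \tfrac{5}{4N(N-1)\eta\zeta}\bigl(\alpha^{-(N-2)}-4\alpha^{-(N-2)/2}\bigr)-\tfrac{75}{16\eta N^2\zeta^{(2N-2)/N}}\log\tfrac{|\wmax-\alpha^N|}{\epsilon}$, verifies by differentiation that the first summand is increasing in $N$ once $\log\alpha\le-5/6$ and $\alpha^{(N-2)/2}\le\tfrac12$ (it is allowed to be negative at $N=3$), and observes that under $\zeta\ge1$, $\wmax\ge1$ the subtracted logarithmic term is itself decreasing in $N$, so its negative is increasing; a sum of increasing functions is increasing, which is all the theorem asserts. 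To salvage your argument you would either have to adopt this additive treatment and drop the positivity of the bound, or strengthen the hypotheses to $\alpha<1/16$ (respectively $\alpha<1/256$ for $T_4-T_3$) together with a lower bound on $\epsilon$ --- neither of which matches the theorem as stated.
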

\begin{proof}
We observe first that under the assumption $\zeta\geq1$ and $\wmax\geq 1$, $\frac{75}{16\eta N^2 \zeta^{(2N-2)/N}} \log \frac{|\wmax - w_0|}{\epsilon}$ and $T_3 = \frac{6}{\eta N^2 (\wmax)^{(2N-2)/N}}\log \frac{\wmin}{\epsilon}$ are decreasing as $N$ increases.

For the rest part of $T_2-T_1$, we will be showing that a lower bound of that is increasing as $N$ increases. As $T_2-T_1$ is by design a lower bound of the ``true'' early stopping window, the lower bound we get here is tight for $T_2-T_1$ and is treated as equivalent to $T_2-T_1$ to indicate the monotonicity of the "true" early stopping window.

\begin{align*}
    \frac{5}{N(N-1)\eta\zeta}\left(\frac{1}{\alpha^{(N-2)}}-\frac{1}{\alpha^{(N-2)/2}}\right)
    &-
    \frac{15}{8N(N-2)\eta \zeta \alpha^{(N-2)}} \\
    &\geq 
    \frac{5}{4N(N-1)\eta\zeta}\left(\frac{1}{\alpha^{(N-2)}}-\frac{4}{\alpha^{(N-2)/2}}\right)
\end{align*}
Denote 
\[
f(N) = \frac{1}{N(N-1)}\left(\frac{1}{\alpha^{(N-2)}} -
    \frac{4}{\alpha^{(N-2)/2}}\right).
\]

Therefore,
\begin{align*}
    f'(N) &= \frac{-(2N-1)}{N^2(N-1)^2}\left(\frac{1}{\alpha^{(N-2)}} -
    \frac{4}{\alpha^{(N-2)/2}}\right) \\
    & + \frac{1}{N(N-1)}(-\log \alpha)\left(\frac{1}{\alpha^{(N-2)}} -
    \frac{4}{2\alpha^{(N-2)/2}}\right)\\
    &= \frac{-(2N-1)-(N-1)N\log \alpha }{2N^2 (N-1)^2}\left(\frac{1}{\alpha^{(N-2)}} -
    \frac{2}{\alpha^{(N-2)/2}}\right)\\
    &+\frac{2N-1}{N^2(N-1)^2}\frac{2}{\alpha^{(N-2)/4}}
\end{align*}
Note that the second term is always positive, we just need to show the first term is positive.
\begin{align*}
    -(2N-1)-(N-1)N\log \alpha &\geq 0,\\
    \frac{1}{\alpha^{(N-2)}} -
    \frac{2}{\alpha^{(N-2)/2}} &\geq 0,\\
\end{align*}
which is satisfied when
\begin{align*}
    \log \alpha &\leq \min_{N\geq3}\frac{(1-2N)}{N(N-1)} = \min_{N\geq3} \left(\frac{1}{1-N}-\frac{1}{N}\right) = -\frac56\\
    \alpha^{(N-2)/2}&\leq 1/2.
\end{align*}
We can further derive that when $\alpha\leq \exp(-5/6)\wedge 1/4$, we have a lower bound of $T_2-T_1$ is increasing as $N$ increases.

To show $T_4-T_3$ is increasing as $N$ increases, we just need to show $T_4$ is increasing. It follows a similar proof.

We can further derive that when $\alpha\leq \exp(-5/3)\wedge2$, we have $T_4-T_3$ is increasing as $N$ increases.

\end{proof}

\revise{
\subsection{Proof of Remark \ref{remark:init-early-stopping}}
\label{sec:init-early-stopping}
The proof is indeed similar to that of Theorem \ref{thm: early stopping}. Fixing any $N>2$ and step size $\eta$, we look at $T_2-T_1$ and $T4-T3$ and show that a tight lower bound of that is increasing as $\alpha$ decreases. We start with $T_2-T_1$.
}

\revise{
Recall that 
\begin{align*}
T_2 - T_1 
&= 
\frac{5}{N(N-1)\eta\zeta}\left(\frac{1}{\alpha^{(N-2)}}-\frac{1}{\alpha^{(N-2)/2}}\right)
-
\frac{15}{8N(N-2)\eta \zeta \alpha^{(N-2)}}) \\
&-
\frac{75}{16\eta N^2 \zeta^{(2N-2)/N}} \log \frac{|\wmax - \alpha^N|}{\epsilon}\\
&\geq
\frac{5}{N(N-1)\eta\zeta}\left(\frac{1}{\alpha^{(N-2)}}-\frac{1}{\alpha^{(N-2)/2}}\right)
-
\frac{75}{16\eta N^2 \zeta^{(2N-2)/N}} \log \frac{|\wmax|}{\epsilon}
\end{align*}
Notice that the second term is not about $\alpha$. We just need to show that 
$
f(\alpha) = \frac{1}{\alpha^{(N-2)}}-\frac{1}{\alpha^{(N-2)/2}}
$
is increasing as $\alpha$ decreases.
With the general requirement of $\alpha<1$, we have that
\begin{align*}
  f'(\alpha)
  &=  -\frac{(N-2)}{\alpha^{(N-1)}}+\frac{(N-2)/2}{\alpha^{N/2}}\\
  &= (N-2) \left(\frac{1}{2\alpha^{N/2}} - \frac{1}{\alpha^{(N-1)}}\right)\\
  &= (N-2) \frac{\alpha^{(N-2)/2}-2}{2\alpha^{(N-1)}}
  < 0. 
\end{align*}
For $T_4-T_3$, it follows a similar proof.
}

\revise{
\section{Experiments on MNIST}
\label{sec:more-experiments}
\begin{figure}[ht!]
    \centering
    \includegraphics[width=.8\linewidth]{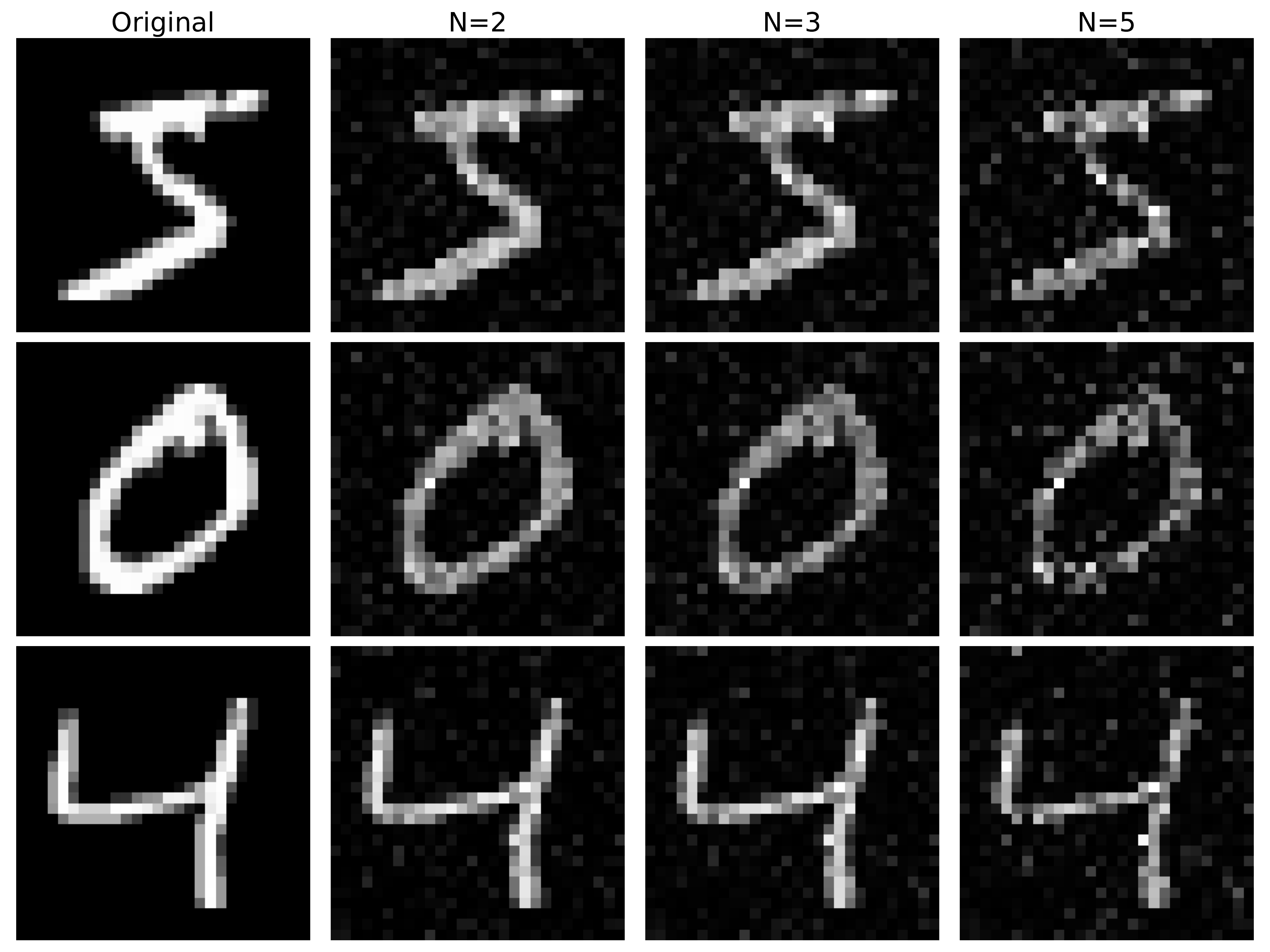}
    \caption{Experiments with different choice depth parameter $N$. The number of measurements is set as $n=392$, where the dimension of the original image is $p=784$. We use Rademacher sensing matrix. }
    \label{fig:mnist}
\end{figure}
The efficacy of different depth parameter $N$ is shown in Figure \ref{fig:convergence} and Figure \ref{fig:mnist} on both simulated data and real world datasets. The MNIST examples are successfully recovered from Rademacher linear measurements using different deep parametrizations.
}

\end{document}